\documentclass{article}

\usepackage{microtype}
\usepackage{graphicx}
\usepackage{subcaption}
\usepackage{booktabs} 

\usepackage{hyperref}

\usepackage[accepted]{icml2026}
\usepackage{amsmath}
\usepackage{amssymb}
\usepackage{mathtools}
\usepackage{amsthm}
\usepackage{threeparttable}
\usepackage{multirow}
\usepackage{adjustbox}

\usepackage[capitalize,noabbrev]{cleveref}

\theoremstyle{plain}
\newtheorem{theorem}{Theorem}
\newtheorem{proposition}{Proposition}
\newtheorem{lemma}{Lemma}

\theoremstyle{definition}
\newtheorem{definition}{Definition}

\newtheorem*{remark}{Remark}

\usepackage[textsize=tiny]{todonotes}

\icmltitlerunning{Regularized Offline Policy Optimization with Posterior Hybrid Bayesian Belief}

\begin{document}

\twocolumn[
  \icmltitle{Regularized Offline Policy Optimization with Posterior Hybrid Bayesian Belief}



  \icmlsetsymbol{equal}{*}

  \begin{icmlauthorlist}
    \icmlauthor{Hongqiang Lin}{cs_zju,qs_zju}
    \icmlauthor{Pengfei Wang}{soft_zju}
    \icmlauthor{Nenggan Zheng}{cs_zju,qs_zju,bengbu}
  \end{icmlauthorlist}

  \icmlaffiliation{cs_zju}{College of Computer Science and Technology, Zhejiang University, Hangzhou, China}
  \icmlaffiliation{qs_zju}{Qiushi Academy for Advanced Studies, Zhejiang University, Hangzhou, China}
  \icmlaffiliation{soft_zju}{School of Software Technology, Zhejiang University, Ningbo, China}
  \icmlaffiliation{bengbu}{School of Computer and Information Engineering, Bengbu University, Bengbu, China}

  \icmlcorrespondingauthor{Pengfei Wang}{pfei@zju.edu.cn}

  \icmlkeywords{Machine Learning, ICML}

  \vskip 0.3in
]



\printAffiliationsAndNotice{}  

\begin{abstract}
 Offline reinforcement learning (RL) aims to optimize policies from pre-collected datasets. A bottleneck of this paradigm is managing epistemic uncertainty, which arises from limited data coverage (sample-level) and the ambiguity in identifying transition dynamics from finite data (model-level). To provide a unified quantification of these uncertainties, Bayesian RL has been proposed by treating the dynamics model as a random variable and maintaining a corresponding belief. Despite its theoretical appeal, policy optimization in Bayesian RL remains computationally challenging as it requires solving composite objectives with expectations. Prior methods either employ search-based techniques with poor computational scalability or impose restrictive posterior assumptions that sacrifice the adaptability of Bayesian RL. To address these limitations, we propose Posterior Hybrid Bayesian Belief (PhyB), which reformulates the expectation as a convex combination over a subset of dynamics models. Theoretical analysis demonstrates that the objective discrepancy induced by this approximation remains bounded. Based on PhyB, we develop an iterative regularized policy optimization algorithm that provides metric-agnostic guarantees for monotonic improvement until convergence. Empirical results demonstrate that PhyB achieves state-of-the-art performance on various benchmarks. 
\end{abstract}

\section{Introduction}
Offline RL aims to optimize policies from pre-collected datasets \cite{levine2020offline}, avoiding the safety risks and costs associated with environmental interaction in online RL \cite{10.5555/3312046,gottesman2019guidelines,kumar2022pre}. The primary challenge in offline RL is distribution shift \cite{kumar2020conservative,yang2025rtdiff}. Within the framework of uncertainty quantification, extrapolation errors caused by distribution shift in out-of-distribution (OOD) regions is highly correlated with epistemic uncertainty \cite{lurevisiting}. Consequently, quantifying this uncertainty is critical for achieving robust performance in the absence of online environment interaction.

Epistemic uncertainty in offline RL can be categorized into two levels. The first is sample-level uncertainty, which arises from partial dataset coverage \cite{rigter2023one}. The second is model-level uncertainty, which arises from the inability to uniquely identify the underlying transition dynamics from finite datasets \cite{ghosh2022offline}.  Existing model-based algorithms utilize the standard deviation of ensemble predictions to quantify the sample-level epistemic uncertainty and penalize OOD value estimates \cite{yu2020mopo,sun2023model,zhai2024optimistic,Qiao_Lyu_Jiao_Liu_Li_2025,lin2025anystep}. Despite their empirical success, these methods optimize the dynamics model and the policy under independent objectives, which prevents the integration of model-level uncertainty into policy optimization.

Bayesian RL \cite{ghavamzadeh2015bayesian} provides a framework to unify various forms of uncertainty within the offline RL. By treating dynamics models as random variables and maintaining a corresponding posterior belief distribution over the model space, Bayesian RL allow for simultaneous quantification of model-level uncertainty through Bayesian inference and sample-level uncertainty through prediction. 
Policy optimization in Bayesian RL is typically formulated as a composite optimization problem involving expectations over model posteriors, which are often intractable in continuous spaces \cite{jiang2025revisiting}. Although prior methods utilize solvers such as mixed-integer linear programming \cite{lobo2020soft} or Monte Carlo tree search \cite{rigter2021risk}, these methods fail to scale to high-dimensional tasks. Consequently, current methods resort to fixed posterior assumptions or simplified formulations (e.g., robust MDP) to maintain tractability \cite{rigter2022rambo,dong2024online}. However, these oversimplifications inevitably compromise the inherent performance and adaptability of Bayesian RL.

Therefore, a fundamental question remains to be addressed: \textit{Can we develop a computationally tractable offline policy optimization algorithm under Bayesian beliefs without relying on restrictive assumptions on the posterior distribution?}

We give an affirmative answer to this question by proposing Posterior Hybrid Bayesian Belief (PhyB) and an iterative regularized policy optimization algorithm. Our main contributions are summarized as follows:
\paragraph{Belief Formulation. } PhyB formulates transition dynamics as random variables rather than point estimates. Unlike prior works, PhyB avoids restrictive assumptions on the posterior distribution and maintains computational efficiency by reformulating the expectation as a convex combination over a subset of dynamics model. 
\paragraph{Iterative Regularized Policy Optimization.}
To account for the posterior geometry while maintaining computational tractability, we decompose the optimization task into a sequence of Bregman-regularized subproblems and propose an iterative algorithm to solve these subproblems. 
\paragraph{Theoretical Analysis.}
Our theoretical analysis illustrates two primary properties of PhyB. First, the objective discrepancy arising from the convex combination reformulation is bounded. Second, PhyB induces pessimism that is both controllable and monotonic with respect to key hyperparameters. Policy optimization under PhyB provides metric-agnostic guarantees for monotonic improvement until convergence.

Empirically, our method achieves superior performance on D4RL \cite{fu2020d4rl} and stochastic benchmarks with fixed hyperparameters. The code is available at \url{https://github.com/HQ-Lin/PhyB}.

\section{Preliminaries and Notations}
\subsection{Markov Decision Process (MDP) and Bayes-Adaptive MDP (BAMDP)}
A standard MDP is represented by the tuple $\mathcal{M}=(\mathcal S, \mathcal A, \tau, r, \rho_0, \gamma)$, comprising the state space $\mathcal {S}$, the action space $\mathcal{A}$, the transition dynamics model $\tau:\mathcal S \times \mathcal A \to \Delta{(\mathcal S)}$, where $\Delta(\cdot)$ denotes the probability simplex, the reward function $r:\mathcal S \times \mathcal A \to [-R_{\max},R_{\max}]$, initial state distribution $\rho_0$, and discount factor $\gamma$. The objective is to learn a policy $\pi:\mathcal{S}\to\Delta{(\mathcal{A})}$ that maximizes the expected return:
\begin{equation}
	\label{pre:eq1}
	\eta(\pi,\tau)=\mathop{\mathbb E}\limits_{\rho_0,\pi,\tau}[\sum_{t=0}^{\infty}\gamma^tr(s_t,a_t)].
\end{equation}
In the Bayesian RL framework, epistemic uncertainty about the environment is modeled by treating transition dynamics as random variables. This formulation induces a Bayes-Adaptive MDP (BAMDP), characterized by the tuple: 
$$\widetilde{\mathcal{M}}=(\mathcal S, \mathcal A, \mathbb{P}, r, \rho_0, \gamma),$$
where $\mathbb{P}$ represents the probabilistic belief over the space of candidate dynamics model. Historically, capturing this belief involved maintaining distributions over the weights of a function approximator (e.g., Bayesian neural networks). However, the high dimensionality of deep neural networks renders exact inference computationally intractable. To reconcile Bayesian theory with deep RL scalability, existing methods approximate the posterior directly over the transition dynamics $\tau$ rather than the network weights \cite{chua2018deep,ghosh2022offline,rigter2022rambo,rigter2023one}.


\subsection{The Application of Bayesian RL in Offline Setting}
Bayesian RL captures epistemic uncertainty by modeling dynamics as a random variable governed by a belief distribution.
A common application of this framework in offline settings is quantile optimization:
\begin{equation}
	\label{pre:eq2}
	\max_\pi\mathop{[\min]^k}\limits_{\tau\in\mathcal{T}}\eta(\pi,\tau),
\end{equation}
where $\mathcal{T}=\{\tau_0,\tau_1,\cdots,\tau_{N-1}\}$ denotes the model ensemble with size $N$, operator $\mathop{[\min]^k}\limits_{\tau\in\mathcal{T}}f(\tau)$ represents the $k$-th minimum of $f(\tau)$ over $\tau\in\mathcal{T}$. The optimization in Eq.~\eqref{pre:eq2} can be reformulated as maximizing the $\frac{k}{N}$-quantile of the objectives $\{\eta(\pi, \tau)\}_{\tau \in \mathcal{T}}$. Notably, when $k=1$, Eq.~\eqref{pre:eq2} reduces to the optimization objective of robust MDP, which is known to exhibit  over-conservatism and may result in suboptimal performance \cite{dong2024online}. 

From the perspective of Bayesian RL, standard quantile optimization implicitly assumes a Dirac likelihood over the model ensemble $\mathcal{T}$, concentrating all probability mass on a single model $\tau_*$ at the $\frac{k}{N}$-quantile while disregarding all others. This point-mass assumption can be relaxed by replacing the Dirac likelihood with a unimodal distribution centered around a specific quantile \cite{guo2022model}.

Point-mass and unimodal likelihoods fail to fully utilize the information contained within the model ensemble. The available levels of pessimism are limited to a discrete set of quantile anchors (e.g., $\{\frac{1}{N},\dots,\frac{N-1}{N}, 1\}$). This restriction prevents the continuous modulation of pessimism and fails to capture optimal dynamics that lie between these discrete points, ultimately hindering the accuracy of uncertainty quantification and the capacity for generalization \cite{ijcai2024p427}. To address these limitations, we propose a method that aggregates multiple unimodal distributions into a single hybrid multimodal posterior, which allows the continuous and data-driven adaptation of the pessimism level within a closed interval.
\section{Posterior Hybrid Bayesian Belief (PhyB)}
Applying Bayesian RL methods to the offline setting requires satisfying two conditions: (1) accurately quantifying epistemic uncertainty; (2) ensuring pessimism, such that the policy's expected returns under the MDP sampled from the posterior belief lower bound the true returns. This section presents the formulation of Posterior Hybrid Bayesian Belief (PhyB) to address these challenges and provides a comprehensive analysis of its theoretical properties.
\subsection{Formulation}
We begin with a prior belief $\mathbb{P}(\tau)$ representing our initial knowledge of the environment. Our goal is to derive a reliable posterior distribution $\widetilde{\mathbb{P}}(\tau)$ from which dynamics models are sampled for policy optimization. The objective function is defined as follows.
\begin{definition}[Objective Function]
	\label{definition2}
	We define cumulative discounted return as $\mathcal{R}_{s,a}=\sum_{t=0}^\infty\gamma^tr(s_t,a_t)$. The objective function is defined as:
	\begin{equation}
		\label{hybrid_belief:eq4}
		\eta(\pi)=\mathop {\mathbb E}_{\substack{\rho_0, \pi\\
				{\tilde{\tau}_0\sim\widetilde{\mathbb{P}}(\tau)}}} \left[\mathop {\mathbb E}\limits_{\substack{\tilde{\tau}_0,\pi \\ \tilde{\tau}_1\sim\widetilde{\mathbb{P}}(\tau)}}\left[\cdots \mathop {\mathbb E}_{\tilde{\tau}_{\infty}, \pi}
		\big[\mathcal{R}_{s,a}\big]\right]\right].
	\end{equation}
\end{definition}

To evaluate Eq.~\eqref{hybrid_belief:eq4}, we define theoretical Bellman evaluation operator $\hat{\mathcal{B}}^\pi$:
\begin{equation}
	\label{hybrid:theorm_op}
	\hat{\mathcal{B}}^\pi Q(s,a) = r(s,a) + \gamma\mathop{\mathbb{E}}\limits_{\substack{\tau\sim\widetilde{\mathbb{P}}(\tau)\\s'
			\sim\tau,a'\sim\pi}}[Q(s',a')].
\end{equation}
However, maintaining a posterior over the continuous model space through inference is computationally intractable. We address this by constructing a pessimistic subset and maintaining a belief over it to approximate the inference process.
\begin{definition}[Pessimistic Subset]
	Given a finite model ensemble $\mathcal{T}=\{\tau_0, \dots, \tau_{N-1}\}$ of size $N$, where each $\tau_i$ is sampled i.i.d. from the prior $\mathbb{P}(\tau)$, the pessimistic subset $\widetilde{\mathcal{T}}\subseteq\mathcal{T}$ is formed by selecting the $k$ models that correspond to the bottom-$k$ $q_{{\tau_i}}(s,a)=\mathbb{E}_{{\tau_i,\pi}}[Q(s',a')]$ values.
\end{definition}

We assign a weight to each model in the pessimistic subset $\widetilde{\mathcal{T}}$ and consider their convex combination. To enforce pessimism, models with lower Q-values should be assigned higher weights. To preserve ensemble diversity and avoid collapsing onto a single model, we present an entropy-regularized formulation as follows:
\begin{equation}
	\label{hybrid:opt_prob}
	\begin{aligned}
		&\min_{{\alpha}}\sum_{i=0}^{k-1} \alpha_i q_{\tau_i}(s, a) + \lambda \alpha_i\log\alpha_i .
	\end{aligned}
\end{equation}

The optimal solution for Eq.~\eqref{hybrid:opt_prob} is given by $\alpha_i\propto\exp(-\frac{1}{\lambda}q_{\tau_i}(s,a))$, with the full derivation provided in the Appendix~\ref{appendix:hybrid}.

We construct a posterior belief $\widetilde{\mathbb{P}}(\tau)$ over the model ensemble such that $\mathbb{E}_{\widetilde{\mathbb{P}}(\tau)}[q_\tau(s,a)]=\sum_{i=0}^{k-1} \mathbb{E}_{\mathbb{P}(\tau)}[\alpha_iq_{\tau_i}(s,a)]$. This is achieved by reweighting the prior with a likelihood ratio $\xi(q_\tau)$. Under this principle, computing the expectation under the posterior belief is equivalent to evaluating a weighted expectation under the prior distribution.
\begin{proposition}[Likelihood Ratio]
	\label{proposition:likelihood}
	Let $N$ and $k$ denote the sizes of the model ensemble $\mathcal{T}$ and pessimistic subset $\widetilde{\mathcal{T}}$, respectively. Let $\mathcal{F}$ denote the cumulative distribution function of $q_{{\tau}}(s,a)$. Then the likelihood ratio is given by:
	\begin{equation*}
		\label{hybrid_belief:eq1}
		\begin{aligned}
			\xi_N(q_\tau)=&\sum_{i=0}^{k-1}\alpha_i\frac{N!}{i!(N-i-1)!}
			\mathcal{U}(\mathcal{F}(q_\tau)),
		\end{aligned}
	\end{equation*}
	where $\mathcal{U}(\mathcal{F}(q_\tau))=[\mathcal{F}(q_\tau)]^i[1-\mathcal{F}(q_\tau)]^{N-i-1}$, and the weights $\{\alpha_i|\sum_{i=0}^{k-1}\alpha_i=1,\alpha_i\ge 0\}$ are the solution to the optimization problem in Eq.~\eqref{hybrid:opt_prob}.
\end{proposition}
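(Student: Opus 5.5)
The plan is to use order statistics. Because the $\tau_j\in\mathcal{T}$ are i.i.d.\ draws from $\mathbb{P}(\tau)$, the scalars $q_{\tau_j}(s,a)$ are i.i.d.\ with CDF $\mathcal{F}$; we take $\mathcal{F}$ continuous so that ties occur with probability zero. The pessimistic subset $\widetilde{\mathcal{T}}$ consists exactly of the models attaining the order statistics $q_{(0)}\le\cdots\le q_{(k-1)}$. Index the weights of Eq.~\eqref{hybrid:opt_prob} by rank, so that $\alpha_i$ is the weight on the model with the $(i+1)$-th smallest value. The defining requirement on the posterior then reads
\[
\mathbb{E}_{\widetilde{\mathbb{P}}(\tau)}[q_\tau]=\sum_{i=0}^{k-1}\alpha_i\,\mathbb{E}\big[q_{(i)}\big].
\]
The goal is to rewrite each $\mathbb{E}[q_{(i)}]$ as $\mathbb{E}_{\mathbb{P}(\tau)}[w_i(q_\tau)\,q_\tau]$ for an explicit weight $w_i$. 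The likelihood ratio is then $\xi_N=\sum_i\alpha_i w_i$, and $d\widetilde{\mathbb{P}}=\xi_N\,d\mathbb{P}$.

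The key step is the density of the $(i+1)$-th order statistic of $N$ i.i.d.\ samples. For $q$ with density $f=\mathcal{F}'$, it is
\[
f_{(i)}(q)=\frac{N!}{i!\,(N-i-1)!}\,\mathcal{F}(q)^i\,[1-\mathcal{F}(q)]^{N-i-1}f(q).
\]
I would derive this by the standard counting argument. Choose which of the $N$ samples lands in $[q,q+dq]$, which $i$ of the remaining $N-1$ fall below $q$, and note that the other $N-i-1$ fall above. This gives the multinomial coefficient $N\binom{N-1}{i}=\frac{N!}{i!(N-i-1)!}$ times the probability $\mathcal{F}^i(1-\mathcal{F})^{N-i-1}f\,dq$.

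Pulling the Radon--Nikodym factor $f_{(i)}/f$ out of the density gives $w_i(q)=\frac{N!}{i!(N-i-1)!}\,\mathcal{U}(\mathcal{F}(q))$, with $\mathcal{U}$ as in the statement. Linearity in $i$ then yields the displayed $\xi_N$. Two checks remain. First, $\xi_N$ integrates to $1$ under $\mathbb{P}$: each $w_i$ integrates to one, since substituting $u=\mathcal{F}(q)$ turns it into a normalized Beta$(i+1,N-i)$ density, and $\sum\alpha_i=1$. Second, $\xi_N\ge0$. So $\widetilde{\mathbb{P}}$ is a genuine probability measure.

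The main obstacle is that the $\alpha_i$ from Eq.~\eqref{hybrid:opt_prob} depend on the realized values $q_{\tau_i}$, so they are random rather than fixed constants. Exchanging $\sum_i\alpha_i$ with the expectation is then not literally linear. I would handle this the way the statement implicitly does: treat $\alpha$ as given (conditioning on it, or viewing it as a fixed rank-indexed weight vector obtained from the Gibbs solution $\alpha_i\propto\exp(-q_{\tau_i}/\lambda)$ at the current iterate). I would state this assumption explicitly, so that the identity holds conditional on $\alpha$ and the formula follows.
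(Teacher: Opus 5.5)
Your proposal is correct and takes essentially the same route as the paper: both write $\mathbb{E}[\sum_{i}\alpha_i q_{(i)}]$ through the order-statistic densities $f_{(i)}=\frac{N!}{i!(N-i-1)!}\mathcal{F}^i(1-\mathcal{F})^{N-i-1}f$, read off $\xi_N=\sum_i\alpha_i f_{(i)}/f$ with $\alpha$ implicitly held fixed, and your Beta normalization check and explicit flag of the $\alpha$-dependence go beyond what the paper writes. Of your two ways of freezing $\alpha$, keep only the ``exogenous fixed weight vector'' one, because conditioning on the realized Gibbs weights pins down the gaps $q_{(i)}-q_{(0)}=-\lambda\log(\alpha_i/\alpha_0)$, after which the conditional marginals of the $q_{(i)}$ are no longer the Beta-type densities and the identity fails.
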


We observe that for each $i\in\{0,\cdots,k-1\}$, the term $\mathcal{U}(\mathcal{F}(q_\tau))$ is a unimodal function. This function attains its maximum value at the $\frac{i}{N-1}$-quantile, and the corresponding maximizer $q_{\tau^*}$ satisfies $\mathcal{F}(q_{\tau^*}) = \frac{i}{N-1}$ . Our posterior $\widetilde{\mathbb{P}}(\tau)$ is thus a hybrid distribution constructed from this family of unimodal functions. We refer to this formulation as Posterior Hybrid Bayesian Belief (PhyB).
\begin{remark}[Asymptotic Behavior]
	Proposition~\ref{proposition1} characterizes the likelihood ratio $\xi_N(q_\tau)$ for a finite ensemble size $N$. Theoretically, the weighted sum in our objective constitutes an \textit{L-statistic} (linear combination of order statistics). As $N \to \infty$, noting that the discrete weights $\alpha_i$ converge to a continuous spectral function $J(u) \propto \exp\left(-\frac{\mathcal{F}^{-1}(u)}{\lambda}\right)$ (i.e., $\alpha_i \approx \frac{1}{N}J(i/N)$), the finite-sample estimator converges almost surely to the expectation under a limiting posterior. In this limit, the mixture of Beta distributions in $\xi_N(q_\tau)$ simplifies to the functional form $\xi_\infty(q_\tau) = J(\mathcal{F}(q_\tau))$, which exponentially reweights the prior towards lower Q-values. This implies that PhyB serves as a statistically consistent approximation to a target distribution shift defined by the score function $J$.
\end{remark}
\begin{remark}
	As established by Proposition~\ref{proposition:likelihood}, the likelihood depends directly on $\{q_{\tau_i}(s,a)\}_{i=0}^{k-1}$, which are implicitly contingent upon the offline dataset $\mathcal{D}$. This dependency ensures that the posterior update is intrinsically tied to the empirical data distribution. Consequently, PhyB adheres to standard Bayesian principles, wherein the data formalizes the shift from the prior to the posterior distribution.
\end{remark}
Based on the above theoretical results, we present an operator to approximate the evaluation of the $\eta(\pi)$.
\begin{definition}[Hybrid Belief Bellman Evaluation Operator]
	The Bellman evaluation operator is defined as:
	\begin{equation}
		(\mathcal{B}^\pi Q)(s,a)=r(s,a)+\gamma\sum_{\tau_i\in\widetilde{\mathcal{T}}}\alpha_i\mathop{\mathbb{E}}\limits_{\substack{s'\sim\tau_i\\a'\sim\pi(\cdot|s')}}[Q(s',a')].
		\label{hybrid_belief:eq5}
	\end{equation}
\end{definition}
\begin{theorem}
	\label{proposition2}
	The Hybrid Belief Bellman Evaluation Operator $\mathcal{B}^\pi$ (Eq.~\eqref{hybrid_belief:eq5}) is a $\gamma$-contraction. Repeatedly applying the operator $\mathcal{B}^\pi$ to any initial function $Q:\mathcal{S}\times{\mathcal{A}}\to\mathbb{R}$ generates a sequence that converges to $Q^\pi$. With probability at least 1-$\delta$, the objective $\eta(\pi)$ (Eq.~\eqref{hybrid_belief:eq4}) and $Q^\pi$ satisfy: 
	$
		\vert\mathbb{E}_{\rho_0,\pi}[Q^\pi] - \eta(\pi)\vert \le \frac{2\gamma R_{\max}}{(1-\gamma)^2} \sqrt{\frac{\sum_{i=0}^{k-1} \alpha_i^2}{2} \ln \left( \frac{2|S||A|}{\delta} \right)}.
	$
\end{theorem}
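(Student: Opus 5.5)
The proof has two parts: a standard contraction argument, and a concentration argument comparing the fixed point $Q^\pi$ with the belief-level objective $\eta(\pi)$.

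\emph{Contraction and convergence.} Take two bounded functions $Q_1,Q_2$. The reward cancels in the difference, and the weights satisfy $\sum_i\alpha_i=1$, $\alpha_i\ge 0$. Each inner expectation over $s'\sim\tau_i$, $a'\sim\pi$ is a non-expansion in the sup norm. Hence
$$\|\mathcal{B}^\pi Q_1-\mathcal{B}^\pi Q_2\|_\infty\le\gamma\sum_i\alpha_i\|Q_1-Q_2\|_\infty=\gamma\|Q_1-Q_2\|_\infty .$$
The weights $\alpha_i$ (and the subset $\widetilde{\mathcal{T}}$) depend on $(s,a)$ through the $q_{\tau_i}$. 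I will therefore take them as fixed by the current evaluation, i.e.\ frozen with respect to $Q$, for the contraction argument. Banach's fixed-point theorem on the complete space of bounded functions then gives a unique fixed point $Q^\pi$ and convergence of the iterates $(\mathcal{B}^\pi)^n Q$. Since $|r|\le R_{\max}$, the fixed point satisfies $\|Q^\pi\|_\infty\le R_{\max}/(1-\gamma)$.

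\emph{Discrepancy bound.} By construction of Proposition~\ref{proposition:likelihood}, $\eta(\pi)$ is the fixed point of the theoretical operator $\hat{\mathcal{B}}^\pi$, whose next-state expectation is taken under $\widetilde{\mathbb{P}}$. That expectation equals $\mathbb{E}\big[\sum_i\alpha_i q_{\tau_i}\big]$, an expectation over the i.i.d.\ draws of the ensemble. The operator $\mathcal{B}^\pi$ replaces this with a single realization $\sum_i\alpha_i q_{\tau_i}(s,a)$. Write $\hat Q$ for the fixed point of $\hat{\mathcal{B}}^\pi$, so that $\eta(\pi)=\mathbb{E}_{\rho_0,\pi}[\hat Q]$. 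The standard simulation-lemma decomposition then gives
$$\|Q^\pi-\hat Q\|_\infty\le \frac{\gamma}{1-\gamma}\sup_{s,a}\Big|\sum_i\alpha_i q_{\tau_i}(s,a)-\mathbb{E}\Big[\sum_i\alpha_i q_{\tau_i}(s,a)\Big]\Big|,$$
where $q$ is evaluated at $\hat Q$.

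Each $q_{\tau_i}$ is bounded, lying in an interval of length $2R_{\max}/(1-\gamma)$. By the i.i.d.\ sampling from the prior, Hoeffding's inequality applied to the weighted sum with weights $\alpha_i$ yields, for fixed $(s,a)$ and with probability at least $1-\delta'$,
$$\Big|\sum_i\alpha_i q_{\tau_i}-\mathbb{E}[\cdot]\Big|\le \frac{2R_{\max}}{1-\gamma}\sqrt{\frac{\sum_i\alpha_i^2}{2}\ln\frac{2}{\delta'}} .$$
A union bound over the $|S||A|$ state-action pairs with $\delta'=\delta/(|S||A|)$ produces the $\ln(2|S||A|/\delta)$ factor. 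Multiplying by $\gamma/(1-\gamma)$ and taking the expectation over $\rho_0,\pi$ (which cannot increase the sup-norm bound) gives the stated inequality.

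\emph{Main obstacle.} Hoeffding requires independent summands with fixed weights. Here, however, the $\alpha_i$ and the bottom-$k$ selection depend on the realized values $q_{\tau_i}$, so the summands of the selected subset are order statistics and not independent. My first approach is to condition on the ordering pattern, or equivalently to treat the $\alpha_i$ as a deterministic weight vector attached to ranks, as in the $L$-statistic view of the Remark, and to define the comparison target in Proposition~\ref{proposition:likelihood} accordingly. If that fails, I would fall back to McDiarmid's inequality for the $L$-statistic as a function of the $N$ i.i.d.\ models. This is valid because changing one model shifts the weighted order statistics by at most $\max_i\alpha_i$ times the range, and it yields a bound of the same form up to constants.
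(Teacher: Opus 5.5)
Your proposal has a genuine gap in each half. The contraction step assumes away the dependence of the weights on $Q$, and neither of your remedies for the concentration step recovers the stated constant.

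\textbf{Contraction.} Your quantitative half follows the paper's proof almost step for step: compare $\mathcal{B}^\pi$ with $\hat{\mathcal{B}}^\pi$ at the fixed point of the latter, use range $C=2R_{\max}/(1-\gamma)$, apply Hoeffding with weights $\alpha_i$, and take a union bound over $|S||A|$. The contraction step is where you diverge. In $\mathcal{B}^\pi$ the weights $\alpha_i\propto\exp(-q_{\tau_i}/\lambda)$ are computed from $q_{\tau_i}(s,a)=\mathbb{E}_{\tau_i,\pi}[Q(s',a')]$ for the very $Q$ the operator is applied to. So $Q_1$ and $Q_2$ carry different weights, and the bound $\gamma\sum_i\alpha_i\|Q_1-Q_2\|_\infty$ is not available. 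Freezing the weights is not harmless. With $f(\mathbf{q})=\sum_i\alpha_i(\mathbf{q})q_i$ one gets $\partial f/\partial q_j=\alpha_j\bigl(1-(q_j-f)/\lambda\bigr)$, which is negative once $q_j>f+\lambda$. Since these partials sum to $1$, this gives $\|\nabla f\|_1>1$, and the operator as defined is in general \emph{not} a $\gamma$-contraction. The bottom-$k$ selection alone would be fine, because a fixed convex combination of order statistics is $1$-Lipschitz in $\ell_\infty$; the value-dependent softmin weights are the problem.

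The paper handles this by bounding $\|\nabla f\|_1\le 1+\Delta_{\max}/(2\lambda)$ and imposing $\lambda>\gamma\Delta_{\max}/(2(1-\gamma))$. That gives a contraction with modulus $L=\gamma(1+\Delta_{\max}/(2\lambda))<1$. You need this argument, or an explicit redefinition of $\mathcal{B}^\pi$ with $Q$-independent weights, which changes the object whose fixed point is $Q^\pi$. With modulus $L$, the factor in your simulation-lemma step becomes $\gamma/(1-L)$ rather than $\gamma/(1-\gamma)$. The paper keeps $1/(1-\gamma)$ there, which is coherent only under your frozen reading.

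\textbf{Concentration.} You correctly identify the step the paper glosses over: it applies Hoeffding to $\mu_k=\sum_{i<k}\alpha_i q_{(i)}$ as if the summands were independent with fixed weights. Your remedies do not fix it.
\begin{itemize}
\item Conditioning on which models occupy which ranks does nothing. For i.i.d.\ draws the sorted values are independent of that permutation and remain dependent among themselves.
\item For such $L$-statistics, a sub-Gaussian bound with proxy $C^2\sum_i\alpha_i^2/4$ is actually false. Take $q_{\tau_j}\in\{0,C\}$ i.i.d.\ with $P(q=0)=k/N$, $N\gg k$, and uniform rank weights $1/k$. Then $\mu_k=C\max(0,k-Z)/k$ with $Z\sim\mathrm{Bin}(N,k/N)$. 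Here $P(\mu_k-\mathbb{E}\mu_k\ge xC)$ is roughly $e^{-k h(x)}$ with $h(x)=(1-x)\ln(1-x)+x\le x^2$, which is exponentially heavier in $k$ than the claimed $2e^{-2kx^2}$.
\item McDiarmid is valid, but replacing one of the $N$ models moves a fixed-weight $L$-statistic by up to $C\max_i\alpha_i$. It therefore yields $N\max_i\alpha_i^2$ in place of $\sum_i\alpha_i^2$. That is larger by a factor of at least $N/k$, not "the same form up to constants". With softmin weights, the bounded-difference constant must also account for how $\alpha$ moves with the values.
\end{itemize}
At best, this route delivers a bound with $N\max_i\alpha_i^2$ and $\gamma/(1-L)$ in place of the stated constants. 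The inequality exactly as stated rests on an independence step that does not hold.
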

\begin{remark}
	Although the Hybrid Belief Bellman Evaluation Operator (Eq.~\eqref{hybrid_belief:eq5}) is a practical approximation of the theoretical Bellman evaluation operator (Eq.~\eqref{hybrid:theorm_op}), Theorem~\ref{proposition2} guarantees that the discrepancy between their fixed points is bounded. The operator $\mathcal{B}^\pi$ can be interpreted as evaluating a weighted average over a pessimistic subset $\widetilde{\mathcal{T}}\subseteq\mathcal{T}$ , allowing optimization of an arbitrary $\kappa$-quantile at each
	time step, where $\kappa\in[0, \frac{k-1}{N-1}]$.
\end{remark}

\subsection{Theoretical Analysis}
We investigate two theoretical questions. First, we quantify the gap between $\eta(\pi)$ (Eq.~\eqref{hybrid_belief:eq4}) and the true performance $\eta(\pi,\tau)$ (Eq.~\eqref{pre:eq1}). Second, we analyze the monotonicity of $\eta(\pi)$ with respect to $N$ and $k$. All proofs are deferred to Appendix~\ref{appendix:proof}.


\begin{theorem}[Pessimism]
	\label{theorem2}
	Let the event $\mathcal{E}\triangleq\{\tau\in\mathcal{T}\}$ occur with probability at least 1-$\delta$. Then, the expected gap between $\eta(\pi)$ and $\eta(\pi,\tau)$ satisfies:
	\begin{equation*}
		\begin{aligned}
			\mathbb{E}_{\mathcal{T}}\big[\eta(\pi)-\eta(\pi,\tau) \big] 
			&\le \frac{2\delta\gamma^2 R_{\max}}{(1-\gamma)^3} d_{\max} + \frac{\lambda\gamma \log k}{1-\gamma},
		\end{aligned}
	\end{equation*}
	where $d_{\max}=\mathop{{\sup}}\limits_{s.t. \neg \mathcal{E}}d_\mathrm{TV}(\tau,\tau_{\text{proj}})$ denotes the supremum of the total variation distance between $\tau$ and its projection onto the pessimistic subset $\widetilde{\mathcal{T}}$. 
\end{theorem}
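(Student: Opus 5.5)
We compare the hybrid-belief value $Q^\pi$ (the fixed point of $\mathcal{B}^\pi$, identified with $\eta(\pi)$ via Theorem~\ref{proposition2}) with the true value $Q^\pi_\tau$ under the true dynamics $\tau$. The comparison uses a one-step simulation-lemma decomposition: at each step we control the gap between the $\alpha$-weighted backup over $\widetilde{\mathcal{T}}$ and the backup under $\tau$, and then unroll through the contraction.

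\textbf{Step 1: the recursion.} Subtracting the two Bellman equations gives
\[
Q^\pi(s,a)-Q^\pi_\tau(s,a)=\gamma\Big(\textstyle\sum_i\alpha_i q_{\tau_i}(s,a)-q_\tau(s,a)\Big)+\gamma\,\mathbb{E}_{\tau,\pi}\big[Q^\pi(s',a')-Q^\pi_\tau(s',a')\big],
\]
where $q_{\tau_i}$ and $q_\tau$ are computed with the same function $Q^\pi$. Iterating this identity shows that the gap equals $\sum_t\gamma^t$ times the expected one-step discrepancy $\gamma\big(\sum_i\alpha_iq_{\tau_i}-q_\tau\big)$ along the true-model trajectory. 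Hence
\[
\eta(\pi)-\eta(\pi,\tau)\le\frac{\gamma}{1-\gamma}\sup_{s,a}\Big(\textstyle\sum_i\alpha_iq_{\tau_i}(s,a)-q_\tau(s,a)\Big).
\]

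\textbf{Step 2: entropy term.} Since $\alpha$ solves Eq.~\eqref{hybrid:opt_prob}, compare it with the feasible point that puts all mass on $\min_{i<k}q_{\tau_i}$, which has zero entropy term. Because the negative entropy is bounded below by $-\log k$,
\[
\textstyle\sum_i\alpha_iq_{\tau_i}\le\min_{i<k}q_{\tau_i}+\lambda\log k.
\]
This produces $\lambda\gamma\log k/(1-\gamma)$ after multiplying by $\gamma/(1-\gamma)$.

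\textbf{Step 3: event split for $\min_i q_{\tau_i}-q_\tau$.}
\begin{itemize}
\item On $\mathcal{E}$, $\tau\in\mathcal{T}$. Either $\tau\in\widetilde{\mathcal{T}}$, so $\min_{i<k}q_{\tau_i}\le q_\tau$; or $\tau$ is not among the bottom $k$, so every element of $\widetilde{\mathcal{T}}$ has a lower $q$. Either way the difference is $\le 0$.
\item On $\neg\mathcal{E}$, let $\tau_{\text{proj}}\in\widetilde{\mathcal{T}}$ be the projection. Then
\[
\min_i q_{\tau_i}-q_\tau\le q_{\tau_{\text{proj}}}-q_\tau\le 2\|Q^\pi\|_\infty\, d_{\mathrm{TV}}(\tau,\tau_{\text{proj}})\le\frac{2R_{\max}}{1-\gamma}\,d_{\max},
\]
using the standard bound on the difference of expectations of a bounded function.
\end{itemize}
Taking $\mathbb{E}_{\mathcal{T}}$, the $\neg\mathcal{E}$ branch has probability at most $\delta$.

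\textbf{Assembling.} Combining the branches bounds the expected one-step discrepancy by $\delta\frac{2R_{\max}}{1-\gamma}d_{\max}+\lambda\log k$. Multiplying by $\gamma/(1-\gamma)$ gives the second term exactly. The first term then comes out as $2\delta\gamma R_{\max}d_{\max}/(1-\gamma)^2$, which is smaller than the stated $2\delta\gamma^2R_{\max}d_{\max}/(1-\gamma)^3$ only when $\gamma\ge 1-\gamma$, i.e.\ $\gamma\ge 1/2$. To recover the stated form for all $\gamma$, I would track the horizon the way the paper apparently does, with one extra factor $\gamma/(1-\gamma)$ from applying the TV bound to the difference of value functions under the simulation lemma rather than to one-step backups.

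\textbf{Main obstacle.} The delicate point is interchanging $\mathbb{E}_{\mathcal{T}}$ with the supremum over $(s,a)$ and with the trajectory expectation. The event $\mathcal{E}$ and the ordering defining $\widetilde{\mathcal{T}}$ are state-action dependent, since the subset is selected per $(s,a)$. I would handle this by taking expectations inside the per-step bound, conditioning on $\mathcal{T}$ pointwise, and treating $\mathcal{E}$ as a global event so that $\Pr(\neg\mathcal{E})\le\delta$ applies uniformly.
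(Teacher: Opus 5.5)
Your core argument is the paper's. Both proofs use the telescoping identity from the paper's Lower-Bound lemma, compare the entropy-regularized optimum $\alpha$ against a degenerate competitor to get the $\lambda\log k$ term, split on $\mathcal{E}$, and use a total-variation bound against the projection on $\neg\mathcal{E}$. Your single comparison with the argmin, followed by $\min_i q_{\tau_i}\le q_{\tau_{\mathrm{proj}}}$, is a slightly cleaner packaging of the paper's two comparisons. It also works verbatim with the paper's convex-hull projection, since $q_{\tau_{\mathrm{proj}}}=\sum_i\omega_i q_{\tau_i}$. On $\neg\mathcal{E}$ your estimate is the correct one: $q_{\tau_{\mathrm{proj}}}-q_\tau$ is a difference of one-step expectations of a single function bounded by $R_{\max}/(1-\gamma)$. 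So your steps prove
\[
\mathbb{E}_{\mathcal{T}}\big[\eta(\pi)-\eta(\pi,\tau)\big]\le\frac{2\delta\gamma R_{\max}}{(1-\gamma)^2}\,d_{\max}+\frac{\lambda\gamma\log k}{1-\gamma},
\]
which implies the stated inequality exactly when $\gamma\ge 1/2$.

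The gap is your final patch for $\gamma<1/2$, and it cannot be repaired. When $\gamma/(1-\gamma)<1$, multiplying by it would make a valid bound smaller, and swapping in a value-level lemma cannot achieve that. Applying the simulation lemma directly to value functions again gives order $\gamma(1-\gamma)^{-2}$. The paper's $\gamma^2(1-\gamma)^{-3}$ arises only because, on $\neg\mathcal{E}$, it bounds the one-step term $|q_\tau-q_{\tau_{\mathrm{proj}}}|$ by its Dynamics Model Error lemma, $\frac{2\gamma R_{\max}}{(1-\gamma)^2}d_{\max}$, and then sums over the horizon a second time. That lemma bounds $|Q^\pi_{T_1}-Q^\pi_{T_2}|$, a difference of two different fixed points. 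It does not bound the gap between expectations of one fixed $Q$ under two kernels, and for small $\gamma$ it undershoots that gap.

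The stated bound is in fact false for small $\gamma$. Take the following instance:
\begin{itemize}
\item two states $g,b$ and one action, with $r(g)=R_{\max}$ and $r(b)=-R_{\max}$;
\item $\rho_0$ concentrated on $g$;
\item the true $\tau$ moves deterministically to $b$;
\item the prior is concentrated on $\tau_0$, which moves deterministically to $g$;
\item $k=1$.
\end{itemize}
Then $\lambda\log k=0$, $\neg\mathcal{E}$ holds surely (so $\delta=1$), $d_{\max}=1$, and
\[
\eta(\pi)-\eta(\pi,\tau)=\frac{R_{\max}}{1-\gamma}-\Big(R_{\max}-\frac{\gamma R_{\max}}{1-\gamma}\Big)=\frac{2\gamma R_{\max}}{1-\gamma}.
\]
This exceeds $\frac{2\gamma^2R_{\max}}{(1-\gamma)^3}$ whenever $(1-\gamma)^2>\gamma$, i.e.\ $\gamma<(3-\sqrt5)/2\approx 0.38$. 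Giving the prior mass $1-\delta$ on $\tau$ itself, with $N=1$, scales both sides by $\delta$, so the violation is not an artifact of $\delta=1$.

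So drop the patch. State and prove your bound instead, and note that it yields the theorem's form for $\gamma\ge 1/2$, in particular for the $\gamma=0.99$ used in the experiments.
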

\begin{remark}
	In practice, the condition that the event $\mathcal{E}\triangleq\{\tau\in\mathcal{T}\}$ occurs with probability at least $1-\delta$ is readily satisfied under typical experimental configurations. First, given sufficient data coverage, training a dynamics model with high predictive accuracy is an achievable objective. Second, consistent with established literature \cite{guo2022model,ni2026longhorizonmodelbasedofflinereinforcement}, employing a large initial model pool is critical, from which the ensemble $\mathcal{T}$ is constructed via i.i.d. sampling $N$ times.
\end{remark}
\begin{remark}[Lower Bound]
	Theorem~\ref{theorem2} establishes that $\eta(\pi)$ constitutes a strict lower bound for the true expected return $\eta(\pi, \tau)$, provided that event $\mathcal{E}$ holds almost surely. Since the model ensemble $\mathcal{T}$ is generated through prior sampling, the belief $\mathbb{P}(\tau)$ must incorporate prior knowledge regarding the environment dynamics. Furthermore, Theorem~\ref{theorem2} demonstrates that the gap between $\eta(\pi)$ and $\eta(\pi, \tau)$ vanishes as model fidelity increases and the size of the pessimistic subset $\widetilde{\mathcal{T}}$ decreases.
\end{remark}
\begin{theorem}[Monotonicity of Pessimism] Let $Q^\pi$ denote the fixed point of the Hybrid Belief Bellman Evaluation Operator $\mathcal{B}^\pi$. The following properties hold:
	\label{theorem3}
	\begin{itemize}
		\item For fixed $N$, the $Q^\pi$ is monotonically non-decreasing as the size of the pessimistic subset $k$ increases.
	
		\item For fixed $k$, the $\mathbb{E}_{\mathcal{T}}[Q^\pi]$ is monotonically non-increasing as the ensemble size $N$ increases.
	\end{itemize}
\end{theorem}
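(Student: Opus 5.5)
We argue both parts through a pointwise comparison of the operator's backup term, followed by a standard fixed-point monotonicity argument. The operator $\mathcal{B}^\pi$ is monotone: if $Q\le Q'$ pointwise then $\mathcal{B}^\pi Q\le \mathcal{B}^\pi Q'$, because the weights are nonnegative. It is also a $\gamma$-contraction by Theorem~\ref{proposition2}. So it suffices to show that, for each fixed $Q$, the aggregated backup
\[
g_{k,N}(Q)(s,a)\;=\;\sum_{i=0}^{k-1}\alpha_i\, q_{\tau_{(i)}}(s,a)
\]
moves in the claimed direction. Here $q_{\tau_{(0)}}\le\dots\le q_{\tau_{(N-1)}}$ are the ordered values over the ensemble. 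Once $\mathcal{B}^\pi_{k}Q\le \mathcal{B}^\pi_{k+1}Q$ holds for all $Q$, we iterate from $Q^\pi_k$:
\[
Q^\pi_k=\mathcal{B}^\pi_k Q^\pi_k\le \mathcal{B}^\pi_{k+1}Q^\pi_k\le (\mathcal{B}^\pi_{k+1})^2Q^\pi_k\le\cdots\to Q^\pi_{k+1}.
\]
This gives $Q^\pi_k\le Q^\pi_{k+1}$.

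\textbf{Monotonicity in $k$.} With the softmin weights $\alpha_i\propto\exp(-q_{(i)}/\lambda)$, the weighted average
\[
\sum_{i<k}\alpha_i q_{(i)}
=\frac{\sum_{i<k}q_{(i)}e^{-q_{(i)}/\lambda}}{\sum_{i<k}e^{-q_{(i)}/\lambda}}
\]
changes when we add $q_{(k)}$, which is at least every existing value. The new average is a convex combination of the old average and $q_{(k)}$, so it cannot decrease. Strictly speaking, the new weights are not a plain mixture of the old average and the new value, because $\alpha$ is renormalized. Even so, the new average equals
\[
(1-\beta)\,\bar q_k+\beta\, q_{(k)},\qquad \beta=\frac{e^{-q_{(k)}/\lambda}}{\sum_{i\le k}e^{-q_{(i)}/\lambda}}\in[0,1],
\]
and $\bar q_k\le q_{(k)}$. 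This yields $g_{k+1}\ge g_k$ for every $Q$ and every ensemble realization, and therefore the deterministic statement for fixed $N$.

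\textbf{Monotonicity in $N$ (in expectation).} Fix $k$. The pointwise statement is false, since a new model can be worse or irrelevant. We therefore use a coupling: realize the ensemble of size $N+1$ as the ensemble of size $N$ plus one extra i.i.d. draw $\tau_N$. Adding a model can only replace elements of the bottom-$k$ set by smaller values. Each new order statistic satisfies $q^{(N+1)}_{(i)}\le q^{(N)}_{(i)}$ for every $i<k$. It remains to check that the softmin-weighted average
\[
\phi(x_0,\dots,x_{k-1})=\frac{\sum_i x_i e^{-x_i/\lambda}}{\sum_i e^{-x_i/\lambda}}
\]
is coordinatewise non-decreasing in the sorted vector. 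Its derivative is
\[
\frac{\partial\phi}{\partial x_j}=\alpha_j\Bigl(1-\tfrac{1}{\lambda}(x_j-\phi)\Bigr),
\]
which is not automatically nonnegative. This is the main obstacle. My first plan is to exploit the specific replacement pattern: the new sample either does nothing, or is inserted at position $j$ and shifts the larger values up, dropping $x_{k-1}$. Removing the largest element and inserting a value no larger than it lowers the average by the same convex-combination argument used for $k$, run in reverse. Removing $x_{k-1}$ lowers the mean because $x_{k-1}\ge\phi$. Inserting $y\le x_{k-1}$ then changes the average towards $y$. The combined effect is to replace $x_{k-1}$ by $y$. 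I would bound this change by comparing with the $k$-element average in which $y$ sits at weight $\beta$, and argue that $\phi$ decreases because $\phi$ is a softmin aggregation: it equals $\partial$ of the concave log-sum-exp $-\lambda\log\sum e^{-x_i/\lambda}$ composed appropriately.

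If that fails, the fallback is to state the result for the entropic value $-\lambda\log\frac1k\sum e^{-x_i/\lambda}$, which is the optimal value of Eq.~\eqref{hybrid:opt_prob} and clearly coordinatewise non-decreasing. We would then note it differs from $\phi$ by at most $\lambda\log k$.

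With the pointwise inequality $g^{(N+1)}\le g^{(N)}$ under the coupling, the fixed-point monotonicity argument gives $Q^\pi_{N+1}\le Q^\pi_N$ for each coupled realization. Taking $\mathbb{E}_{\mathcal{T}}$ then gives the non-increasing claim. The expectation is needed only because the coupling is distributional, since ensembles of different sizes are different random objects.
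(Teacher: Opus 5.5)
\textbf{There is a genuine gap: your one-step inequalities never become fixed-point inequalities, because the monotonicity of $\mathcal{B}^\pi$ you assume at the outset is false without a condition on $\lambda$.} The one-step comparisons themselves are right. Your convex-combination identity for $S_{k+1}$ is exactly the paper's Case~1 computation, and your coupling with $q^{(N+1)}_{(i)}\le q^{(N)}_{(i)}$ is the paper's Case~2.

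You assert that $\mathcal{B}^\pi$ is monotone ``because the weights are nonnegative''. But the weights $\alpha_i$ and the bottom-$k$ selection are themselves functions of $Q$, so their nonnegativity proves nothing. What matters is the sign of $\partial\phi/\partial x_j=\alpha_j\bigl(1-(x_j-\phi)/\lambda\bigr)$, which you yourself note can be negative. For $k=2$, $\phi(0,x)=x e^{-x/\lambda}/(1+e^{-x/\lambda})\to 0$ as $x\to\infty$. So raising $Q$ on states reached only by the more optimistic model can lower $\mathcal{B}^\pi Q$.

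Your sandwich $Q^\pi_k\le\mathcal{B}^\pi_{k+1}Q^\pi_k\le(\mathcal{B}^\pi_{k+1})^2Q^\pi_k\le\cdots$ uses monotonicity of $\mathcal{B}^\pi_{k+1}$ at its second inequality. Without monotonicity, a pointwise ordering $T_1\le T_2$ of two sup-norm contractions does not order their fixed points. For example, $T_1(x,y)=(-\gamma y,0)$ and $T_2(x,y)=(-\gamma y,\epsilon)$ have fixed points $(0,0)$ and $(-\gamma\epsilon,\epsilon)$. So the first bullet is not established either.

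\textbf{In the $N$ part, your first plan fails.} Replacing the largest element of the bottom-$k$ set by a smaller one can \emph{raise} $\phi$. With $\lambda=1$ and $k=2$, $\phi(0,10)\approx 4.5\cdot 10^{-4}$, while $\phi(0,1)=1/(e+1)\approx 0.27$. This is exactly the Case-B pattern (ensemble values $\{0,10\}$, new model with $q'=1$), so the pathwise inequality $g^{(N+1)}\le g^{(N)}$ you rely on is false in general. The log-sum-exp heuristic does not rescue it. With $L(x)=-\lambda\log\sum_i e^{-x_i/\lambda}$ one has $\phi=\langle\nabla L(x),x\rangle$, and only $L$, not $\phi$, is coordinatewise monotone.

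Your fallback proves monotonicity for a different operator, a log-sum-exp backup. It transfers to $\mathcal{B}^\pi$ only up to $\lambda\log k$ per step, i.e.\ $\gamma\lambda\log k/(1-\gamma)$ at the fixed point, which is not the stated theorem. Also, the optimal value of Eq.~\eqref{hybrid:opt_prob} is $-\lambda\log\sum_i e^{-x_i/\lambda}$ without the $\tfrac1k$, and that quantity is non-\emph{increasing} in $k$.

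\textbf{The missing ingredient is a hypothesis, not a cleverer argument.} The paper's proof invokes its Isotonicity Lemma, which assumes $\lambda\ge\Delta_{\max}$, the span of the $q$-values. Under that assumption:
\begin{itemize}
\item $x_j-\phi\le\lambda$, so every partial derivative $\partial\phi/\partial x_j$ is nonnegative, and these partials sum to $1$.
\item Hence $\phi$ composed with the monotone order-statistic map is monotone and nonexpansive, which makes $\mathcal{B}^\pi$ a monotone $\gamma$-contraction.
\item Both of your arguments then go through verbatim; this is precisely the paper's proof.
\end{itemize}
The theorem as stated omits this condition, so your write-up should state it explicitly rather than derive monotonicity from nonnegative weights.
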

\begin{remark}
	Theorem~\ref{theorem3} establishes the monotonicity of $Q^\pi$ with respect to the pessimistic subset size $k$ and model ensemble size $N$. According to Theorem~\ref{proposition2}, it follows that $\eta(\pi)$ is also monotonic in $k$ and $N$. 	A larger $k$ relaxes conservatism by ensuring a non-decreasing fixed point. Although the logarithmic bound in Theorem~\ref{theorem2} loosens as $k$ increases, it serves as a worst-case guarantee. Consequently, the inequality remains valid and confirms that the performance gap is always bounded.
\end{remark}
\begin{remark}[Connection to Thompson Sampling]
	Our approach can be conceptually framed as a pessimistic variant of Thompson Sampling (TS). While standard TS leverages the variance of the posterior distribution to encourage exploration (optimism in the face of uncertainty), our method constructs a posterior $\widetilde{\mathbb{P}}(\tau)$ that biases probability mass towards models with lower value estimates. Consequently, our sampling procedure explicitly implements pessimism in the face of uncertainty. Unlike standard robust RL that relies on a worst-case point estimate, our method preserves the stochastic nature of TS and maintains the multimodal geometry rather than collapsing to a single Dirac point.
\end{remark}
\begin{algorithm}[t!]
	\caption{Regularized policy optimization with PhyB.} 
	\label{algo} 
	\begin{algorithmic}[1]
		\REQUIRE Dataset $\mathcal{D}$, model pool size $M$, model ensemble size $N$, pessimistic subset size $k$, iteration steps $G$, potential function $\psi$.
		\STATE \textbf{Initialization:} Randomly initialize Q-function $Q_{\theta}(s,a)$ and policy $\pi_{\phi}(a|s)$. Randomly initialize target Q-function $Q_{\theta'}(s,a)$ and reference policy $\mu_{\phi'}(a|s)$ with $\theta'\gets \theta$, $\phi'\gets\phi$. Randomly initialize $M$ dynamics models $\{ \tau_{\nu_i}(s'|s,a) \}_{i=1}^M$, forming the model pool.
		\STATE \textbf{Dynamics model training:} Train each dynamics model $\tau_{\nu_i}(s'|s,a)$ to maximize: $$\mathop{\mathbb E}\limits_{(s_t,a_t,r_{t+1},s_{t+1})\sim \mathcal{D}}[\log\tau_{\nu_i}(s_{t+1},r_{t+1}|s_t,a_t)].$$
		\FOR{$i=1,2,\cdots,G$}
		\STATE \textbf{Constructing $\mathcal{T}$:} Construct the model ensemble $\mathcal{T}$ by drawing $N$ independent samples from the model pool according to the prior distribution $\mathbb{P}(\tau)$.
		\STATE \textbf{Constructing $\widetilde{\mathcal{T}}$:} The pessimistic subset $\widetilde{\mathcal{T}}$ is formed by selecting the bottom-$k$ models from the ensemble $\mathcal{T}$, ranked by their $q_{{\tau_i}}(s,a)$ values.
        \STATE \textbf{Constructing objective function:} Construct generalized distance $D_\psi$ from potential function $\psi$, then define objective function $\widehat{\eta}(\pi_\phi,\pi_{\phi'})$.
		\STATE \textbf{Policy evaluation:} Perform evaluation according to Eq.~\eqref{optim:eq3}, and update $Q_{\theta}$ to approximate the optimal Q-function.
		\STATE \textbf{Policy improvement:} Perform improvement according to Eq.~\eqref{optim:eq4} to update policy $\pi_{\phi}(a|s)$.
		\STATE \textbf{Moving average:} Update target Q-function and reference policy $\mu$.
		\ENDFOR
		\STATE \textbf{Return:}  $\pi_{\phi}$.
	\end{algorithmic}
\end{algorithm}
\section{Regularized Policy Optimization with PhyB}
We begin by analyzing the challenges in optimizing the objective function, then develop a policy iteration algorithm and provide an implementation.
\subsection{Challenges in Policy Optimization with PhyB}
Optimizing $\eta(\pi)$ (Eq.~\eqref{hybrid_belief:eq4}) presents two key challenges:

The multimodal predictive distribution induced by our posterior belief $\widetilde{\mathbb{P}}(\tau)$ poses the first optimization challenge that precludes standard algorithms operating in Euclidean space. Prior methods are mathematically equivalent to minimizing an expected squared $L_2$ distance, $\min_{\theta} \mathbb{E}_{\tau \sim \widetilde{\mathbb{P}}} [ \Vert \theta - \tau \Vert_2^2 ]$. The unique solution to this objective is the arithmetic mean $\theta^* = \mathbb{E}[\tau]$. For a multimodal distribution, this solution serves as a statistically unrepresentative summary. It often resides in a region of low probability density, such as the area between two distinct modes. Consequently, policy optimization under the mean model $\theta^*$ leads to suboptimal performance. This limitation motivates an optimization algorithm that respects the non-Euclidean geometry of the probability distribution space.

The second challenge arises from the evaluation of the objective $\eta(\pi)$. This process requires a complete inner dynamic programming procedure to solve the Bellman fixed-point equation $Q^\pi = {\mathcal{B}}^\pi Q^\pi$ (Theorem~\ref{proposition2}). However, this computationally intensive process yields only a single gradient $\nabla_\pi \eta(\pi)$ for policy improvement, making the overall optimization inefficient.

To address the first challenge, we argue that gradient-based methods should account for the geometric structure of the distribution space induced by our PhyB. To address the second challenge, we decompose the optimization into a sequence of subproblems regularized by Bregman divergence. We then design an optimal Bellman operator and prove that solving each subproblem monotonically improves the objective $\eta(\pi)$ (Eq.~\eqref{hybrid_belief:eq4}).
\subsection{Bregman-Regularized Policy Iteration}
Assume potential function $\psi(x)$ is strictly convex, we then define the generalized distance based on Bregman divergence:
\begin{equation}
	\label{optim:eq1}
	D_\psi(x,y)=\psi(x)-\psi(y)-\langle\nabla\psi(y),x-y\rangle.
\end{equation}

If $\psi(x)=\frac{1}{2}\Vert x\Vert_2^2$, then Bregman divergence reduces to the squared Euclidean distance: $D_\psi(x,y)=\frac{1}{2}\Vert x-y\Vert^2_2$. Furthermore, if $\psi(x)=\sum_{i=1}^dx_i\log x_i$ with $\sum_{i=1}^dx_i=1$, then Bregman divergence reduces to the Kullback–Leibler (KL) divergence: $D_\psi(x,y)=\sum_{i=1}^dx_i\log\frac{x_i}{y_i}$.
\begin{definition}[Bregman-Regularized Objective Function]
	The Bregman-regularized objective function is defined as:
	\begin{equation}
		\label{optim:eq2}
		\widehat{\eta}(\pi,\mu)=\mathop {\mathbb E}_{\substack{\rho_0, \pi\\
				{\tilde{\tau}_0\sim\widetilde{\mathbb{P}}(\tau)}}} \left[\mathop {\mathbb E}\limits_{\substack{\tilde{\tau}_0,\pi \\ \tilde{\tau}_1\sim\widetilde{\mathbb{P}}(\tau)}}\left[\cdots \mathop {\mathbb E}_{\tilde{\tau}_{\infty}, \pi}\big[\mathcal{\widehat{R}}_{s,a}\big]\right]\right],
	\end{equation}
	where $\mu$ denotes the reference policy and $\mathcal{\widehat{R}}_{s,a}=\sum_{t=0}^\infty\gamma^t(r(s_t,a_t)-\beta D_\psi(\pi(\cdot|s),\mu(\cdot|s)))$.
\end{definition}
\begin{definition}[Optimal Hybrid Belief Bellman Operator]
	To optimize $\widehat{\eta}(\pi,\mu)$, we define the Optimal Hybrid Belief Bellman Operator $\mathcal{\widehat{B}}^*$ based on pessimistic subset $\widetilde{\mathcal{T}}$:
	\begin{equation}
		\label{optim:eq3}
		(\mathcal{\widehat{B}}^*Q)(s,a)=r(s,a)
		+\gamma\sum_{\tau_i\in\widetilde{\mathcal{T}}}\alpha_i\mathop{\mathbb{E}}\limits_{s'\sim\tau_i}[V^*(s')],
	\end{equation}
	where the regularized value function $V^*(s)$ is given by:
	\begin{equation*}
		V^{*}(s)= \max_{\pi}\big\{\mathop{\mathbb{E}}\limits_{a\sim\pi}[Q(s,a)]-\beta D_\psi(\pi(\cdot|s),\mu(\cdot|s))\big\}.
	\end{equation*}
\end{definition}

\begin{table*}[h!]
	\caption{D4RL Results. Normalized scores are computed as 100 $\times$ (score - random policy score) / (expert policy score - random policy score), reported as mean ± standard deviation. The score of our proposed approach is averaged over 4 random seeds.}
	\label{tab:d4rl_results}
	\centering
	
	\setlength{\tabcolsep}{4.2pt} 
	\adjustbox{width=0.99\textwidth}
	{
		\begin{tabular}{ll|ccccc|cccc|c}
			\toprule
			\multicolumn{2}{c|}{} & \multicolumn{5}{c|}{Model-free methods} & \multicolumn{5}{c}{Model-based methods} \\
			\cmidrule{3-12}
			\multicolumn{2}{c|}{} & EPQ  & CQL & FQL & TD3+BC & DMG &  MOReL & RAMBO & PMDB & ADM & \multicolumn{1}{c}{\begin{tabular}[c]{@{}c@{}}(Ours)\\ PhyB\end{tabular}} \\
			\midrule
			\multirow{3}{*}{\rotatebox{90}{Random}} 
			& HalfCheetah & 33.0$\pm$2.4 & 31.3$\pm$3.5 & 14.2$\pm$0.5 & 10.2$\pm$1.3 & 28.8$\pm$1.3 & 38.9$\pm$1.8 &  39.5$\pm$3.5 & 37.8$\pm$0.2 & \textbf{45.4$\pm$2.8} &34.7$\pm$1.7 \\
			& Hopper & 32.1$\pm$0.3 & 5.3$\pm$0.6 & 9.6$\pm$0.2 & 11.0$\pm$0.1 & 20.4$\pm$10.4 & \textbf{38.1}$\pm$10.1 & 25.4$\pm$7.5 & 32.7$\pm$0.1 & 32.7$\pm$0.2 & 33.9$\pm$1.1 \\
			& Walker2d & 23.0$\pm$0.7 &  5.4$\pm$1.7 & 4.1$\pm$0.1 & 1.4$\pm$1.6 & 4.8$\pm$2.2 & 16.0$\pm$7.7 &  0.0$\pm$0.3 & 21.8$\pm$0.1 & 22.2$\pm$0.2 & \textbf{23.5}$\pm$1.5 \\
			\midrule
			\multirow{3}{*}{\rotatebox{90}{Medium}} 
			& HalfCheetah & 67.3$\pm$0.5 &  46.9$\pm$0.4 & 59.9$\pm$0.5 & 42.8$\pm$0.3 & 54.9$\pm$0.2 & 60.7$\pm$4.4 & \textbf{77.9}$\pm$4.0 & 75.6$\pm$1.3 & 72.2$\pm$0.6 & 74.5$\pm$1.8 \\
			& Hopper & 101.3$\pm$0.2 & 61.9$\pm$6.4 & 44.3$\pm$3.1 & 99.5$\pm$1.0 & 100.6$\pm$1.9 & 84.0$\pm$17.0  & 87.0$\pm$15.4 & 106.8$\pm$0.2 & 107.4$\pm$0.6 & \textbf{109.4}$\pm$2.0 \\
			& Walker2d & 87.8$\pm$2.1 & 79.5$\pm$3.2 & 9.5$\pm$1.8 & 79.7$\pm$1.8 & 92.4$\pm$2.7 & 72.8$\pm$11.9 & 84.9$\pm$2.6 & 94.2$\pm$1.1 & 93.2$\pm$1.1 &\textbf{95.5}$\pm$8.5 \\
			\midrule
			\multirow{3}{*}{\rotatebox{90}{Expert}} 
			& HalfCheetah & 107.2$\pm$0.2 & 97.3$\pm$1.1 & 4.4$\pm$1.2 & 105.7$\pm$1.9 & 95.9$\pm$0.3 & 8.4$\pm$11.8  & 79.3$\pm$15.1 & 105.7$\pm$1.0 & 89.4$\pm$26.4 & \textbf{113.7}$\pm$1.0 \\
			& Hopper & 112.4$\pm$0.5 & 106.5$\pm$9.1 & 40.5$\pm$6.8 & 112.2$\pm$0.2 & 111.5$\pm$2.2  & 80.4$\pm$34.9 & 50.0$\pm$8.1 & 111.7$\pm$0.3 & 102.3$\pm$11.9 & \textbf{118.9}$\pm$2.3 \\
			& Walker2d & 109.8$\pm$1.0 & 109.3$\pm$0.1 & 12.7$\pm$2.7 & 105.7$\pm$2.7 & 114.7$\pm$0.4 & 62.6$\pm$29.9 & 1.6$\pm$2.3 & 115.9$\pm$1.9 & 5.5$\pm$1.3 & \textbf{116.3}$\pm$1.1 \\
			\midrule
			\multirow{3}{*}{\rotatebox{90}{\begin{tabular}[c]{@{}c@{}}Medium\\ Expert\end{tabular}}} 
			& HalfCheetah & 95.7$\pm$0.3 & 95.0$\pm$1.4 & 97.5$\pm$9.4 & 97.9$\pm$4.4 & 91.1$\pm$4.2 & 80.4$\pm$11.7  & 95.4$\pm$5.4 & 108.5$\pm$0.5 & 103.7$\pm$0.2 & \textbf{109.4}$\pm$1.5 \\
			& Hopper & 108.8$\pm$5.2 & 96.9$\pm$15.1 & 43.8$\pm$5.6 & 112.2$\pm$0.2 & 110.4$\pm$3.4 & 105.6$\pm$8.2  & 88.2$\pm$20.5 & 111.8$\pm$0.6 & 112.7$\pm$0.3 & \textbf{116.5}$\pm$2.1 \\
			& Walker2d & 112.0$\pm$0.6 & 109.1$\pm$0.2 & 7.3$\pm$1.2 & 101.1$\pm$9.3 & 114.4$\pm$0.7 & 107.5$\pm$5.6 & 56.7$\pm$39.0 & 111.9$\pm$0.2 & \textbf{114.9}$\pm$0.3 & 112.4$\pm$1.1 \\
			\midrule
			\multirow{3}{*}{\rotatebox{90}{\begin{tabular}[c]{@{}c@{}}Medium\\ Replay\end{tabular}}} 
			& HalfCheetah & 62.0$\pm$1.6 & 45.3$\pm$0.3 & 52.2$\pm$0.5 & 43.3$\pm$0.5 & 51.4$\pm$0.3 &  44.5$\pm$5.6  & 68.7$\pm$5.3 & 71.7$\pm$1.1 & 67.6$\pm$3.4 & \textbf{74.7}$\pm$1.5 \\
			& Hopper & 97.8$\pm$1.0 & 86.3$\pm$7.3 & 40.6$\pm$3.6 & 31.4$\pm$3.0 & 101.9$\pm$1.4 & 81.8$\pm$17.0  & 99.5$\pm$4.8 & 106.2$\pm$0.6 & 104.4$\pm$0.4 & \textbf{110.7}$\pm$1.3 \\
			& Walker2d & 85.3$\pm$1.0 & 76.8$\pm$10.0 & 11.4$\pm$3.5 & 25.2$\pm$5.1 & 89.7$\pm$5.0 & 40.8$\pm$20.4 & 89.2$\pm$6.7 & 79.9$\pm$0.2 & \textbf{95.6}$\pm$2.1 & 85.4$\pm$3.9 \\
			\midrule
			\multirow{3}{*}{\rotatebox{90}{\begin{tabular}[c]{@{}c@{}}Full\\ Replay\end{tabular}}} 
			& HalfCheetah & 85.3$\pm$0.7 & 76.9$\pm$0.9 & 80.9$\pm$0.5 & 71.9$\pm$2.7 & 79.9$\pm$1.2 & 70.1$\pm$5.1  & 87.0$\pm$3.2 & 90.0$\pm$0.8 & 86.3$\pm$1.7 & \textbf{98.1}$\pm$3.5 \\
			& Hopper & 108.5$\pm$0.6 & 101.9$\pm$0.6 & 89.6$\pm$3.6 & 85.9$\pm$16.4 & 106.4$\pm$1.1 & 94.4$\pm$20.5  & 105.2$\pm$2.1 & 109.1$\pm$0.2 & 108.5$\pm$0.7 & \textbf{111.0}$\pm$1.0 \\
			& Walker2d & \textbf{107.4}$\pm$0.6 & 94.2$\pm$1.9 & 42.2$\pm$13.1 & 92.0$\pm$3.6 & 97.5$\pm$4.6 & 84.8$\pm$13.1 & 88.3$\pm$4.9 & 95.4$\pm$0.7 & 99.9$\pm$3.6 & 99.7$\pm$0.9 \\
			\midrule
			\multicolumn{2}{c|}{Average} & 85.4 & 73.7 & 36.9 & 68.3 & 81.5 & 65.1 & 68.0 & 88.2 & 81.3 &\textbf{91.0} \\
			\bottomrule
		\end{tabular}
	}
\end{table*}
\begin{table*}[h]
	\caption{Normalized scores on offline optimal liquidation task.}
	\label{tab:currency_results}
	\centering
	\small
	\setlength{\tabcolsep}{4pt} 
	\begin{tabular}{c c c c c c c c c c c}
		\toprule
		Methods & PhyB (Ours) & PMDB & 1R2R  & ORAAC & RAMBO & CQL & IQL & TD3+BC & MOPO &  COMBO \\
		\midrule
		Score & \textbf{101.6}$\pm$3.7 & 85.5$\pm$2.3 & 78.8$\pm$1.6 & 0.0$\pm$0.0 & 99.6$\pm$0.6 & 89.4$\pm$1.3 & 0.0$\pm$0.0 & 100.4$\pm$3.6 & 64.7$\pm$21.8 & 55.0$\pm$29.0 \\
		\bottomrule
	\end{tabular}
\end{table*}
Following a derivation similar to that in Theorem~\ref{proposition2}, the optimal hybrid belief Bellman operator provides a tractable approximation to $\widehat{\eta}(\pi,\mu)$. We now examine its key properties, as formalized in the following theorem:
\begin{theorem}
	\label{theorem:optim}
	The Optimal Hybrid Belief Bellman Operator $\mathcal{\widehat{B}}^*$ (Eq.\eqref{optim:eq3}) is a $\gamma$-contraction. Repeatedly applying the operator $\mathcal{\widehat{B}}^*$ to any initial function $Q:\mathcal{S}\times{\mathcal{A}}\to\mathbb{R}$ leads to a sequence  converging to $Q^*$. The corresponding optimal policy is
	\begin{equation}
		\label{optim:eq4}
		\pi^*=(\nabla\psi)^{-1}\big(\frac{1}{\beta}(Q^*-z)+\nabla\psi(\mu)\big),
	\end{equation} 
	where $z$ denotes the Lagrange multiplier introduced to enforce probability normalization.
\end{theorem}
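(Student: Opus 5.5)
The plan has two parts. First, I establish that $\widehat{\mathcal{B}}^*$ is a contraction in the sup-norm and invoke Banach's fixed-point theorem. Second, I characterize the maximizer in the definition of $V^*$ via the first-order (KKT) conditions of a strictly concave program over the simplex. Throughout, I take the weights $\alpha_i$ and the pessimistic subset $\widetilde{\mathcal{T}}$ as fixed when applying the operator, as in Theorem~\ref{proposition2}. The reason is that $\sum_{\tau_i\in\widetilde{\mathcal{T}}}\alpha_i\,\mathbb{E}_{s'\sim\tau_i}[\cdot]$ is then a fixed Markov kernel $\bar P(\cdot\mid s,a)$, i.e., a convex combination of transition kernels.

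\textbf{Contraction.} Take two bounded functions $Q_1,Q_2$ and let $V_1^*,V_2^*$ be the corresponding regularized values. The key observation is that the map $Q\mapsto V^*(s)$ is non-expansive in the sup-norm. Indeed, for any fixed $\pi$ the regularizer $-\beta D_\psi(\pi(\cdot|s),\mu(\cdot|s))$ does not depend on $Q$. Hence
\[
\mathbb{E}_{a\sim\pi}[Q_1(s,a)]-\beta D_\psi \le \mathbb{E}_{a\sim\pi}[Q_2(s,a)]-\beta D_\psi+\|Q_1-Q_2\|_\infty .
\]
Taking the maximum over $\pi$ on both sides gives $V_1^*(s)\le V_2^*(s)+\|Q_1-Q_2\|_\infty$, and the symmetric argument gives the reverse inequality. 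Therefore $|V_1^*(s)-V_2^*(s)|\le\|Q_1-Q_2\|_\infty$.

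Substituting into Eq.~\eqref{optim:eq3}, the reward terms cancel. Using $\sum_i\alpha_i=1$ and $\alpha_i\ge0$,
\[
|(\widehat{\mathcal{B}}^*Q_1-\widehat{\mathcal{B}}^*Q_2)(s,a)|\le\gamma\sum_i\alpha_i\,\mathbb{E}_{s'\sim\tau_i}|V_1^*(s')-V_2^*(s')|\le\gamma\|Q_1-Q_2\|_\infty .
\]
The space of bounded functions under the sup-norm is complete, so Banach's theorem yields a unique fixed point $Q^*$, and the iterates converge to it from any bounded initialization. Boundedness is preserved because $r$ is bounded and $V^*$ is finite when $D_\psi\ge0$: choosing $\pi=\mu$ gives a finite lower bound.

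\textbf{Optimal policy.} At each state $s$ I solve
\[
\max_{\pi(\cdot|s)}\ \langle \pi,Q^*(s,\cdot)\rangle-\beta D_\psi(\pi,\mu)\quad\text{s.t.}\quad \langle\pi,\mathbf{1}\rangle=1 .
\]
Expanding Eq.~\eqref{optim:eq1}, the gradient of $D_\psi(\pi,\mu)$ with respect to $\pi$ is $\nabla\psi(\pi)-\nabla\psi(\mu)$. I form the Lagrangian with a multiplier $z(s)$ for the normalization constraint. Stationarity then gives
\[
Q^*(s,\cdot)-\beta\big(\nabla\psi(\pi)-\nabla\psi(\mu)\big)-z(s)=0 .
\]
Strict convexity of $\psi$ makes $\nabla\psi$ injective, so it can be inverted, which yields Eq.~\eqref{optim:eq4}. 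Strict concavity of the objective ensures the stationary point is the unique maximizer.

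\textbf{Main obstacle.} The delicate step is the nonnegativity constraint on $\pi$. For the KL choice of $\psi$ it is automatic, because $\nabla\psi$ blows up at the boundary; this Legendre-type property is what I would assume explicitly. For a general $\psi$, such as the Euclidean one, extra multipliers for $\pi\ge0$ would be needed, so I would state that requirement as part of the setting.

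A further subtlety is that $\widetilde{\mathcal{T}}$ and $\alpha_i$ are defined through $q_{\tau_i}$, which depends on $Q$. I treat them as frozen within an iteration, as Algorithm~\ref{algo} does. This is what justifies linearity of the kernel in the contraction step.
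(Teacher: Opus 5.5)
The gap is your decision to freeze $\alpha_i$ and $\widetilde{\mathcal{T}}$. In $\widehat{\mathcal{B}}^*$ as defined, the subset is the bottom-$k$ set and $\alpha_i\propto\exp(-x_i/\lambda)$ with $x_i(Q)=\mathbb{E}_{s'\sim\tau_i}[V^*_Q(s')]$. Both therefore change with the input $Q$. The paper's proofs of Theorem~\ref{proposition2} and of this theorem differentiate through the softmin weights rather than freezing them, so your premise ``as in Theorem~\ref{proposition2}'' is not right.

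Your step ``using $\sum_i\alpha_i=1$ and $\alpha_i\ge0$'' then fails: $\sum_i\alpha_i(Q_1)x_i(Q_1)-\sum_i\alpha_i(Q_2)x_i(Q_2)$ is not a convex combination of the differences $x_i(Q_1)-x_i(Q_2)$. With frozen weights you are analyzing the regularized optimality operator of a fixed-kernel MDP with kernel $\sum_i\alpha_i\tau_i$. That operator is a $\gamma$-contraction, but its fixed point is in general not the self-consistent $Q^*$, whose own bottom-$k$ set and softmin weights enter the backup. So your argument establishes the claim for a different operator.

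The missing ingredient is a Lipschitz bound on the aggregator $f(\mathbf{x})=\sum_i\alpha_i(\mathbf{x})x_i$, which is how the paper proceeds. Write $(\widehat{\mathcal{B}}^*Q)(s,a)=r(s,a)+\gamma f(\mathbf{x}(Q))$. Then $\partial f/\partial x_j=\alpha_j\bigl(1-(x_j-f(\mathbf{x}))/\lambda\bigr)$, hence $\|\nabla f\|_1\le 1+\Delta_{\max}/(2\lambda)$. Combined with your (correct) non-expansiveness $\|V^*_{Q_1}-V^*_{Q_2}\|_\infty\le\|Q_1-Q_2\|_\infty$, this gives modulus $\gamma(1+\Delta_{\max}/(2\lambda))$. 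That is below $1$ when $\lambda>\gamma\Delta_{\max}/(2(1-\gamma))$. The bottom-$k$ selection costs nothing, since order statistics are $1$-Lipschitz in the sup-norm.

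Be aware that this aggregator is not a non-expansion. For two entries $0$ and $2\lambda$ one gets $\|\nabla f\|_1\approx 1.18$, and this is realizable by two models sending $(s,a)$ to different next states. So a modulus of exactly $\gamma$ is out of reach for the $Q$-dependent operator. The stated constant holds only in your frozen reading; the paper's own argument delivers the weaker constant, under a condition on $\lambda$.

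On the policy, your Lagrangian computation is the paper's, and your caveat about multipliers for $\pi\ge0$ addresses something the paper skips. But it only identifies the maximizer inside $V^*$. To call $\pi^*$ optimal for $\widehat{\eta}(\cdot,\mu)$, the paper adds that $V^*\ge\bar{\mathcal{B}}^{\pi'}V^*$ for every $\pi'$ and iterates to obtain $V^*\ge V^{\pi'}$. You should include this step. The iteration uses monotonicity of the evaluation operator, which is automatic with frozen weights and otherwise needs the isotonicity condition $\lambda\ge\Delta_{\max}$ of Lemma~\ref{appendix:lemma2}.
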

Since setting a task-specific reference policy is challenge, we adopt an iterative approach. During the $i$-th iteration, we set $\mu$ to the optimal policy $\pi_{i-1}$ obtained from the $(i-1)$-th iteration. Although the regularized objective $\widehat{\eta}(\pi_i,\pi_{i-1})$ differs from the true objective $\eta(\pi_i)$ (Eq.~\eqref{hybrid_belief:eq4}) , we show that optimizing the former provably improves the latter.
\begin{theorem}[Monotonic Improvement]
	\label{theorem:improve}
	Starting from an arbitrary initial policy $\pi_0$, consider the sequence of policies $\{\pi_i\}$ generated by iteratively solving the Bregman-regularized subproblem: $\pi_{i+1}=(\nabla\psi)^{-1}\big(\frac{1}{\beta}(Q-z)+\nabla\psi(\pi_i)\big)$. Then we have: $\eta(\pi_{i+1})\ge\eta(\pi_i)$.
\end{theorem}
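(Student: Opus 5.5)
We follow the standard mirror-descent / regularized policy iteration argument. The idea is to show that $\pi_{i+1}$ improves the one-step advantage relative to $\pi_i$, and then propagate this through the Hybrid Belief Bellman Evaluation Operator via a performance-difference argument. Throughout, $Q$ in the update is taken to be $Q^{\pi_i}$, the fixed point of $\mathcal{B}^{\pi_i}$ from Theorem~\ref{proposition2}. The update of Theorem~\ref{theorem:optim} with $\mu=\pi_i$ is then exactly the first-order optimality condition of the state-wise problem
\begin{equation*}
\pi_{i+1}(\cdot|s)\in\arg\max_{p\in\Delta(\mathcal{A})}\Big\{\langle p, Q^{\pi_i}(s,\cdot)\rangle-\beta D_\psi(p,\pi_i(\cdot|s))\Big\}.
\end{equation*}
Strict convexity of $\psi$ makes this problem strictly concave, so the stationarity condition with multiplier $z$ characterizes the maximizer.

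\textbf{Step 1 (one-step improvement).} Compare the maximizer with the feasible candidate $p=\pi_i(\cdot|s)$, for which $D_\psi(\pi_i,\pi_i)=0$. This gives
\begin{equation*}
\mathbb{E}_{a\sim\pi_{i+1}}[Q^{\pi_i}(s,a)]\ge \mathbb{E}_{a\sim\pi_i}[Q^{\pi_i}(s,a)]+\beta D_\psi(\pi_{i+1},\pi_i)\ge V^{\pi_i}(s)
\end{equation*}
for every $s$, using $D_\psi\ge 0$. The argument uses only nonnegativity of $D_\psi$ and optimality, which is why the guarantee is metric-agnostic.

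\textbf{Step 2 (propagation through $\mathcal{B}^{\pi_{i+1}}$).} By Step 1 and the definition in Eq.~\eqref{hybrid_belief:eq5}, we have $\mathcal{B}^{\pi_{i+1}}Q^{\pi_i}\ge r+\gamma\sum_i\alpha_i\mathbb{E}_{s'\sim\tau_i}[V^{\pi_i}(s')]=\mathcal{B}^{\pi_i}Q^{\pi_i}=Q^{\pi_i}$. The operator $\mathcal{B}^{\pi_{i+1}}$ is monotone, since it is a nonnegative combination ($\alpha_i\ge0$) of expectations. Iterating the inequality therefore gives $(\mathcal{B}^{\pi_{i+1}})^nQ^{\pi_i}\ge Q^{\pi_i}$ for all $n$. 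By the $\gamma$-contraction in Theorem~\ref{proposition2}, the left side converges to $Q^{\pi_{i+1}}$, so $Q^{\pi_{i+1}}\ge Q^{\pi_i}$ pointwise. Then $V^{\pi_{i+1}}(s)=\mathbb{E}_{\pi_{i+1}}[Q^{\pi_{i+1}}]\ge\mathbb{E}_{\pi_{i+1}}[Q^{\pi_i}]\ge V^{\pi_i}(s)$, and taking the expectation over $\rho_0$ yields the inequality for $\mathbb{E}_{\rho_0,\pi}[Q^\pi]$.

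\textbf{Step 3 (transfer to $\eta$).} The main obstacle is that $\eta(\pi)$ in Eq.~\eqref{hybrid_belief:eq4} is defined through the nested posterior expectation. Theorem~\ref{proposition2} only relates it to $\mathbb{E}_{\rho_0,\pi}[Q^\pi]$ up to a high-probability error, and that error would spoil an exact inequality. I would instead argue directly at the level of the theoretical operator $\hat{\mathcal{B}}^\pi$ in Eq.~\eqref{hybrid:theorm_op}. It has the same structure, namely a nonnegative mixture over posterior-sampled models, and its fixed point evaluates $\eta(\pi)$. Steps 1--2 therefore go through verbatim with $\hat{\mathcal{B}}^\pi$, provided $Q$ in the update is the corresponding fixed point.

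A subtlety remains: the weights $\alpha_i$ and the pessimistic subset depend on $q_{\tau_i}$, and hence on $Q$ and $\pi$. I would handle this by freezing the belief, i.e.\ the subset and the weights, within an iteration, and evaluating both $\pi_i$ and $\pi_{i+1}$ under that fixed belief, which is how Algorithm~\ref{algo} constructs $\widetilde{\mathcal{T}}$ per iteration. Monotonicity in Step 2 requires only that the mixture weights be nonnegative and fixed. If the fixed-belief reading is not intended, this step would need an extra argument, or an appeal to the bound in Theorem~\ref{proposition2}, which would make the improvement hold only up to that error.
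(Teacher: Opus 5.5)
\paragraph{The gap.} It is in Step~2, where you claim $\mathcal{B}^{\pi_{i+1}}$ is monotone because it is a nonnegative combination of expectations. In the paper's operator, the weights $\alpha_i=\alpha_i(\mathbf{q})\propto\exp(-q_{\tau_i}/\lambda)$ and the bottom-$k$ subset are computed from the very values $\mathbf{q}=(q_{\tau_i}(s,a))_i$ being aggregated. So $\mathcal{B}^\pi Q=r+\gamma f(\mathbf{q}(Q))$ with $f(\mathbf{q})=\sum_i\alpha_i(\mathbf{q})q_i$, which is nonlinear.

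You need monotonicity of $f$ twice. First, the inequality $\mathcal{B}^{\pi_{i+1}}Q^{\pi_i}\ge\mathcal{B}^{\pi_i}Q^{\pi_i}$ compares $f$ at two argument vectors; Step~1 orders them componentwise, but each side carries different weights. Second, the iteration $(\mathcal{B}^{\pi_{i+1}})^nQ^{\pi_i}\ge Q^{\pi_i}$ needs isotonicity.

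In general $f$ is not monotone. Indeed $\partial f/\partial q_m=\alpha_m\bigl(1-(q_m-f(\mathbf{q}))/\lambda\bigr)$, which is negative once $q_m-f(\mathbf{q})>\lambda$. For example, with $k=2$, $\lambda=1$, $q_1=0$, raising $q_2$ from $1$ to $5$ lowers $f$ from about $0.27$ to about $0.03$.

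Your frozen-belief fallback does not close this. Write $\eta_b$ for the objective under a fixed belief $b$, and let $b_i$ be the belief induced by $Q^{\pi_i}$. Freezing gives $\eta_{b_i}(\pi_{i+1})\ge\eta_{b_i}(\pi_i)=\eta(\pi_i)$. But $\eta(\pi_{i+1})$ is evaluated under the belief induced by $\pi_{i+1}$'s own $q$-values. Nothing guarantees $\eta(\pi_{i+1})\ge\eta_{b_i}(\pi_{i+1})$, which is what your chain would need. Rerunning the argument with $\hat{\mathcal{B}}^\pi$ does not help either, since its posterior $\xi_N(q_\tau)$ depends on $Q$ in the same way.

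\paragraph{The missing ingredient.} This is exactly what the paper's proof invokes: the isotonicity lemma (Lemma~\ref{appendix:lemma2}). If $\lambda\ge\Delta_{\max}$, the span of the ensemble $q$-values, then $q_m-f(\mathbf{q})\le q_m-\min_iq_i\le\Delta_{\max}\le\lambda$. Hence every partial derivative of $f$ is nonnegative and $\mathcal{B}^\pi$ is isotone. Bottom-$k$ selection is compatible with this, since order statistics are monotone in the inputs and $f$ is symmetric. The theorem therefore implicitly needs this condition on $\lambda$. With it, your Steps~1--2 go through for the nonlinear operator directly, and no freezing is needed. The paper itself just works with fixed points of $\mathcal{B}^\pi$ and reads the $\eta$-inequality off $Q^{\pi_{i+1}}\ge Q^{\pi_i}$.

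\paragraph{Comparison with the paper's route.} The paper takes $Q$ in the update to be the regularized optimal $Q^*$ of $\widehat{\mathcal{B}}^*$ with $\mu=\pi_i$ (Theorem~\ref{theorem:optim}), not $Q^{\pi_i}$. So $\pi_{i+1}$ solves the whole regularized problem, rather than taking one mirror-ascent step against $Q^{\pi_i}$. The paper then chains $Q^{\pi_{i+1}}\ge Q^{\pi_{i+1}}_{\pi_i}\ge Q^{\pi_i}$, where $Q^{\pi_{i+1}}_{\pi_i}$ is the Bregman-regularized value of $\pi_{i+1}$ with reference $\pi_i$. The first inequality uses $D_\psi\ge0$ plus isotonicity. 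The second uses regularized optimality of $\pi_{i+1}$ against the candidate $\pi_i$, whose regularizer vanishes. Your Step~1 is the state-wise analogue of that second inequality. Under either reading of $Q$, the isotonicity step cannot be skipped.
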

\begin{remark}[Convergence Analysis]
	 The policy sequence $\{\pi_i\}_{i\ge0}$ improves monotonically with respect to the objective function $\eta(\pi)$ defined in Eq.~\eqref{hybrid_belief:eq4}. Together with Theorem~\ref{theorem:optim}, the sequence will converge to the optimal policy $\pi^*=(\nabla\psi)^{-1}\big(\frac{1}{\beta}(Q^*-z)+\nabla\psi(\mu)\big)$.
\end{remark}

\subsection{Algorithmic Implementation}
The pseudocode is presented in Algorithm~\ref{algo}. The Q-function $Q_{\theta}$ and policy $\pi_{\phi}$ are parameterized by neural networks. Transition dynamics are modeled using a Gaussian distribution over next states and rewards \cite{yu2020mopo,kidambi2020morel,guo2022model}.

The prior belief $\mathbb{P}(\tau)$ over dynamics models provides a mechanism to incorporate prior knowledge about the environment. However, real-world scenarios often lack complete expert knowledge. An effective alternative is to employ a uniform prior over an ensemble of learned dynamics models \cite{chua2018deep}, which provides a standard baseline by naturally assigning lower confidence to OOD regions.  In our implementation, to computationally approximate this continuous sampling process, we establish a large candidate model pool of size $M$ (where $M \gg N$). The model ensemble $\mathcal{T}$ is then constructed by drawing $N$ independent samples from this pool according to the prior distribution. In practice, PhyB's inference remains cheap, but its training time scales linearly with the ensemble size $N$, as computing $\{q_{\tau_i}(s,a)\}_{i=0}^{N-1}$ requires a forward pass through all $N$ models. This bottleneck can be mitigated by distributed training across multiple GPUs.

We adopt the potential function $\psi(x)=\frac{1-\omega}{2}\Vert x\Vert_2^2+\omega\sum_{i=1}^dx_i\log x_i$ \cite{huang2022bregman}. Due to its twice differentiability, the gradient update in the policy improvement step can be approximated by the following expression: 
\begin{equation}
	\label{imple:eq1}
	\phi_{t+1} = \phi_t +\frac{1}{\beta} [\nabla^2 \psi(\phi_t)]^{-1} g_t.
\end{equation}
This formulation mirrors the natural gradient method \cite{kakade2001natural}, with $\nabla^2 \psi(\phi_t)$ serving as the Riemannian metric on a composite manifold of information-geometric and Euclidean structures. Since directly evaluating the inverse Hessian $[\nabla^2 \psi(\phi_t)]^{-1}$ is computationally intractable in deep RL settings, we instead implement the policy update via an equivalent mirror descent formulation (see Appendix~\ref{appendix:policy}) to maintain scalability.

\section{Experiments}
Our experiments address three research questions (RQs).

\textbf{Performance (RQ1):} How does PhyB compare to previous SoTA offline RL algorithms on standard benchmarks?

\textbf{Empirical Validation (RQ2):} How well do the experimental results align with the theoretical findings when neural networks are used as function approximators?

\textbf{Ablation Study (RQ3):} How does each design component of PhyB contribute to overall performance?

To answer the above questions, we conduct experiments on the D4RL benchmark \cite{fu2020d4rl} and a stochastic offline optimal liquidation benchmark \cite{bao2019multi,rigter2023one}. Detailed hyperparameters and experimental configurations are provided in Appendix~\ref{appendix:exp}. This appendix also comprehensively documents supplementary results, such as computational overhead benchmarks and sensitivity analyzes regarding the model pool size.
\subsection{Performance (RQ1)}
\begin{figure}[t!]
	\centering
	\begin{subfigure}[b]{0.48\columnwidth}
		\includegraphics[width=\linewidth]{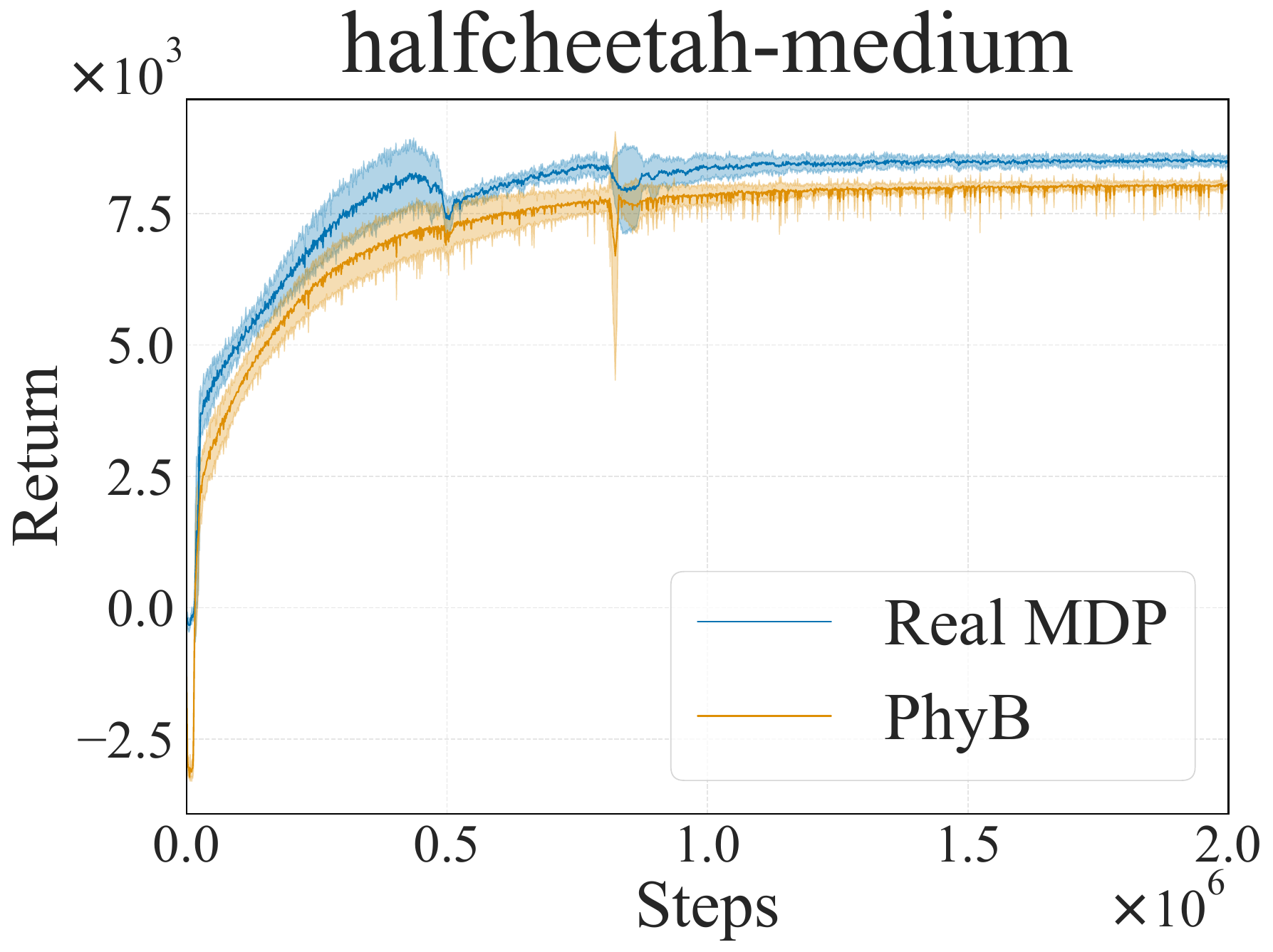} 
	\end{subfigure}
	\begin{subfigure}[b]{0.48\columnwidth}
		\includegraphics[width=\linewidth]{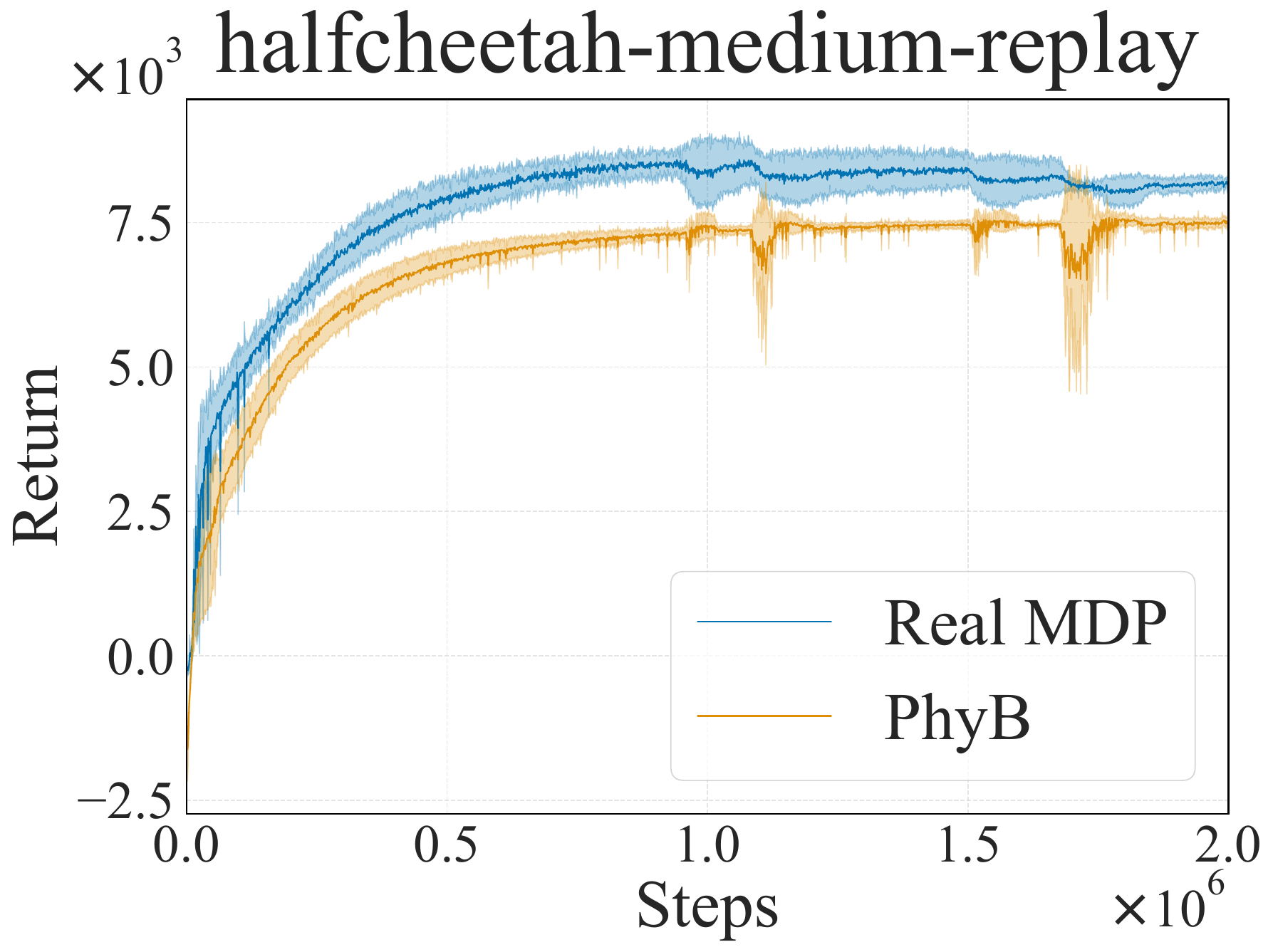} 
	\end{subfigure}
	\centering
	\begin{subfigure}[b]{0.48\columnwidth}
		\includegraphics[width=\linewidth]{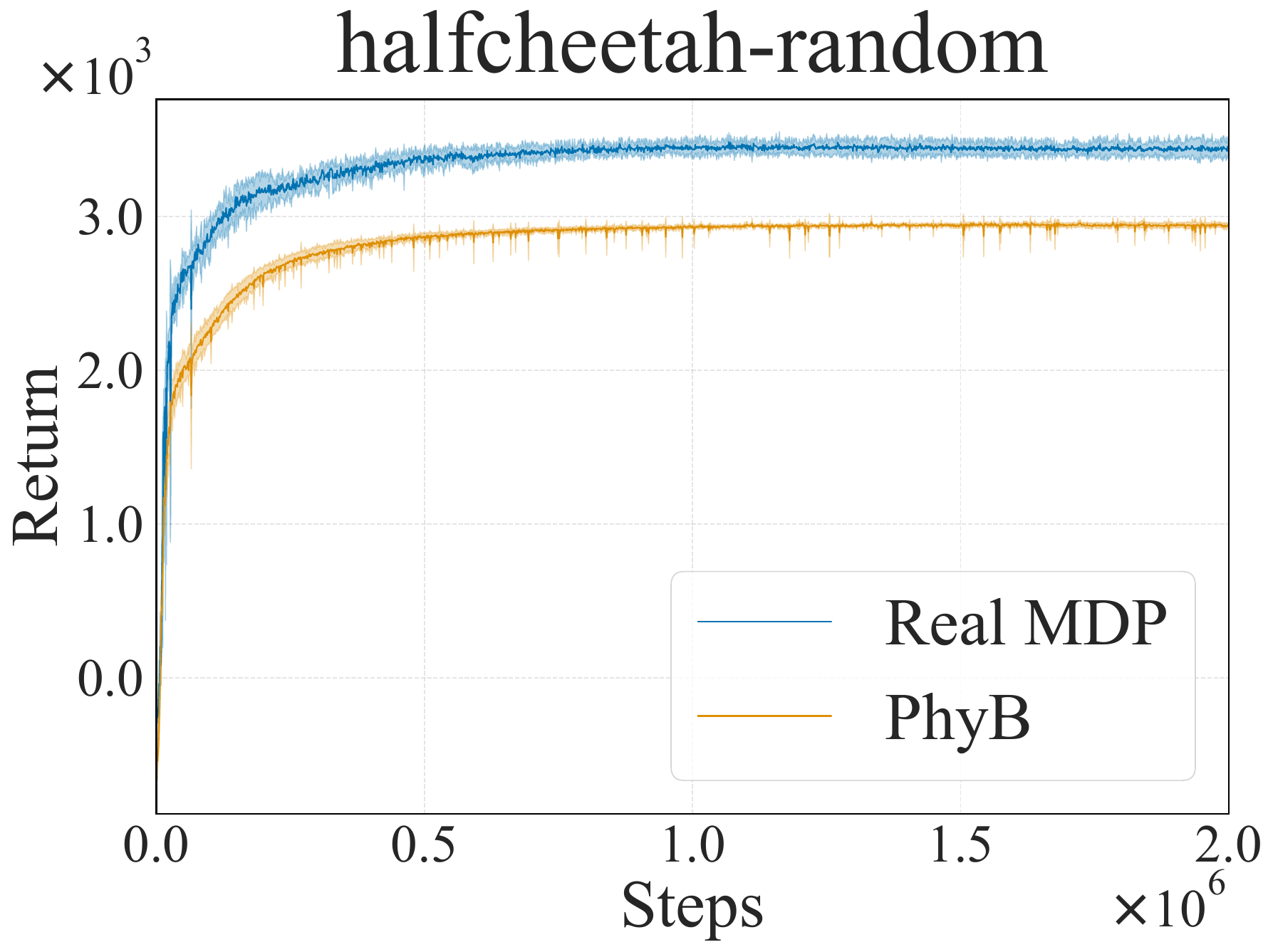} 
	\end{subfigure}
	\begin{subfigure}[b]{0.48\columnwidth}
		\includegraphics[width=\linewidth]{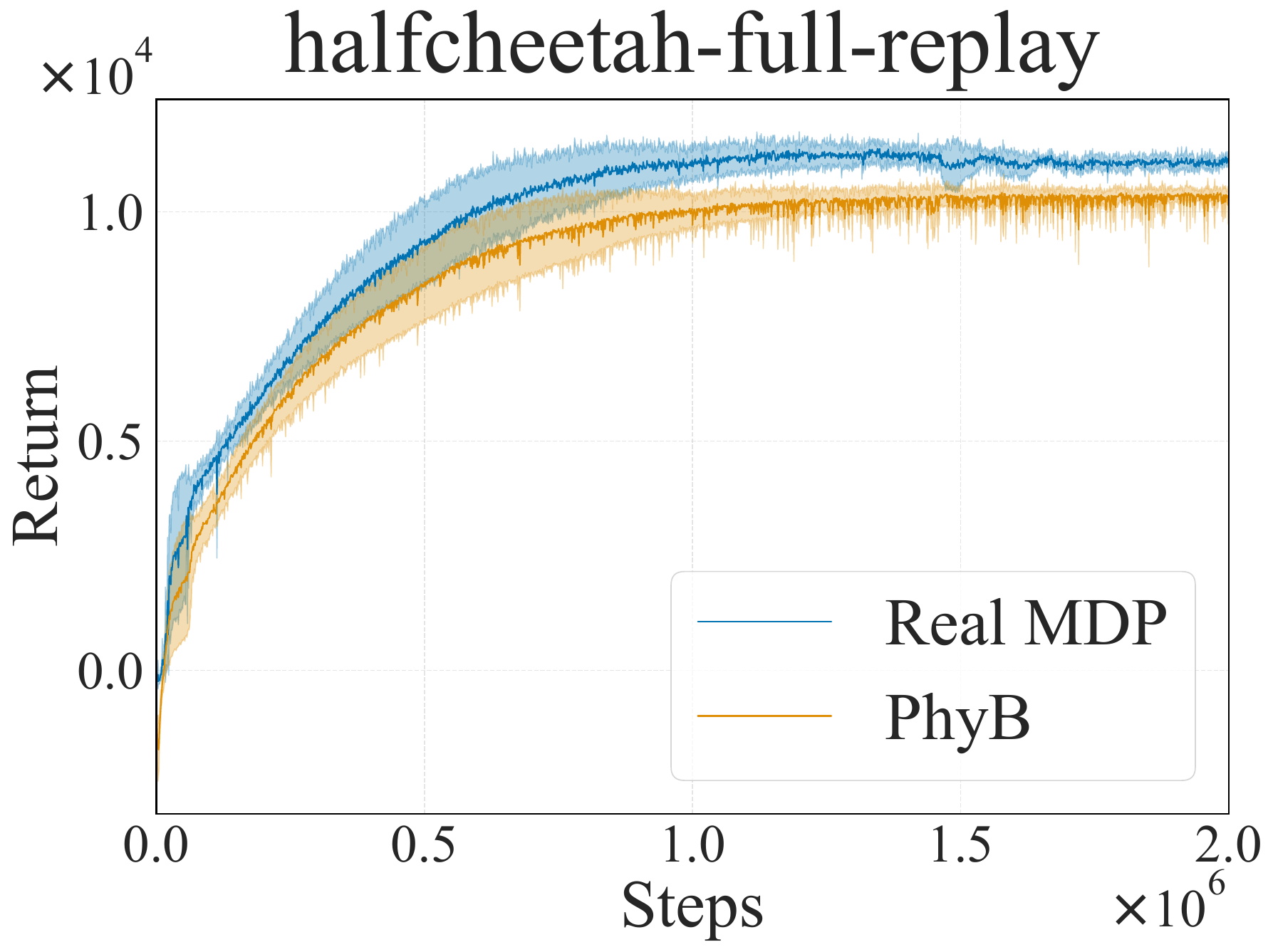} 
	\end{subfigure}
	
	\caption{Learning and evaluation curves in HalfCheetah-v2 environment.}
	\label{exp_fig1:pessimism}
\end{figure}
\begin{table}[t!]
	\caption{Impact of model ensemble size $N$ and pessimistic subset size $k$ on policy performance.}
	\label{tab2:monotonicity}
	\centering
	\small
	\setlength{\tabcolsep}{2.2pt} 
	\begin{tabular}{@{}c|@{\quad}ccc@{}}
		\toprule
		$(N,k)$ & Hopper-M-E & Walker2d-M-E & HalfCheetah-M-E \\
		\midrule
		(5,5) & 119.9$\pm$1.5 & 118.0$\pm$1.4 & 113.9$\pm$0.7\\
		(6,5) & 118.7$\pm$0.9 & 113.9$\pm$0.3 & 110.1$\pm$1.2\\
		(8,5) & 118.1$\pm$1.1 & 114.7$\pm$0.7 & 112.2$\pm$1.7 \\
		(12,5) & 112.6$\pm$2.2 & 108.2$\pm$1.5 & 106.4$\pm$2.3 \\
		(10,4) & 112.8$\pm$1.5 & 110.6$\pm1.2$ & 95.5$\pm$2.8\\
		(10,6) & 116.8$\pm$1.0 & 113.5$\pm$0.7 & 109.7$\pm$0.7 \\
		(10,8) & 119.4$\pm$1.4 & 115.0$\pm$1.0 & 111.2$\pm$1.1 \\
		\bottomrule
	\end{tabular}
\end{table}
\paragraph{Results on D4RL benchmarks.}
In response to RQ1, we evaluate PhyB against a range of SoTA methods, including model-free algorithms: EPQ \cite{NEURIPS2024_fdb11be1}, CQL \cite{kumar2020conservative}, FQL \cite{park2025flow}, TD3+BC \cite{fujimoto2021a}, DMG \cite{maodoubly}, and model-based approaches: MOReL \cite{kidambi2020morel}, RAMBO \cite{rigter2022rambo}, PMDB \cite{guo2022model}, ADM \cite{lin2025anystep}. As detailed in Table~\ref{tab:d4rl_results}, PhyB achieves superior performance on 12 datasets and remains competitive on the remaining 6. While reference results for the algorithms in the table are drawn from their original publications and \cite{guo2022model}, we reproduce any results not provided in those works (such as ADM, DMG,TD3+BC and FQL) by implementing the hyperparameter configurations from the original papers. The results reveal that PhyB's performance improves substantially with the inclusion of even a limited amount of high-quality data, as evidenced by its strong performance on the ``medium," ``medium-expert," and ``full-replay" datasets. We attribute this improvement to PhyB's multimodal posterior belief, which effectively leverages the generalization capacity of the dynamics models trained on these datasets. See Appendix~\ref{appendix:exp} for more results.

\begin{figure}[t!]
	\centering
	\begin{subfigure}[b]{0.48\columnwidth}
		\includegraphics[width=\linewidth]{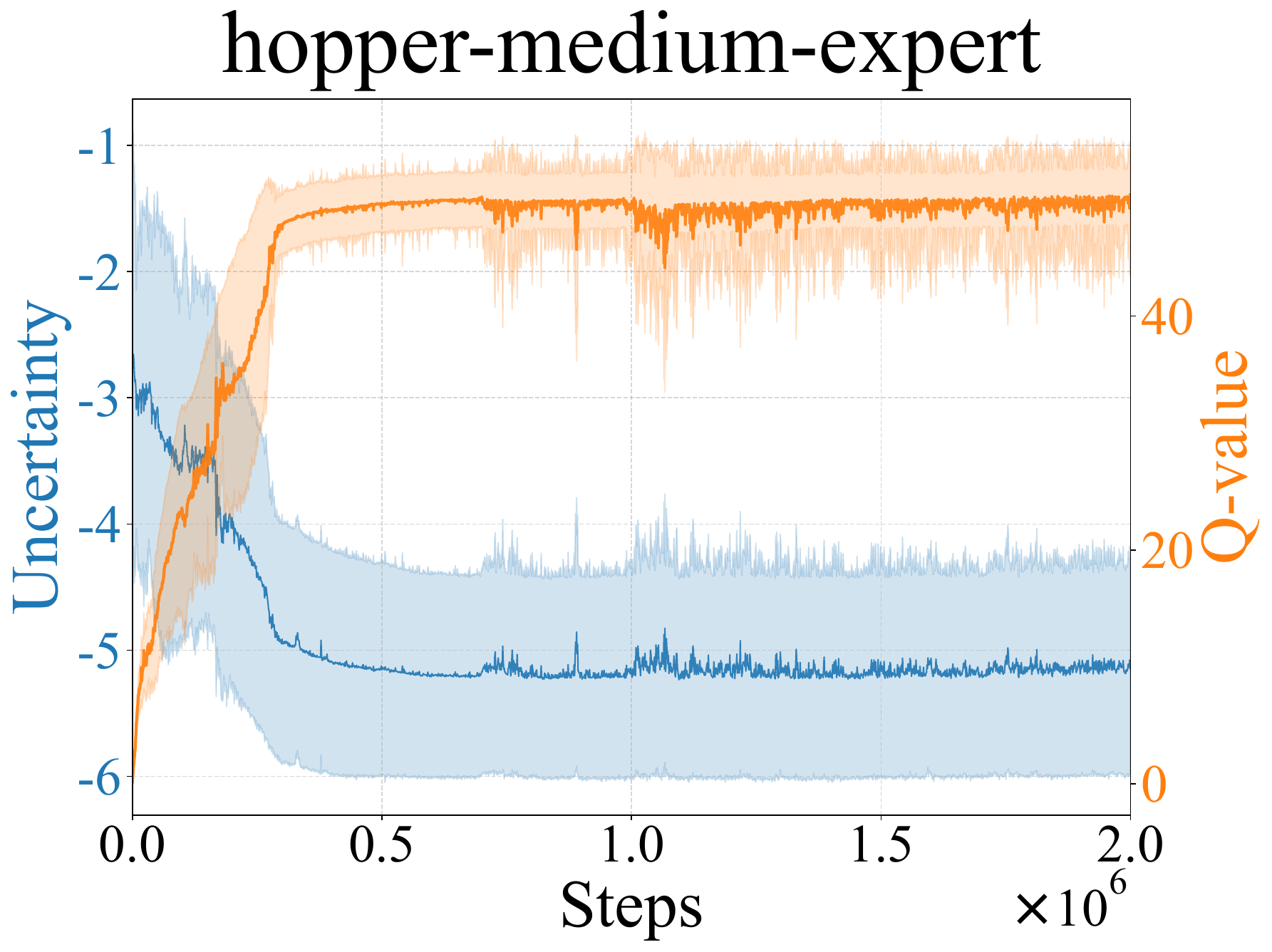} 
	\end{subfigure}
	\begin{subfigure}[b]{0.48\columnwidth}
		\includegraphics[width=\linewidth]{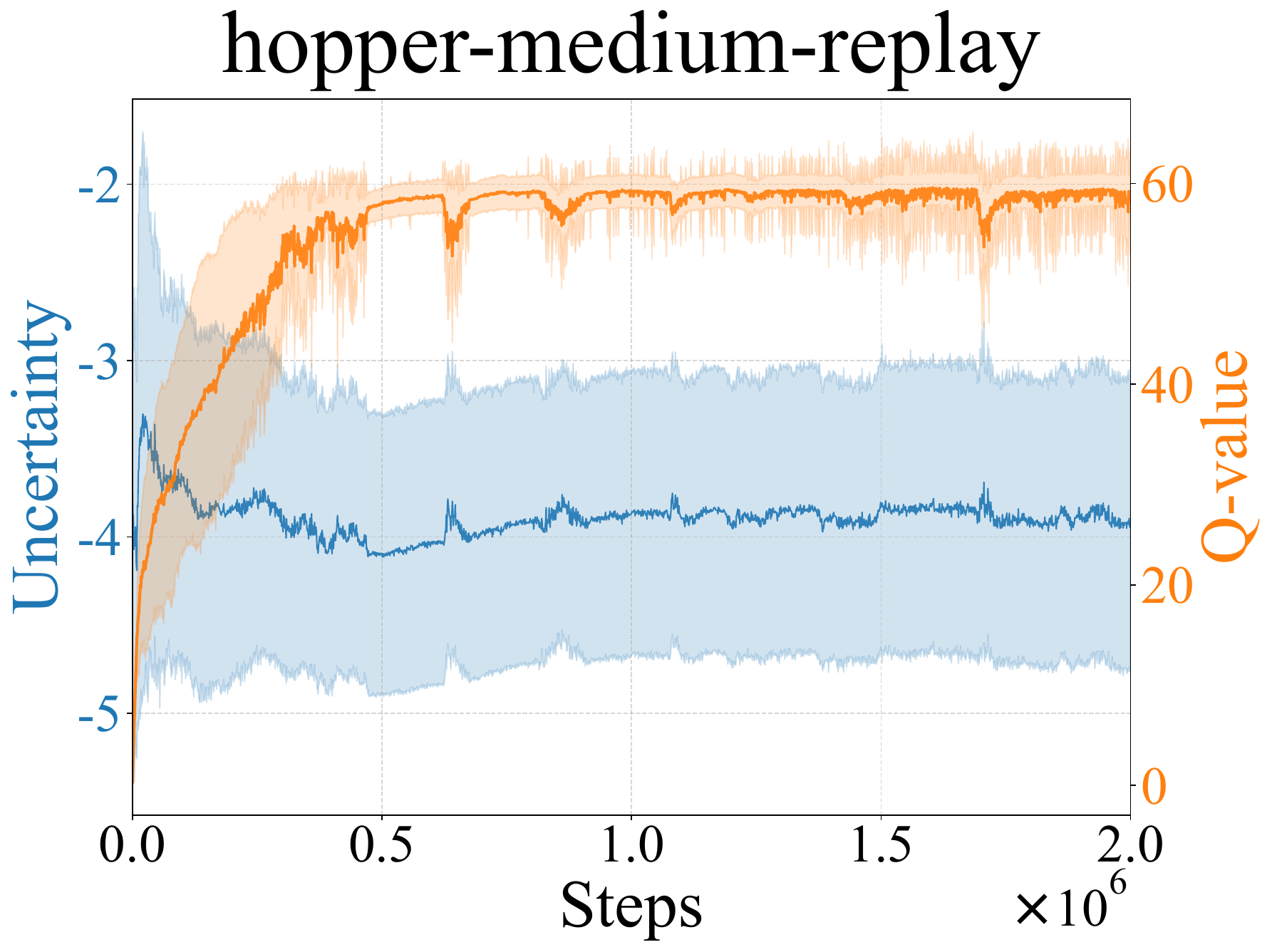} 
	\end{subfigure}
	\centering
	\begin{subfigure}[b]{0.48\columnwidth}
		\includegraphics[width=\linewidth]{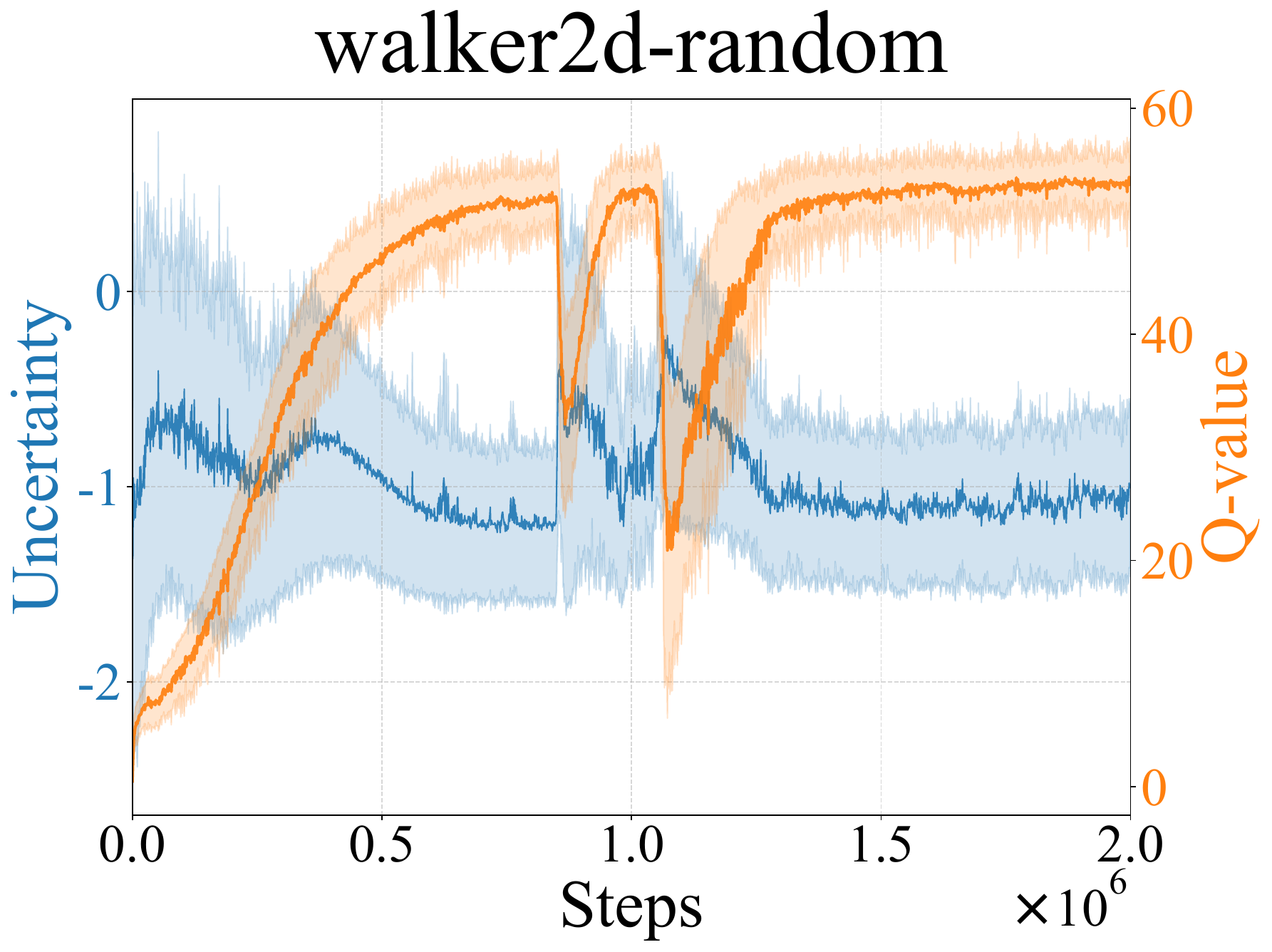} 
	\end{subfigure}
	\begin{subfigure}[b]{0.48\columnwidth}
		\includegraphics[width=\linewidth]{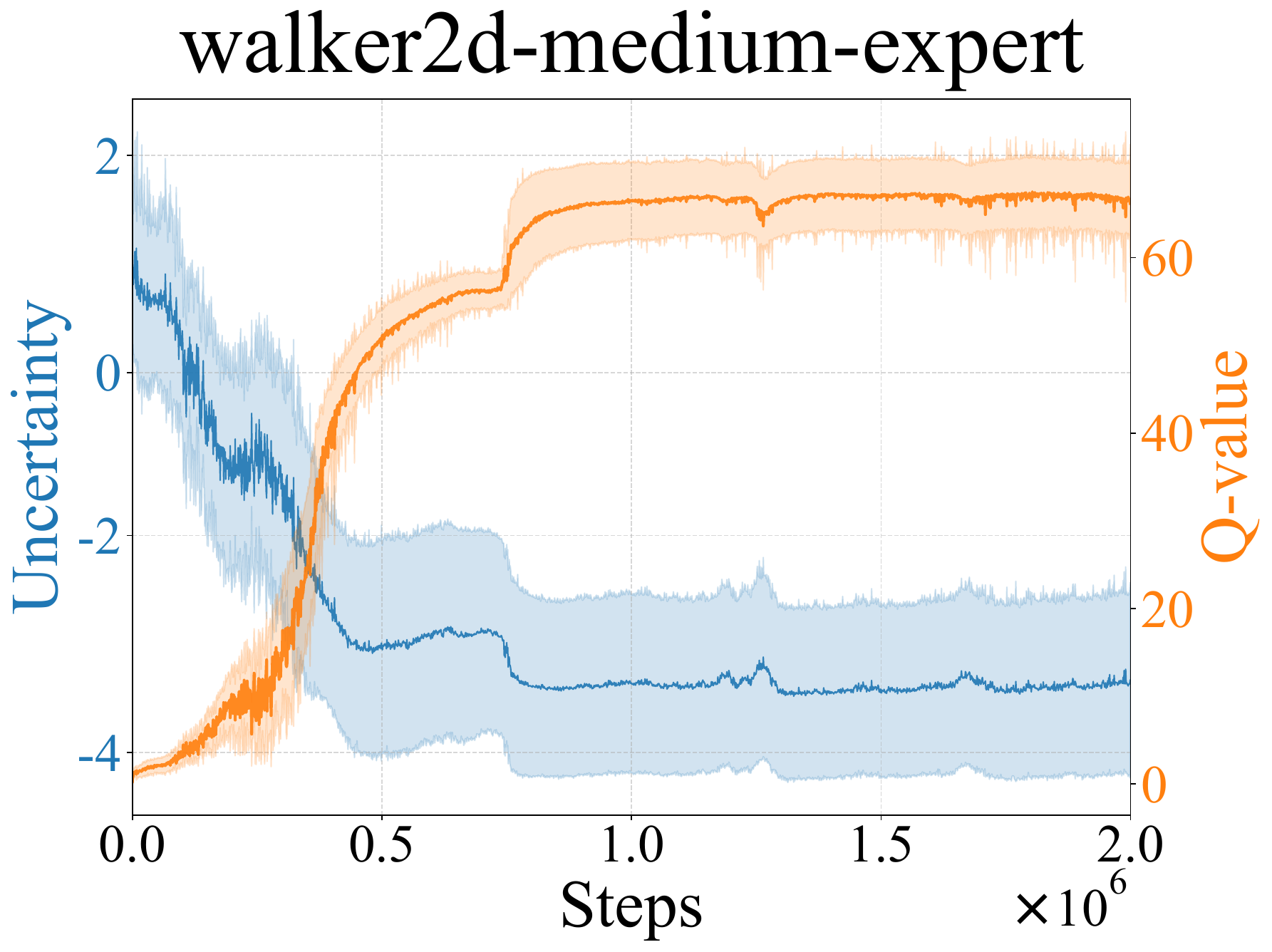} 
	\end{subfigure}
	
	\caption{Evolution of Q-values and uncertainty for encountered state-action pairs during training. For a given $(s,a)$, uncertainty is quantified as: $\log\big(\rm{std}(\mathbb{E}_{s'\sim\tau}[s'])\big),\tau\in\widetilde{\mathcal{T}}$.}
	\label{exp_fig2:uncertainty}
\end{figure}
\begin{table}[t!]
	\caption{Sensitivity analysis of the hyperparameter $\omega$. ``M" denotes the ``medium" dataset type.}
	\label{tab4:omega}
	\centering
	\small
	\begin{tabular}{@{}c|@{\quad}ccc@{}}
		\toprule
		$\omega$ & Hopper-M & Walker2d-M & HalfCheetah-M \\
		\midrule
		0.0 & 104.3$\pm$3.6 & 92.0$\pm$3.1 & 71.9$\pm$3.1\\
		0.1 & 105.6$\pm$4.9 & 92.8$\pm$4.2 & 71.7$\pm$2.1 \\
		0.3 & 107.0$\pm$1.6 & 93.5$\pm$0.6 & 72.5$\pm$1.6\\
		0.5 & 112.2$\pm$1.2 & 98.2$\pm$0.8 & 73.4$\pm$1.1 \\
		0.7 & 107.7$\pm$0.9 & 86.4$\pm$0.8 & 71.9$\pm$1.2\\
		0.9 & 109.4$\pm$2.0 & 95.5$\pm$8.5 & 74.5$\pm$1.8 \\
		1.0 & 104.6$\pm$6.1 & 92.3$\pm$8.7 & 70.3$\pm$3.5\\
		\bottomrule
	\end{tabular}
\end{table}
\begin{table*}[ht!]
	\centering
	\caption{Ablation study on the effect of the convex combination. ``HC" denotes the ``HalfCheetah", ``Ho" denotes the ``Hopper", ``W2d" denotes the ``Walker2d". We observe a substantial drop in performance when this component is removed, highlighting its critical role.}
	\label{tab:ablation_d4rl_scores}
	\small
	\setlength{\tabcolsep}{2.pt}
	\adjustbox{width=0.99\textwidth}
	{
		\begin{tabular}{l|ccc|ccc|ccc|ccc}
			\toprule
			\multirow{2}{*}{\textbf{Dataset}} & \multicolumn{3}{c|}{\textbf{Medium}} & \multicolumn{3}{c|}{\textbf{Expert}} & \multicolumn{3}{c|}{\textbf{Medium Expert}} & \multicolumn{3}{c}{\textbf{Medium Replay}} \\
			\cmidrule(lr){2-4} \cmidrule(lr){5-7} \cmidrule(lr){8-10} \cmidrule(lr){11-13} 
			& \small{HC} & \small{Ho} & \small{W2d} & \small{HC} & \small{Ho} & \small{W2d} & \small{HC} & \small{Ho} & \small{W2d} & \small{HC} & \small{Ho} & \small{W2d} \\
			\midrule
			Score  & 68.3$\pm$2.8 & 97.6$\pm$9.4 & 81.3$\pm$11.8 & 106.9$\pm$2.2 & 109.7$\pm$6.5 & 107.5$\pm$2.5 & 97.0$\pm$3.9 & 104.3$\pm$10.1 & 99.1$\pm$3.6 & 67.5$\pm$1.6 & 97.4$\pm$7.5 & 67.1$\pm$4.3 \\
			\bottomrule
		\end{tabular}
	}
\end{table*}
\begin{table*}[t!]
	\centering
	\caption{Ablation study on the effect of the $\lambda$. ``HC" denotes the ``HalfCheetah", ``Ho" denotes the ``Hopper", ``W2d" denotes the ``Walker2d". We report the percentage improvement or degradation compared to the standard $\lambda=0.33$ setting.}
	\label{tab:sentivity_lambda}
	\small
	\setlength{\tabcolsep}{4.8pt}
	\adjustbox{width=0.99\textwidth}
	{
		\begin{tabular}{l|ccc|ccc|ccc|ccc}
			\toprule
			\multirow{2}{*}{\textbf{Dataset}} & \multicolumn{3}{c|}{\textbf{Random}} & \multicolumn{3}{c|}{\textbf{Medium}} & \multicolumn{3}{c|}{\textbf{Expert}} & \multicolumn{3}{c}{\textbf{Medium Expert}} \\
			\cmidrule(lr){2-4} \cmidrule(lr){5-7} \cmidrule(lr){8-10} \cmidrule(lr){11-13} 
			& \small{HC} & \small{Ho} & \small{W2d} & \small{HC} & \small{Ho} & \small{W2d} & \small{HC} & \small{Ho} & \small{W2d} & \small{HC} & \small{Ho} & \small{W2d} \\
			\midrule
			$\lambda=0.2$  & $\uparrow$2.95\%   & $\downarrow$0.08\% & $\downarrow$4.81\%
			& $\uparrow$3.34\% & $\downarrow$0.46\% & $\uparrow$0.22\% 
			& $\downarrow$0.56\% & $\downarrow$1.45\% & $\downarrow$0.44\% 
			& $\downarrow$1.22\% & $\uparrow$1.43\% & $\downarrow$0.82\%\\
			\midrule
			$\lambda=1.0$  & $\uparrow$0.20\% & $\uparrow$0.06\% & $\downarrow$5.42\%
			& $\uparrow$0.93\% & $\uparrow$0.21\% & $\uparrow$2.93\%
			& $\uparrow$2.73\% & $\uparrow$1.13\% & $\uparrow$4.79\% 
			& $\uparrow$2.18\% & $\downarrow$0.49\% & $\downarrow$0.17\% \\
			\bottomrule
		\end{tabular}
	}
\end{table*}
\paragraph{Results on offline optimal liquidation.} 
We evaluate PhyB on a custom stochastic currency liquidation task where an agent must liquidate an asset within a fixed horizon $T$ while facing a randomly fluctuating exchange rate. The high stochasticity of this environment, combined with an offline dataset generated by a random policy, presents a significant challenge to traditional offline RL methods. We defer the detailed dataset profile and experimental configurations to Appendix~\ref{appendix:exp}. 

We compare our method against SoTA model-free (ORAAC \cite{risk2021}, IQL \cite{kostrikov2022offline}) and model-based (MOPO \cite{yu2020mopo}, COMBO \cite{yu2021combo}) baselines. As shown in Table~\ref{tab:currency_results}, these strong baselines largely fail to learn effective policies in this stochastic setting, whereas PhyB demonstrates robust and superior performance.

\subsection{Empirical Validation (RQ2)}
\paragraph{Pessimism.} 
As shown in Figure~\ref{exp_fig1:pessimism}, the learning curve of PhyB closely tracks and consistently lower bounds the true return. This empirical evidence supports the Theorem~\ref{theorem2}. Furthermore, the observed near-monotonic improvement validates the monotonic property established in Theorem~\ref{theorem:improve}.

\paragraph{Uncertainty quantification.}
We quantify uncertainty for $(s,a)$ via the log standard deviation of next-state predictions over the pessimistic subset $\widetilde{\mathcal{T}}$ (Figure~\ref{exp_fig2:uncertainty}). The figure shows that spikes in uncertainty consistently coincide with drops in Q-values, indicating that our method penalizes high epistemic uncertainty actions and favors in-distribution behavior. We hypothesize this uncertainty spike occurs when the policy encounters an OOD state-action pair, indicating that it has strayed beyond the support of the original dataset.

\paragraph{Monotonicity.}
Table~\ref{tab2:monotonicity} shows the effect of the model ensemble size $N$ and the pessimistic subset size $k$ on final policy performance. Results align with Theorem~\ref{theorem3}, showing performance is non-increasing in $N$ and non-decreasing in $k$. Since the optimal $(N, k)$ configuration varies across datasets, we set $N=10$ and $k=5$ for a fair comparison.

\subsection{Ablation Study (RQ3)}
\paragraph{Sensitivity analysis of the hyperparameter $\omega$.}
We conduct a sensitivity analysis on $\omega$ (where $\omega=1$ recovers KL-regularization). Table~\ref{tab4:omega} demonstrates PhyB's robustness: while $\omega=0.5$ minimizes the standard deviation, this metric increases gradually as $\omega$ approaches the boundaries of 0 or 1.Given the overall stability of the method, we adopt a fixed default of $\omega=0.9$ for all experiments.
\paragraph{Effect of the convex combination.}
The construction of the posterior belief hinges on the coefficient $\alpha$, which is determined by solving Eq.~\eqref{hybrid:opt_prob}. To evaluate the impact of the proposed convex combination method, we conduct an ablation study comparing it with a simplified variant that selects a single model directly from the pessimistic subset $\widetilde{\mathcal{T}}$. As shown in Table~\ref{tab:ablation_d4rl_scores}, the simplified variant exhibits a significant performance degradation. This result confirms the efficacy of the convex combination approach and validates the overall design of PhyB.
\paragraph{Sensitivity analysis of the hyperparameter $\lambda$.}
The hyperparameter $\lambda$ governs the concentration of the posterior belief: a smaller $\lambda$ shifts more weight toward highly pessimistic models, while a larger $\lambda$ leads to a more uniform distribution. Our sensitivity analysis in Table~\ref{tab:sentivity_lambda} reveals two insights regarding the hyperparameter $\lambda$. First, performance remains stable across a reasonable range of hyperparameter values, demonstrating that PhyB is robust to parameter variations and circumvents extensive tuning. Second, we identify a trade-off governed by data quality: lower-quality datasets benefit from a smaller $\lambda$ to enforce higher pessimism against OOD risks, whereas in high-quality domains, an overly small $\lambda$ induces excessive conservatism that leads to performance decay. This observation confirms that $\lambda$ effectively balances empirical reward maximization with robust dynamics regularization based on data reliability.


\section{Related Works}
\subsection{Uncertainty in Model-Based Offline RL}
Model-based offline RL trains a dynamics model on an offline dataset and then uses synthetic rollouts from this model to optimize policies \cite{luo2024survey}. However, due to inevitable model errors, directly optimizing policies using the learned model often leads to substantial performance degradation \cite{kidambi2020morel}. Existing methods typically apply pointwise penalties based on epistemic uncertainty quantification (e.g., the standard deviation of ensemble predictions) to prevent the policy from exploring OOD regions \cite{yu2020mopo,sun2023model}. Prediction error in OOD regions is typically highly correlated with epistemic uncertainty \cite{guo2022model}. This relationship motivates several methods to explicitly penalize Q-value estimates for synthetic OOD samples generated by the dynamics model \cite{yu2021combo,park2025model}. As penalty designs rely on epistemic uncertainty, inaccurate quantification often induces performance degradation and weak generalization \cite{park2024value,Qiao_Lyu_Jiao_Liu_Li_2025}. Given that increased data coverage naturally diminishes epistemic uncertainty, recent works focus on generating diverse or high-fidelity trajectories to achieve more robust uncertainty estimation \cite{zhai2024optimistic,lin2025anystep}.

While existing research focus on quantifying sample-level epistemic uncertainty, they overlook the uncertainty of the transition dynamics. Incorporating the uncertainty of the transition dynamics and analyzing its effect on long-term cumulative reward is non-trivial in the standard RL framework, as the dynamics model is learned independently of the value-maximization objective. We address this decoupling by treating the dynamics model as a random variable rather than a deterministic point estimate. By integrating this uncertainty directly into the optimization objective, our approach ensures that the learned value function is robust to the uncertainty of the transition dynamics.
\subsection{Bayesian RL}
Bayesian RL treats transition dynamics as random variables rather than deterministic point estimates \cite{lin2026robust}. This probabilistic formulation not only offers a formal framework to quantify epistemic uncertainty \cite{derman2020bayesian,wei2023bayesian}, but also emerges as a pessimism-free paradigm in the offline RL \cite{ni2026longhorizonmodelbasedofflinereinforcement,lin2026offlinepolicyoptimizationposterior}. Despite its theoretical advantages, policy optimization in Bayesian RL involves a composite objective over an infinite-dimensional function space \cite{jiang2025revisiting}, rendering posterior inference intractable. Early works address this objective function using mixed-integer linear programming \cite{lobo2020soft} or Monte Carlo tree search \cite{rigter2021risk}. However, these methods remain computationally intractable and scale poorly to high-dimensional environments. To improve computational tractability, recent works adopt robust MDP formulations to account for worst-case transitions \cite{rigter2022rambo,dong2024online}. Although robust MDPs effectively mitigate model exploitation \cite{ghosh2022offline}, they implicitly rely on Dirac likelihoods that concentrate probability mass on the most adversarial dynamics, leading to excessive pessimism \cite{guo2022model}. Subsequent research addresses the scalability issue by assuming a fixed posterior distribution derived from offline datasets \cite{rigter2023one}. While these simplifications facilitate practical implementation, they sacrifice the inherent adaptability and uncertainty quantification in Bayesian RL.

Model ensembles provide diverse information that is critical for offline RL \cite{ijcai2024p427}. Unlike prior methods, our work utilizes a multimodal likelihood to integrate all candidate models without imposing restrictive assumptions on the form of the posterior. To handle the multimodal geometry while maintaining computational efficiency, we decompose the policy optimization task into a sequence of subproblems. We then develop a Bregman-regularized iterative algorithm to solve these subproblems, ensuring monotonic policy improvement until convergence.
\section{Discussion and Conclusion}
In this work, we present the Posterior Hybrid Bayesian Belief (PhyB) framework along with a regularized iterative algorithm. This approach addresses the challenge of policy optimization in Bayesian RL without relying on restrictive assumptions. Our experimental results demonstrate that PhyB consistently outperforms prior methodologies across diverse benchmarks. Furthermore, the empirical findings align with our theoretical analysis, confirming the effectiveness of the proposed method. Conceptually, PhyB aligns with risk-averse RL through its treatment of uncertainty. By maintaining a distribution over a model ensemble, PhyB formally quantifies epistemic uncertainty in a unified manner. Selecting the $k$ lowest model-predicted value estimates operates as a pessimistic pruning mechanism, directly mirroring Conditional Value-at-Risk (CVaR) which optimizes for the lower tail of a probability distribution. Unlike heuristic reward penalties that rely on manually tuned coefficients, this implicit CVaR mechanism inherently adapts to the degree of model disagreement. In OOD regions where predictions diverge, the lower-tail evaluation automatically intensifies to suppress value overestimation. This integration of uncertainty quantification and tail-risk optimization provides a framework for stable policy update under distribution shifts.



\section*{Acknowledgment}
We thank the anonymous reviewers for their valuable feedback on an early version of this paper. This work was supported by the Major Program of the National Natural Science Foundation of China (No. T2293723) and the Young Scientists Fund of the National Natural Science Foundation of China (No. 62506335).
\section*{Impact Statement}
This paper presents work whose goal is to advance the field of Machine
Learning. There are many potential societal consequences of our work, none of which we feel must be specifically highlighted here.


\bibliography{ref_phyB}
\bibliographystyle{icml2026}

\newpage
\appendix
\onecolumn
\section{Proofs}
\label{appendix:proof}
\setcounter{theorem}{0}
\setcounter{definition}{0}
\setcounter{proposition}{0}
\begin{definition}[Pessimistic Subset]
	Given a finite model ensemble $\mathcal{T}=\{\tau_0, \dots, \tau_{N-1}\}$ of size $N$, where each $\tau_i$ is sampled i.i.d. from the prior $\mathbb{P}(\tau)$, the pessimistic subset $\widetilde{\mathcal{T}}\subseteq\mathcal{T}$ is formed by selecting the $k$ models that correspond to the bottom-$k$ $q_{{\tau_i}}(s,a)=\mathbb{E}_{{\tau_i,\pi}}[Q(s',a')]$ values.
\end{definition}
\begin{proposition}[Likelihood Ratio]
	\label{proposition1}
	Let $N$ and $k$ denote the sizes of the model ensemble $\mathcal{T}$ and pessimistic subset $\widetilde{\mathcal{T}}$, respectively. Let $\mathcal{F}$ denote the cumulative distribution function of the $q_{{\tau_i}}(s,a)$. Then the likelihood is given by:
	\begin{equation*}
		\begin{aligned}
			\xi(q_\tau)=&\sum_{i=0}^{k-1}\alpha_i\frac{N!}{i!(N-i-1)!}
			[\mathcal{F}(q_\tau)]^i[1-\mathcal{F}(q_\tau)]^{N-i-1},
		\end{aligned}
	\end{equation*}
	where the weights $\tilde{\alpha}=\{\alpha_i|\sum_{i=0}^{k-1}\alpha_i=1,\alpha_i\ge 0\}$ are the solution to the following entropy-regularized problem:
	\begin{equation*}
		\begin{aligned}
			&\min_{\tilde{\alpha}}\sum_{i=0}^{k-1} \alpha_i q_{\tau_i}(s, a) + \lambda \alpha_i\log\alpha_i .
		\end{aligned}
	\end{equation*}
\end{proposition}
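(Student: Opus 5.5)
The plan has two parts: derive the weights $\alpha_i$ from the entropy-regularized problem, then identify the likelihood ratio with the density of order statistics. The ensemble models are i.i.d.\ draws from the prior, so the values $q_{\tau_0}(s,a),\dots,q_{\tau_{N-1}}(s,a)$ are i.i.d.\ with CDF $\mathcal{F}$. Once they are sorted, the $i$-th model of the pessimistic subset is the $(i+1)$-th order statistic $q_{(i)}$. The goal is a nonnegative function $\xi(q_\tau)$ satisfying
\[
\mathbb{E}_{\widetilde{\mathbb{P}}}[q_\tau]=\mathbb{E}_{\mathbb{P}}[\xi(q_\tau)q_\tau]=\sum_{i=0}^{k-1}\mathbb{E}_{\mathcal{T}}[\alpha_i q_{(i)}].
\]
This reduces the claim to computing the law of each $q_{(i)}$ relative to the law of a single $q_\tau$ under the prior.

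\textbf{Step 1: the weights.} Form the Lagrangian of the problem in Eq.~\eqref{hybrid:opt_prob} with the constraint $\sum_i\alpha_i=1$. Setting its derivative to zero gives $q_{\tau_i}+\lambda(\log\alpha_i+1)+\nu=0$, so $\alpha_i\propto\exp(-q_{\tau_i}/\lambda)$. Nonnegativity then holds automatically, and because the objective is strictly convex on the simplex this stationary point is the unique minimizer.

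\textbf{Step 2: order-statistic densities.} Here I use the standard fact that $q_{(i)}$ has density
\[
f_{(i)}(q)=\frac{N!}{i!(N-i-1)!}\,\mathcal{F}(q)^i\,[1-\mathcal{F}(q)]^{N-i-1}f(q).
\]
Combinatorially, one of the $N$ values lands at $q$, $i$ values fall below it, and $N-i-1$ fall above it. If $\mathcal{F}$ is not absolutely continuous, I would instead phrase the argument as a Radon--Nikodym derivative of $\mathbb{P}$ composed with $\mathcal{F}$, or transform to uniforms through $U=\mathcal{F}(q)$. Either way, $\mathbb{E}[q_{(i)}]=\mathbb{E}_{\mathbb{P}}\bigl[\tfrac{N!}{i!(N-i-1)!}\mathcal{U}(\mathcal{F}(q_\tau))\,q_\tau\bigr]$.

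\textbf{Step 3: assembling $\xi$.} Multiplying by $\alpha_i$ and summing over $i<k$ yields exactly the stated $\xi$. Each Beta-type kernel integrates to $1$ against $\mathcal{F}$, so $\mathbb{E}_{\mathbb{P}}[\xi]=\sum_i\alpha_i=1$ and $\xi\ge 0$. Hence $\xi$ is a valid likelihood ratio, and $\widetilde{\mathbb{P}}=\xi\,\mathbb{P}$ is a probability measure.

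\textbf{Main obstacle.} The difficulty is the dependence of $\alpha_i$ on the realized values $q_{(i)}$ themselves. Strictly, $\mathbb{E}[\alpha_i q_{(i)}]\neq\alpha_i\mathbb{E}[q_{(i)}]$, since the weights are random functions of the same order statistics being averaged. I would handle this by treating $\alpha$ as fixed while reweighting, conditioning on the weights as the statement implicitly does. Equivalently, one can note that the identity holds pointwise conditional on the vector of $\alpha$'s, as long as the weights are regarded as determined before the reweighting. I would state this interpretation explicitly as an assumption of the proof rather than claim that the identity holds unconditionally.
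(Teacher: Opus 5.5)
Your proposal is correct and follows essentially the same route as the paper. The weights come from the Lagrangian stationarity condition, as in Appendix B, and $\xi$ is identified by writing $\mathbb{E}[\sum_i\alpha_i q_{(i)}]$ through the order-statistic densities and matching the integrand against $\int q\,\xi(q)f(q)\,dq$. Your check that $\mathbb{E}_{\mathbb{P}}[\xi]=1$ is a small addition the paper omits.

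On the obstacle you flagged: the paper silently pulls $\alpha_i$ out of the expectation, so your assumption that the weights are deterministic constants is exactly what its argument uses. Do not phrase this as conditioning on the realized softmax weights, because that would change the law of the $q_{(i)}$. If you want an exact identity with random weights, replace $\alpha_i$ inside $\xi$ by $\mathbb{E}[\alpha_i\mid q_{(i)}=q]$.
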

\begin{proof}[Proof of Proposition~\ref{proposition1}]
	Let $q_{(0)} \le q_{(1)} \le \dots \le q_{(N-1)}$ denote the order statistics of the scalar Q-values $\{q_{\tau_i}(s,a)\}_{i=0}^{N-1}$ generated by the ensemble. The probability density function (PDF) of the $i$-th order statistic at value $q$ is given by:
	\begin{equation*}
		f_i(q) = \frac{N!}{i!(N-i-1)!} [\mathcal{F}(q)]^i [1-\mathcal{F}(q)]^{N-i-1} f(q),
	\end{equation*}
	where $f(q)$ and $\mathcal{F}(q)$ are the PDF and CDF of the prior belief, respectively.
	
	Our goal is to identify a weighting function $\xi(q)$ such that the weighted sum of expected order statistics equals the expectation under a twisted belief $\widetilde{\mathbb{P}}$. We begin our analysis with the LHS (the pessimistic objective):
	\begin{equation*}
		\text{LHS} = \mathbb{E}\left[ \sum_{i=0}^{k-1} \alpha_i q_{(i)} \right] = \sum_{i=0}^{k-1} \alpha_i \int_{-\infty}^{\infty} q \cdot f_i(q) \, dq.
	\end{equation*}
	Substituting the definition of $f_i(q)$ into the equation:
	\begin{equation*}
		\begin{aligned}
			\text{LHS} &= \int_{-\infty}^{\infty} q \cdot \left( \sum_{i=0}^{k-1} \alpha_i \frac{N!}{i!(N-i-1)!} [\mathcal{F}(q)]^i [1-\mathcal{F}(q)]^{N-i-1} \right) f(q) \, dq.
		\end{aligned}
	\end{equation*}
	
	Regarding the RHS, we formulate the new belief $\widetilde{\mathbb{P}}$ through a likelihood ratio $\xi(q)$ acting on the value space. Let  $\Phi: \tau \mapsto q_\tau$, then:
	\begin{equation*}
		\begin{aligned}
			\text{RHS}=\mathbb{E}_{\tau \sim \widetilde{\mathbb{P}}}[q_\tau] &= \int q_\tau \, d\widetilde{\mathbb{P}}(\tau) \\
			&= \int q_\tau \cdot \frac{d\widetilde{\mathbb{P}}}{d\mathbb{P}}(\tau) \, d\mathbb{P}(\tau) \\
			&= \int \Phi(\tau) \cdot \xi_N(\Phi(\tau)) \, d\mathbb{P}(\tau) \\
			&= \int q \cdot \xi_N(q) f(q)\, dq.
		\end{aligned}
	\end{equation*}
	
	By matching the integral forms of LHS and RHS, we identify the explicit form of the likelihood ratio:
	\begin{equation*}
		\xi_N(q) = \sum_{i=0}^{k-1} \alpha_i \frac{N!}{i!(N-i-1)!} [\mathcal{F}(q)]^i [1-\mathcal{F}(q)]^{N-i-1}.
	\end{equation*}
	
	Specifically, we define the Radon-Nikodym derivative through the evaluation map $\Phi: \tau \mapsto q_\tau$:
	\begin{equation*}
		\frac{d\widetilde{\mathbb{P}}}{d\mathbb{P}}(\tau) :=(\xi_N \circ \Phi)(\tau) = \xi_N(\Phi(\tau)) = \xi_N(q_\tau).
	\end{equation*}
	
	To analyze the property of the weights, let $\Xi(u) = u^i (1-u)^{N-i-1}$ with $u = \mathcal{F}(q)$. The derivative with respect to $u$ is:
	\begin{equation*}
		\frac{d\Xi}{du} = u^{i-1}(1-u)^{N-i-2}(i - u(N-1)).
	\end{equation*}
	
	We observe that $\frac{d\Xi}{du} \ge 0$ for $u \le \frac{i}{N-1}$ and $\frac{d\Xi}{du} \le 0$ otherwise. Thus, the weighting term achieves its maximum when the CDF value is $\frac{i}{N-1}$. This corresponds to the Q-value $q_{\tau^*}$ being the $\frac{i}{N-1}$-quantile of the prior distribution.
\end{proof}
\begin{theorem}
	\label{appendix:proposition2}
	Hybrid Belief Bellman Evaluation Operator $\mathcal{B}^\pi$ is a contraction mapping. Repeatedly applying the operator $\mathcal{B}^\pi$ to any initial function $Q:\mathcal{S}\times{\mathcal{A}}\to\mathbb{R}$ generates a sequence that converges to $Q^\pi$. With probability at least 1-$\delta$, the objective function $\eta(\pi)$ and $Q^\pi$ satisfy: $\vert\mathbb{E}_{\rho_0,\pi}[Q^\pi] - \eta(\pi)\vert \le \frac{2\gamma R_{\max}}{(1-\gamma)^2} \sqrt{\frac{\sum_{i=0}^{k-1} \alpha_i^2}{2} \ln \left( \frac{2|S||A|}{\delta} \right)}$.
\end{theorem}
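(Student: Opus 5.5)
The proof splits into a contraction argument and a concentration argument.

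\textbf{Contraction.} For any $Q_1,Q_2$, the reward cancels in $(\mathcal{B}^\pi Q_1-\mathcal{B}^\pi Q_2)(s,a)$. What remains is $\gamma\sum_{\tau_i\in\widetilde{\mathcal{T}}}\alpha_i\mathbb{E}_{s'\sim\tau_i,a'\sim\pi}[Q_1-Q_2]$. Since the $\alpha_i$ are nonnegative and sum to one, each inner expectation is at most $\Vert Q_1-Q_2\Vert_\infty$ in absolute value. Hence $\Vert\mathcal{B}^\pi Q_1-\mathcal{B}^\pi Q_2\Vert_\infty\le\gamma\Vert Q_1-Q_2\Vert_\infty$.

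The weights $\alpha_i$ and the subset $\widetilde{\mathcal{T}}$ depend on $(s,a)$ through $q_{\tau_i}$, but this does not affect the bound: it only uses convexity of the combination at each fixed $(s,a)$. I would therefore treat $\alpha$ and $\widetilde{\mathcal{T}}$ as fixed, frozen at the current $Q$, when applying the operator. Banach's fixed-point theorem then gives a unique fixed point $Q^\pi$ and convergence of $(\mathcal{B}^\pi)^n Q$ from any bounded initialization. Boundedness gives $\Vert Q^\pi\Vert_\infty\le R_{\max}/(1-\gamma)$.

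\textbf{Concentration.} The target quantity $\eta(\pi)$ is the fixed point $\hat Q^\pi$ of $\hat{\mathcal{B}}^\pi$, averaged under $\rho_0,\pi$. By Proposition~\ref{proposition1}, $\mathbb{E}_{\widetilde{\mathbb{P}}}[q_\tau]$ equals the expectation, over ensemble draws, of $\sum_i\alpha_i q_{(i)}$. So $\hat{\mathcal{B}}^\pi$ is the expectation of the random operator $\mathcal{B}^\pi$ over the i.i.d. sampling of $\mathcal{T}$.

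Fix $(s,a)$ and a bounded $Q$ with $\Vert Q\Vert_\infty\le R_{\max}/(1-\gamma)$. The quantity $X=\sum_i\alpha_i\,\mathbb{E}_{\tau_i,\pi}[Q]$ is a weighted sum whose $i$-th term lies in an interval of length $2\alpha_iR_{\max}/(1-\gamma)$. Hoeffding's inequality then gives
\[
\Pr\bigl(|X-\mathbb{E}X|\ge t\bigr)\le 2\exp\Bigl(-\frac{2t^2}{\sum_i(2\alpha_iR_{\max}/(1-\gamma))^2}\Bigr).
\]
A union bound over $|S||A|$ pairs, with each failure probability set to $\delta/(|S||A|)$, yields, with probability at least $1-\delta$,
\[
\Vert\mathcal{B}^\pi Q-\hat{\mathcal{B}}^\pi Q\Vert_\infty\le\frac{2\gamma R_{\max}}{1-\gamma}\sqrt{\frac{\sum_i\alpha_i^2}{2}\ln\frac{2|S||A|}{\delta}}=: \epsilon.
\]

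\textbf{Fixed-point comparison.} The standard perturbation argument gives
\[
\Vert Q^\pi-\hat Q^\pi\Vert_\infty\le\Vert\mathcal{B}^\pi Q^\pi-\mathcal{B}^\pi\hat Q^\pi\Vert_\infty+\Vert\mathcal{B}^\pi\hat Q^\pi-\hat{\mathcal{B}}^\pi\hat Q^\pi\Vert_\infty\le\gamma\Vert Q^\pi-\hat Q^\pi\Vert_\infty+\epsilon .
\]
Rearranging gives $\Vert Q^\pi-\hat Q^\pi\Vert_\infty\le\epsilon/(1-\gamma)$. Taking $\mathbb{E}_{\rho_0,\pi}$ then produces the claimed $(1-\gamma)^{-2}$ factor.

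The key is to apply concentration at the single deterministic function $\hat Q^\pi$, which does not depend on the sample, so Hoeffding is legitimate there. A bound at the random function $Q^\pi$ is never needed.

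\textbf{Main obstacle.} The independence structure is the delicate point. The $\alpha_i$ and the selected bottom-$k$ models are order statistics, so the summands are not independent. I would handle this by conditioning on the ranks and invoking Proposition~\ref{proposition1} to identify the mean with $\hat{\mathcal{B}}^\pi$. Alternatively, one can apply McDiarmid's inequality with bounded differences $2\alpha_iR_{\max}/(1-\gamma)$, which yields the same constant without requiring independence of the terms.
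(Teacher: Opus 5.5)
\textbf{Gap in the contraction step.} The weights $\alpha_i\propto\exp(-q_{\tau_i}(s,a)/\lambda)$ and the bottom-$k$ subset depend on $Q$ itself, since $q_{\tau_i}(s,a)=\mathbb{E}_{\tau_i,\pi}[Q(s',a')]$. So $(\mathcal{B}^\pi Q_1-\mathcal{B}^\pi Q_2)(s,a)=\gamma\bigl(f(\mathbf q(Q_1))-f(\mathbf q(Q_2))\bigr)$ with $f(\mathbf q)=\sum_i\alpha_i(\mathbf q)q_i$, and the two terms carry different weights. The difference therefore does not reduce to $\gamma\sum_i\alpha_i\mathbb{E}[Q_1-Q_2]$. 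Freezing $\alpha$ and $\widetilde{\mathcal{T}}$ gives a different, linear operator, and its fixed point is not the self-consistent $Q^\pi$.

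The subset selection alone would be harmless, because order statistics are $1$-Lipschitz in $\Vert\cdot\Vert_\infty$. The softmax weighting is the problem:
\[
\frac{\partial f}{\partial q_j}=\alpha_j\Bigl(1-\frac{q_j-f(\mathbf q)}{\lambda}\Bigr).
\]
These partials sum to one, and a partial becomes negative as soon as $q_j-f(\mathbf q)>\lambda$, in which case $\Vert\nabla f\Vert_1>1$. For example, take $N=k=2$, $\lambda=1$, $\mathbf q=(0,3)$; the partials are about $1.088$ and $-0.088$. If the two models have disjoint next-state supports, shifting $Q$ by $+\epsilon$ on one support and $-\epsilon$ on the other moves $(\mathcal{B}^\pi Q)(s,a)$ by about $1.18\gamma\epsilon$ to first order. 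This exceeds $\epsilon$ for $\gamma=0.99$, so $\mathcal{B}^\pi$ is not a $\gamma$-contraction without a condition on $\lambda$.

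The paper's proof is built around exactly this point. It differentiates $f$ and bounds $\Vert\nabla f\Vert_1\le 1+\Delta_{\max}/(2\lambda)$, using that the $\alpha$-weighted mean absolute deviation is at most half the span. It then imposes $\lambda>\gamma\Delta_{\max}/(2(1-\gamma))$, so the modulus $\gamma(1+\Delta_{\max}/(2\lambda))$ is below one.

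\textbf{Concentration and fixed-point comparison.} These otherwise coincide with the paper's: Hoeffding at the sample-independent fixed point of $\hat{\mathcal{B}}^\pi$, a union bound over $|S||A|$, then a factor $(1-\gamma)^{-1}$ from the perturbation inequality. That last step uses modulus $\gamma$ for $\mathcal{B}^\pi$, as the paper's does too. With only the modulus $L=\gamma(1+\Delta_{\max}/(2\lambda))$, the factor becomes $1/(1-L)$.

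To actually obtain the stated $(1-\gamma)^{-2}$, assume $\lambda\ge\Delta_{\max}$, as in the paper's isotonicity lemma. Then every partial above is nonnegative and they sum to one. Hence $f$ is $1$-Lipschitz in $\Vert\cdot\Vert_\infty$, and $\mathcal{B}^\pi$ is a genuine $\gamma$-contraction on the invariant ball $\Vert Q\Vert_\infty\le R_{\max}/(1-\gamma)$.

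\textbf{McDiarmid fallback.} This does not deliver the $\sum_i\alpha_i^2$ constant. Bounded-difference constants must be deterministic and indexed by the $N$ i.i.d.\ draws. The $\alpha_i$ are random, and replacing one draw can reshuffle the bottom-$k$ set and all the weights. The paper simply treats the $\alpha_i$ as fixed when invoking Hoeffding.
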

\begin{proof}[Proof of Proposition~\ref{appendix:proposition2}]
	The Bellman operator is defined as:
	\begin{equation*}
		\mathcal{B}^\pi Q(s,a)=r(s,a)+\gamma\sum_{\tau_i\in\widetilde{\mathcal{T}}}\alpha_i\mathop{\mathbb{E}}\limits_{\substack{s'\sim\tau_i\\a'\sim\pi(\cdot|s')}}[Q(s',a')].
	\end{equation*}
	Based on the result of Proposition~\ref{proposition1} and statistical knowledge, we can obtain the following relation:
	\begin{equation*} \sum_{\tau_i\in\widetilde{\mathcal{T}}}\alpha_i\mathop{\mathbb{E}}\limits_{s'\sim\tau_i, a'\sim\pi(\cdot|s')}[Q(s',a')]\xrightarrow{\mathbf{\vert\mathcal{T}\vert\to\infty}}\mathop{\mathbb{E}}\limits_{\tau\sim\widetilde{\mathbb{P}}(\tau)}\bigg[\mathop{\mathbb{E}}\limits_{\substack{s'\sim\tau(s,a), a'\sim\pi(\cdot|s')}}[Q(s',a')]\bigg]. 
	\end{equation*}
	
	To theoretically guarantee that the operator $\mathcal{B}^\pi$ is a contraction, we must constrain the magnitude of $\lambda$ such that $\lambda > \frac{\gamma \cdot \Delta_{\max}}{2(1-\gamma)}$, where  $\Delta_{\max} = \sup_{s,a} (\max_{i} q_{\tau_i}(s,a) - \min_{i} q_{\tau_i}(s,a))$.
	
	Let $f(\mathbf{q}) = \sum_{i=0}^{k-1} \alpha_i(\mathbf{q}) q_i$, where $\alpha_i(\mathbf{q}) = \frac{\exp(-q_i/\lambda)}{\sum_j \exp(-q_j/\lambda)}$ and $q_i$ denotes $q_{\tau_i}$. Then we have:
	\begin{equation*}
		\begin{aligned}
			\frac{\partial f}{\partial q_j} &= \alpha_j - \frac{1}{\lambda} \sum_{i=0}^{k-1} q_i \alpha_i (\mathbb{I}(i=j) - \alpha_j) \\
			&= \alpha_j - \frac{1}{\lambda} \left( q_j \alpha_j - \alpha_j \sum_{i=0}^{k-1} \alpha_i q_i \right) \\
			&= \alpha_j \left( 1 - \frac{1}{\lambda} (q_j - \mathbb{E}_{\alpha}[\mathbf{q}]) \right),
		\end{aligned}
	\end{equation*}
	where  $\mathbb{E}_{\alpha}[\mathbf{q}] = f(\mathbf{q})$. Then:
	\begin{equation*}
		\begin{aligned}
			\| \nabla f(\mathbf{q}) \|_1 &\le \sum_{j=0}^{k-1} \alpha_j \left( 1 + \frac{1}{\lambda} |q_j - \mathbb{E}_{\alpha}[\mathbf{q}]| \right) \\& = 1 + \frac{1}{\lambda} \sum_{j=1}^k \alpha_j |q_j - \mathbb{E}_{\alpha}[\mathbf{q}]|
			\\& \le  1 + \frac{\Delta_{\max}}{2\lambda}.
		\end{aligned}
	\end{equation*}
	It implies that $\gamma \left( 1 + \frac{\Delta_{\max}}{2\lambda} \right) < 1$.
	
	Returning to the definition of the contractivity of the Bellman operator:
	\begin{equation*}
		\begin{aligned}
		\|\mathcal{B}^\pi Q_1 - \mathcal{B}^\pi Q_2\|_\infty &= \gamma \| f(\mathbf{q}(Q_1)) - f(\mathbf{q}(Q_2)) \|_\infty \\
		&\le \gamma \cdot \| \nabla f(\mathbf{q}) \|_1 \cdot \| \mathbf{q}(Q_1) - \mathbf{q}(Q_2) \|_\infty \\
		&  \le \gamma \left( 1 + \frac{\Delta_{\max}}{2\lambda} \right) \|Q_1 - Q_2\|_\infty.
		\end{aligned}
	\end{equation*}
	Given that $\gamma \left( 1 + \frac{\Delta_{\max}}{2\lambda} \right) < 1$, the Banach fixed-point theorem ensures that the Hybrid Belief Bellman Evaluation Operator has a unique fixed point. Based on the previous results, we can easily obtain $\mathbb{E}_{\rho_0,\pi}[Q^\pi]\xrightarrow{\vert\mathcal{T}\vert\to\infty}\eta(\pi)$.
	
	We then define the theoretical Bellman evaluation operator $\hat{\mathcal{B}}^\pi$:
	\begin{equation*}
		\hat{\mathcal{B}}^\pi Q(s,a) = r(s,a) + \gamma\mathop{\mathbb{E}}\limits_{\substack{\tau\sim\widetilde{\mathbb{P}}(\tau)\\s'
				\sim\tau,a'\sim\pi}}[Q(s',a')].
	\end{equation*}
	This operator is also a $\gamma$-contraction, and its unique fixed point, which we denote as $Q^{\pi^*}$. The relation between objective function $\eta(\pi)$ and $Q^{\pi^*}$ is:
	\begin{equation*}
		\eta(\pi)=\mathbb{E}_{\rho_0,\pi}[Q^{\pi^*}].
	\end{equation*}
	We begin by bounding the infinity norm between $Q^{\pi^*}$ and $Q^{\pi}$:
	\begin{equation*}
		\begin{aligned}
			\Vert Q^\pi - Q^{\pi^*} \Vert_\infty &= \Vert \mathcal{B}^\pi Q^\pi - \hat{\mathcal{B}}^\pi Q^{\pi^*} \Vert_\infty
			\\&= \Vert (\mathcal{B}^\pi Q^\pi - \mathcal{B}^\pi Q^{\pi^*}) + (\mathcal{B}^\pi Q^{\pi^*} - \hat{\mathcal{B}}^\pi Q^{\pi^*}) \Vert_\infty
			\\&\le \Vert \mathcal{B}^\pi Q^\pi - \mathcal{B}^\pi Q^{\pi^*} \Vert_\infty + \Vert \mathcal{B}^\pi Q^{\pi^*} - \hat{\mathcal{B}}^\pi Q^{\pi^*} \Vert_\infty,
		\end{aligned}
	\end{equation*}
	where the last step follows from the triangle inequality. Since ${\mathcal{B}}^\pi$ is a $\gamma$-contraction, we have:
	\begin{equation*}
		\Vert \mathcal{B}^\pi Q^\pi - \mathcal{B}^\pi Q^{\pi^*} \Vert_\infty \le \gamma\Vert Q^\pi - Q^{\pi^*} \Vert_\infty.
	\end{equation*}
	Then
	\begin{equation*}
		(1-\gamma)\Vert Q^\pi - Q^{\pi^*} \Vert_\infty \le \Vert \mathcal{B}^\pi Q^{\pi^*} - \hat{\mathcal{B}}^\pi Q^{\pi^*} \Vert_\infty.
	\end{equation*}
	
	The term on the right-hand side represents the discrepancy between the empirical operator and the theoretical operator, evaluated at the $Q^{\pi^*}$.
	
	Let
	\begin{equation*}
		\begin{aligned}
			\mu(s,a)=\mathop{\mathbb{E}}\limits_{\mathbb{P}(\tau)}\bigg[\sum_{i=0}^{k-1}&\alpha_i\mathop{\mathbb{E}}\limits_{\substack{s'\sim\tau_i(s,a), a'\sim\pi(\cdot|s')}}[Q^{\pi^*}(s',a')]\bigg]=\mathop{\mathbb{E}}\limits_{\tau\sim\widetilde{\mathbb{P}}(\tau)}\bigg[\mathop{\mathbb{E}}\limits_{\substack{s'\sim\tau(s,a), a'\sim\pi(\cdot|s')}}[Q(s',a')]\bigg],
			\\&\mu_k(s,a)=\sum_{i=0}^{k-1}\alpha_i\mathop{\mathbb{E}}\limits_{\substack{s'\sim\tau_i(s,a), a'\sim\pi(\cdot|s')}}[Q^{\pi^*}(s',a')].
		\end{aligned}
	\end{equation*} 
	Then we have:
	\begin{equation*}
		\Vert \mathcal{B}^\pi Q^{\pi^*} - \hat{\mathcal{B}}^\pi Q^{\pi^*} \Vert_\infty=\gamma\sup_{s,a}\vert\mu_k(s,a)-\mu(s,a)\vert.
	\end{equation*}
	Noted that $\vert Q^{\pi^*}(s,a)\vert\le\frac{R_{\max}}{1-\gamma}$. Let $C=\frac{2R_{\max}}{1-\gamma}$. According to the Hoeffding's Inequality, we have:
	\begin{equation*}
		\begin{aligned}P(\sup_{s,a} |\mu_k(s,a) - \mu(s,a)| \ge \epsilon) &= P(\exists(s,a), \quad \textit{s.t.} |\mu_k(s,a) - \mu(s,a)| \ge \epsilon) \\&
		\le \sum_{s,a} P(|\mu_k(s,a) - \mu(s,a)| \ge \epsilon) \\&
		\le 2|S||A| \exp\left( -\frac{2\epsilon^2}{C^2 \sum_{i=0}^{k-1} \alpha_i^2} \right).
		\end{aligned}
	\end{equation*}
	Let $\delta = 2|S||A| \exp\left( -\frac{2\epsilon^2}{C^2 \sum \alpha_i^2} \right)$. Solving for $\epsilon$ yields:
	\begin{equation*}
		\epsilon = C \sqrt{\frac{\sum_{i=0}^{k-1} \alpha_i^2 \ln(2|S||A|/\delta)}{2}}.
	\end{equation*}
	Recalling that $\|\mathcal{B}^\pi Q^{\pi^*} - \hat{\mathcal{B}}^\pi Q^{\pi^*} \|_\infty = \gamma \sup_{s,a} |\mu_k(s,a) - \mu(s,a)|$, the operator error is bounded by $\gamma \epsilon$ with probability at least $1-\delta$. Combining this with the contraction property:
	\begin{equation*}
			\Vert Q^\pi - Q^{\pi^*}\Vert_\infty \le \frac{1}{1-\gamma} \Vert\mathcal{B}^\pi Q^{\pi^*} - \hat{\mathcal{B}}^\pi Q^{\pi^*} \Vert_\infty \le \frac{\gamma\epsilon}{1-\gamma}.
	\end{equation*}
	Substituting $\epsilon$ and $C = \frac{2R_{\max}}{1-\gamma}$ into the above, we conclude that with probability at least $1-\delta$:
	\begin{equation*}
		\Vert Q^\pi - Q^{\pi^*} \Vert\infty \le \frac{2\gamma R_{\max}}{(1-\gamma)^2} \sqrt{\frac{\sum_{i=0}^{k-1} \alpha_i^2}{2} \ln \left( \frac{2|S||A|}{\delta} \right)}.
	\end{equation*}
	
	Finally, since $|\mathbb{E}_{\rho_0,\pi}[Q^\pi] - \eta(\pi)| = |\mathbb{E}_{\rho_0,\pi}[Q^\pi] - \mathbb{E}_{\rho_0,\pi}[Q^{\pi^*}]| \le \|Q^\pi - Q^{\pi^*}\|_\infty$, the following inequality holds with probability no less than $1-\delta$:
	\begin{equation*}
		|\mathbb{E}_{\rho_0,\pi}[Q^\pi] - \eta(\pi)| \le \frac{2\gamma R_{\max}}{(1-\gamma)^2} \sqrt{\frac{\sum_{i=0}^{k-1} \alpha_i^2}{2} \ln \left( \frac{2|S||A|}{\delta} \right)}.
	\end{equation*}
\end{proof}
\begin{lemma}[Lower-Bound]
	\label{appendix:theorm1}
	If the true dynamics model $T$ satisfies $T \in \mathcal{T}$ but $T \notin \widetilde{\mathcal{T}}$, then $\eta(\pi)$ lower bounds the true performance $\eta(\pi,T)$, i.e., $\eta(\pi)\le\eta(\pi,T)$.
\end{lemma}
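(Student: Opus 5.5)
The plan is a comparison of value functions: a pointwise inequality between Bellman operators, propagated to the fixed points by monotonicity and contraction. I work with the practical operator $\mathcal{B}^\pi$ and its fixed point $Q^\pi$, which Theorem~\ref{appendix:proposition2} identifies with $\eta(\pi)$ up to the stated approximation. Let $Q^\pi_T$ be the fixed point of the true evaluation operator
\[
\mathcal{B}^\pi_T Q(s,a)=r(s,a)+\gamma\,\mathbb{E}_{s'\sim T,a'\sim\pi}[Q(s',a')],
\]
so that $\eta(\pi,T)=\mathbb{E}_{\rho_0,\pi}[Q^\pi_T]$. It then suffices to show $Q^\pi\le Q^\pi_T$ pointwise and integrate against $\rho_0$ and $\pi$.

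\textbf{Step 1 (one-step comparison).} Fix any $Q$ and any $(s,a)$. Because $T\in\mathcal{T}$ but $T\notin\widetilde{\mathcal{T}}$, and $\widetilde{\mathcal{T}}$ consists of the bottom-$k$ models ranked by $q_{\tau}(s,a)$, every $\tau_i\in\widetilde{\mathcal{T}}$ satisfies $q_{\tau_i}(s,a)\le q_T(s,a)$. The weights $\alpha_i$ are nonnegative and sum to one, so
\[
\sum_i\alpha_i q_{\tau_i}(s,a)\le q_T(s,a),
\]
which gives $\mathcal{B}^\pi Q\le\mathcal{B}^\pi_T Q$ evaluated at $Q$.

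\textbf{Step 2 (propagation to fixed points).} $\mathcal{B}^\pi_T$ is monotone. Starting from $Q_0=Q^\pi$, Step 1 gives
\[
Q^\pi=\mathcal{B}^\pi Q^\pi\le\mathcal{B}^\pi_T Q^\pi .
\]
By induction, $Q^\pi\le(\mathcal{B}^\pi_T)^n Q^\pi$ for every $n$, and letting $n\to\infty$, the $\gamma$-contraction of $\mathcal{B}^\pi_T$ gives $Q^\pi\le Q^\pi_T$. This avoids needing monotonicity of $\mathcal{B}^\pi$ itself. That matters because the softmax weights $\alpha_i$ depend on $Q$, so $\mathcal{B}^\pi$ need not be monotone in general.

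\textbf{Main obstacle.} The difficulty lies in Step 1's dependence on the ranking. The bottom-$k$ selection and the weights are computed with the current $Q$ at each $(s,a)$. The hypothesis $T\notin\widetilde{\mathcal{T}}$ must therefore be read as holding for the subset built from $Q^\pi$ at every state-action pair; I would state this interpretation explicitly, since only the comparison at $Q=Q^\pi$ is actually used.

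A second gap is that $\eta(\pi)$ is defined through the posterior $\widetilde{\mathbb{P}}$ rather than through $\widetilde{\mathcal{T}}$. The same argument applies to $\hat{\mathcal{B}}^\pi$ if one shows
\[
\mathbb{E}_{\widetilde{\mathbb{P}}}[q_\tau]\le q_T .
\]
By Proposition~\ref{proposition1} this expectation is a convex combination of the expected bottom-$k$ order statistics, each of which is at most the value of a model ranked above them. I would handle this by conditioning on the realized ensemble and applying the Step~1 comparison realization-wise, then taking expectations.
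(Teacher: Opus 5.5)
Your Steps 1--2 are correct and follow the paper's proof. The paper writes the simulation-lemma telescoping
$\eta(\pi,T)-\eta(\pi)=\mathbb{E}_{\pi,T}\bigl[\sum_{t\ge 0}\gamma^{t+1}\Delta(s_{t+1})\bigr]$. Here $\Delta$ is the gap $q_T-\sum_i\alpha_i q_{\tau_i}$, evaluated at the pessimistic fixed point. The bottom-$k$ ranking then gives $\Delta\ge q_T-\max_{\tau_i\in\widetilde{\mathcal{T}}}q_{\tau_i}\ge 0$. That is your inequality $Q^\pi\le\mathcal{B}^\pi_T Q^\pi$ unrolled along $T$-trajectories. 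Like you, the paper takes the ranking with respect to that fixed point at every $(s,a)$, and never uses monotonicity of $\mathcal{B}^\pi$.

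Drop your final paragraph, though. The paper does not go through $\widetilde{\mathbb{P}}$ at all: it identifies $\eta(\pi)$ with the return under the $\alpha$-mixture of $\widetilde{\mathcal{T}}$ (its $\tilde V^\pi$). The posterior version you propose to patch is false in general.

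The quantity $\mathbb{E}_{\widetilde{\mathbb{P}}}[q_\tau]$ averages the weighted bottom-$k$ over fresh ensembles drawn from the prior. The hypothesis $T\in\mathcal{T}\setminus\widetilde{\mathcal{T}}$ constrains only the one realized ensemble, so there is nothing to apply Step~1 to ``realization-wise.'' For example, suppose the prior puts most of its mass on a model more optimistic than $T$, while the realized ensemble happens to be $T$ plus $k$ pessimistic models. Then the expected bottom-$k$ order statistics exceed $q_T$, and the posterior $\eta(\pi)$ exceeds $\eta(\pi,T)$.

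So the lemma should be stated and proved for $\mathbb{E}_{\rho_0,\pi}[Q^\pi]$, which is what your Steps 1--2 (and the paper) establish.
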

\begin{proof}[Proof of Theorem~\ref{appendix:theorm1}]
	\label{appendix:proof1}
	Let $V^{\pi}(s)$ and $\tilde{V}^{\pi}(s)$ denote the value functions under the true dynamics $T$ and the pessimistic dynamics $\tilde{\tau}_i$, respectively. Their difference can be written as:
	\begin{equation*}
		\begin{aligned}
			V^{\pi}(s) - \tilde{V}^{\pi}(s) &= \mathop{\mathbb{E}}\limits_{a \sim \pi(s)} \bigg[ r(s,a) + \gamma \mathbb{E}_{s' \sim T} [V^{\pi}(s')] - r(s,a) - \gamma \mathbb{E}_{s' \sim \tilde{\tau}_i} [\tilde{V}^{\pi}(s')] \bigg]
		\end{aligned}
	\end{equation*}
	Let:
	\begin{equation*}
		\begin{aligned}
			\Delta(s_{t+1})&= \mathop{\mathbb{E}}\limits_{s_{t+1} \sim T} [\tilde{V}^{\pi}(s_{t+1})] -\mathop{\mathbb{E}}\limits_{s_{t+1} \sim \tilde{\tau}_i} [\tilde{V}^{\pi}(s_{t+1})].
		\end{aligned}
	\end{equation*}
	Adding and subtracting $\gamma \mathbb{E}_{a \sim \pi(\cdot|s)} \left[ \mathbb{E}_{s' \sim T} [\tilde{V}^{\pi}(s')] \right]$ on the right side of the equation, and rearranging, we obtain a recursive expression for the difference:
	\begin{equation*}
		\begin{aligned}
			V^{\pi}(s) - \tilde{V}^{\pi}(s) 
			&= \mathop{\mathbb{E}}\limits_{a \sim \pi(\cdot|s)} \Big[ \gamma \mathop{\mathbb{E}}\limits_{s' \sim T} [\tilde{V}^{\pi}(s')] 
			- \gamma \mathop{\mathbb{E}}\limits_{s' \sim \tilde{\tau}_i} [\tilde{V}^{\pi}(s')] \Big] \\
			&\quad + \gamma \mathop{\mathbb{E}}\limits_{a\sim \pi(\cdot|s), s' \sim T} \Big[V^{\pi}(s') - \tilde{V}^{\pi}(s')\Big].
		\end{aligned}
	\end{equation*}
	Then we have:
	\begin{equation*}
		\eta(\pi,T)-\eta(\pi)=\mathop{\mathbb{E}}\limits_{\pi,T}[\sum_{t=0}^\infty\gamma^{t+1}\Delta(s_{t+1})].
	\end{equation*}
	Recall the definition of the $q_{{\tau_i}}$:
	\begin{equation*}
		q_{{\tau_i}}(s,a)=\mathop{\mathbb{E}}_{{s'\sim\tau_i},a'\sim\pi(\cdot|s')}[Q(s',a')]=\mathop{\mathbb{E}}_{{s'\sim\tau_i}}[V(s')].
	\end{equation*}
	According to the definition of bottom-$k$ and $T \notin \widetilde{\mathcal{T}}$, we have $q_T\ge\max_{\tilde{\tau}_i\in\widetilde{\mathcal{T}}}\{q_{\widetilde{\tau}_i}\}\quad \forall s\in\mathcal{S},a\in\mathcal{A}$. Then:
	\begin{equation*}
		\Delta\ge q_T-\max_{\tilde{\tau}_i\in\widetilde{\mathcal{T}}}\{q_{\widetilde{\tau}_i}\}\ge0.
	\end{equation*}
	Finally, we get $\eta(\pi)\le\eta(\pi,T)$.
\end{proof}
\begin{lemma}[Dynamics Model Error]
	\label{appendix:lemma1}
	The inequality for the difference between two Q-functions under different dynamics can be expressed as:
	\begin{equation*}
		\bigg| Q_{T_1}^{\pi}(s,a) - Q_{T_2}^{\pi}(s,a) \bigg| \leq \frac{2 \gamma R_{\max}}{(1 - \gamma)^2} d_{TV}(T_1, T_2),
	\end{equation*}
	where $d_{TV}(T_1, T_2)=\max_{s, a} d_{TV}(T_1(\cdot | s, a), T_2(\cdot | s, a))$.
\end{lemma}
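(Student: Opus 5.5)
The plan is the standard simulation-lemma argument: a Bellman-style recursion for the difference of the two Q-functions, closed with a sup-norm fixed-point bound. Fix $\pi$ and write $Q_j = Q^{\pi}_{T_j}$ and $V_j(s') = \mathbb{E}_{a'\sim\pi(\cdot|s')}[Q_j(s',a')]$ for $j=1,2$. Both are fixed points of their respective evaluation operators, so for every $(s,a)$
\begin{equation*}
Q_1(s,a)-Q_2(s,a) = \gamma\Big(\mathbb{E}_{s'\sim T_1(\cdot|s,a)}[V_1(s')]-\mathbb{E}_{s'\sim T_2(\cdot|s,a)}[V_2(s')]\Big).
\end{equation*}

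Add and subtract $\gamma\,\mathbb{E}_{s'\sim T_1}[V_2(s')]$ to split this into two terms.
\begin{itemize}
\item The first term, $\gamma\,\mathbb{E}_{s'\sim T_1}[V_1-V_2]$, has absolute value at most $\gamma\|Q_1-Q_2\|_\infty$, since $V_1-V_2$ is an average of $Q_1-Q_2$ under $\pi$.
\item The second term, $\gamma\big(\mathbb{E}_{T_1}[V_2]-\mathbb{E}_{T_2}[V_2]\big)$, is a difference of expectations of one bounded function under two distributions. By the total variation inequality it is at most $\gamma\cdot 2\|V_2\|_\infty\, d_{TV}(T_1(\cdot|s,a),T_2(\cdot|s,a))$.
\end{itemize}
Since rewards lie in $[-R_{\max},R_{\max}]$, we have $\|V_2\|_\infty\le R_{\max}/(1-\gamma)$. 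So the second term is at most $\frac{2\gamma R_{\max}}{1-\gamma}\,d_{TV}(T_1,T_2)$, using the maximum over $(s,a)$ from the definition in the statement.

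Take the supremum over $(s,a)$ on the left and rearrange:
\begin{equation*}
(1-\gamma)\|Q_1-Q_2\|_\infty\le \frac{2\gamma R_{\max}}{1-\gamma}\,d_{TV}(T_1,T_2).
\end{equation*}
This gives the claimed bound $\frac{2\gamma R_{\max}}{(1-\gamma)^2}d_{TV}(T_1,T_2)$ pointwise. An equivalent route is to unroll the recursion along trajectories under $T_1$ and sum the geometric series $\sum_t\gamma^{t+1}$, mirroring the telescoping used in the Lower-Bound lemma.

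The only delicate step is the constant in the total variation inequality. With $d_{TV}=\tfrac12\|P-P'\|_1$ one gets $|\mathbb{E}_P f-\mathbb{E}_{P'}f|\le 2\|f\|_\infty d_{TV}$. Centering $f$ would improve this by a factor of $2$, but the stated bound already carries the factor $2$, so I will use the crude version. If instead $d_{TV}$ is meant as the $L_1$ distance, the bound only gets tighter. Finiteness of $Q_1,Q_2$ also needs $\gamma<1$, which holds throughout, so the rearrangement step is legitimate.
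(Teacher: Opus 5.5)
Your proof is correct and follows the paper's argument essentially step for step: the same add-and-subtract of $\gamma\,\mathbb{E}_{s'\sim T_1}[V_2(s')]$, the bound $\|V_1-V_2\|_\infty\le\|Q_1-Q_2\|_\infty$, the total variation inequality applied to $V_2$ with value range $2R_{\max}/(1-\gamma)$, and the final rearrangement. One small correction to your aside: the paper uses the centered form $|\mathbb{E}_P f-\mathbb{E}_{P'} f|\le(\sup f-\inf f)\,d_{TV}(P,P')$, but since rewards lie in $[-R_{\max},R_{\max}]$ the range of $V_2$ is already $2R_{\max}/(1-\gamma)$, so centering gives exactly the same constant here rather than a factor-$2$ improvement.
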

\begin{proof}[Proof of Lemma~\ref{appendix:lemma1}]
	From the Bellman equation, it follows that:
	\begin{equation*}
		Q_{T_1}^{\pi}(s, a) = \mathop{\mathbb{E}}_{s' \sim T_1(s, a)}\big[r(s, a) + \gamma [V_{T_1}^{\pi}(s')]\big],
	\end{equation*}
	\begin{equation*}
		Q_{T_2}^{\pi}(s, a) = \mathop{\mathbb{E}}_{s' \sim T_2(s, a)}\big[r(s, a) + \gamma [V_{T_2}^{\pi}(s')]\big].
	\end{equation*}
	
	Subtracting the two equations gives:
	\begin{equation*}
		\begin{aligned}
			Q_{T_1}^{\pi} - Q_{T_2}^{\pi} &= \gamma (\mathbb{E}_{T_1} [V_{T_1}^{\pi}] - \mathbb{E}_{T_1} [V_{T_2}^{\pi}]) + \gamma (\mathbb{E}_{T_1} [V_{T_2}^{\pi}] - \mathbb{E}_{T_2} [V_{T_2}^{\pi}]).
		\end{aligned}
	\end{equation*}
	
	Then we have:
	\begin{equation*}
		|Q_{T_1}^{\pi} - Q_{T_2}^{\pi}| \leq \gamma |\mathbb{E}_{T_1} [V_{T_1}^{\pi} - V_{T_2}^{\pi}]| + \gamma |\mathbb{E}_{T_1} [V_{T_2}^{\pi}] - \mathbb{E}_{T_2} [V_{T_2}^{\pi}]|.
	\end{equation*}
	
	According to the property of total variation distance, for any bounded function $f$ and two probability distributions $P, \widehat{P}$, we have:
	
	\begin{equation*}
		\left| \mathop{\mathbb{E}}_{x \sim P}[f(x)] - \mathop{\mathbb{E}}_{x \sim \widehat{P}}[f(x)] \right| \leq (\sup f - \inf f) \cdot d_{TV}(P, \widehat{P}).
	\end{equation*}
	
	In our setting, the function $f$ is the value function $V_{T_2}^{\pi}$. Given that the reward satisfies $|r| \leq R_{\max}$, the value function is also bounded:
	
	\begin{equation*}
		V_{\max} = \frac{R_{\max}}{1 - \gamma}, \quad V_{\min} = -\frac{R_{\max}}{1 - \gamma}
	\end{equation*}
	
	Thus, we have:
	\begin{equation*}
		\begin{aligned}
			\gamma \big| \mathbb{E}_{T_1}[V_{T_2}^{\pi}]& - \mathbb{E}_{T_2}[V_{T_2}^{\pi}] \big| \leq \frac{2\gamma R_{\max}}{1 - \gamma} d_{TV}(T_1(\cdot | s, a), T_2(\cdot | s, a)).
		\end{aligned}
	\end{equation*}
	This implies that:
	\begin{equation*}
		\begin{aligned}
			|Q_{T_1}^{\pi}(s, a) - Q_{T_2}^{\pi}(s, a)|& \leq \gamma \|V_{T_1}^{\pi} - V_{T_2}^{\pi}\|_{\infty} 
			+ \frac{2\gamma R_{\max}}{1 - \gamma} d_{TV}(T_1, T_2).
		\end{aligned}
	\end{equation*}
	
	Since the inequality holds for all $(s,a)$, it follows that:
	\begin{equation*}
		\begin{aligned}
			\|Q_{T_1}^{\pi} - Q_{T_2}^{\pi}\|&_{\infty} \leq \gamma \|V_{T_1}^{\pi} - V_{T_2}^{\pi}\|_{\infty} + \frac{2\gamma R_{\max}}{1 - \gamma} \max_{s, a} d_{TV}(T_1(\cdot | s, a), T_2(\cdot | s, a)).
		\end{aligned}
	\end{equation*}
	
	Recall that $\|V^{\pi}\|_{\infty} \leq \|Q^{\pi}\|_{\infty}$, we obtain:
	\begin{equation*}
		\|Q_{T_1}^{\pi} - Q_{T_2}^{\pi}\|_{\infty} \leq \gamma \|Q_{T_1}^{\pi} - Q_{T_2}^{\pi}\|_{\infty} + \frac{2\gamma R_{\max}}{1 - \gamma} d_{TV}(T_1, T_2).
	\end{equation*}
	
	After rearrangement and simplification, the following bound on the Q-function difference is obtained:
	\begin{equation*}
		(1 - \gamma) \|Q_{T_1}^{\pi} - Q_{T_2}^{\pi}\|_{\infty} \leq \frac{2 \gamma R_{\max}}{1 - \gamma} d_{TV}(T_1, T_2),
	\end{equation*}
	\begin{equation*}
		\|Q_{T_1}^{\pi} - Q_{T_2}^{\pi}\|_{\infty} \leq \frac{2 \gamma R_{\max}}{(1 - \gamma)^2} d_{TV}(T_1, T_2).
	\end{equation*}
	
	This implies that:
	\begin{equation*}
		|Q_{T_1}^{\pi}(s, a) - Q_{T_2}^{\pi}(s, a)| \leq \frac{2 \gamma R_{\max}}{(1 - \gamma)^2} d_{TV}(T_1, T_2).
	\end{equation*}
\end{proof}
\begin{theorem}[Approximate Lower-Bound]
	\label{appendix:theorem2}
	Let the event $\mathcal{E}\triangleq\{\tau\in\mathcal{T}\}$ occur with probability at least 1-$\delta$. Then, the expected gap between $\eta(\pi)$ and $\eta(\pi,\tau)$ satisfies:
	\begin{equation*}
		\begin{aligned}
			\mathbb{E}_{\mathcal{T}}\big[\eta(\pi)-\eta(\pi,\tau) \big] 
			&\le  \frac{2\delta\gamma^2 R_{\max}}{(1-\gamma)^3} d_{\max} + \frac{\lambda\gamma \log k}{1-\gamma},
		\end{aligned}
	\end{equation*}
	where $d_{\max}=\mathop{{\sup}}\limits_{s.t. \neg \mathcal{E}}d_\mathrm{TV}(\tau,\tau_{\text{proj}})$ denotes the supremum of the total variation distance between $\tau$ and its projection onto the pessimistic subset $\widetilde{\mathcal{T}}$. 
\end{theorem}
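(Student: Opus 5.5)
I plan to split the expectation over the ensemble draw $\mathcal{T}$ according to the event $\mathcal{E}=\{\tau\in\mathcal{T}\}$, where $\tau$ is the true dynamics. I will write
\[
\mathbb{E}_{\mathcal{T}}[\eta(\pi)-\eta(\pi,\tau)]=\mathbb{E}_{\mathcal{T}}\big[(\eta(\pi)-\eta(\pi,\tau))\mathbb{I}(\mathcal{E})\big]+\mathbb{E}_{\mathcal{T}}\big[(\eta(\pi)-\eta(\pi,\tau))\mathbb{I}(\neg\mathcal{E})\big]
\]
and bound each term separately. The $\log k$ term will come from the first piece and the $\delta d_{\max}$ term from the second.

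\emph{On $\mathcal{E}$.} There are two sub-cases.
\begin{itemize}
\item If $\tau\in\mathcal{T}$ but $\tau\notin\widetilde{\mathcal{T}}$, Lemma~\ref{appendix:theorm1} gives $\eta(\pi)\le\eta(\pi,\tau)$ directly, so this contribution is nonpositive.
\item If $\tau\in\widetilde{\mathcal{T}}$, I will not get a sign, but I can control the gap through the entropy regularization.
\end{itemize}
For the second sub-case, note that the softmax weights solve the regularized problem in Eq.~\eqref{hybrid:opt_prob}. Comparing the optimal value with any vertex $e_j$, and using that the entropy is at most $\log k$, gives for every $j$:
\[
\sum_i\alpha_i q_{\tau_i}\le q_{\tau_j}-\lambda\sum_i\alpha_i\log\alpha_i\le q_{\tau_j}+\lambda\log k .
\]
Taking $j$ to be the index of the true model, the hybrid backup $\mathcal{B}^\pi$ is pointwise at most the true-model backup plus $\gamma\lambda\log k$. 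I will then run the same telescoping argument as in Lemma~\ref{appendix:theorm1}, or equivalently compare fixed points via monotonicity of $\mathcal{B}^\pi$ together with the $\gamma$-contraction from Theorem~\ref{proposition2}. This accumulates the per-step slack geometrically, so the gap is at most $\gamma\lambda\log k/(1-\gamma)$. Since $\mathbb{P}(\mathcal{E})\le 1$, this bounds the first piece.

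\emph{On $\neg\mathcal{E}$.} Here the true model is absent, so I will let $\tau_{\text{proj}}$ be its projection onto $\widetilde{\mathcal{T}}$. I will insert $\eta(\pi,\tau_{\text{proj}})$ and handle the two resulting differences.
\begin{itemize}
\item The difference $\eta(\pi)-\eta(\pi,\tau_{\text{proj}})$ is handled as in the $\mathcal{E}$ case, because $\tau_{\text{proj}}\in\widetilde{\mathcal{T}}$. Its slack is already included in the $\lambda\log k$ term, or can be absorbed there.
\item The difference $\eta(\pi,\tau_{\text{proj}})-\eta(\pi,\tau)$ is bounded by Lemma~\ref{appendix:lemma1}, giving at most $\frac{2\gamma R_{\max}}{(1-\gamma)^2}d_{\max}$ uniformly over $\neg\mathcal{E}$.
\end{itemize}
Weighting by $\mathbb{P}(\neg\mathcal{E})\le\delta$ gives the $\delta d_{\max}$ term.

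The extra factor $\gamma/(1-\gamma)$ in the stated bound arises because the model mismatch enters only through the next-state expectation. I will apply the TV bound to the one-step backup difference, $\gamma\,|\mathbb{E}_{\tau_{\text{proj}}}[\tilde V]-\mathbb{E}_\tau[\tilde V]|$, and then propagate it through the simulation-lemma recursion used in Lemma~\ref{appendix:theorm1}, which contributes a further $\gamma/(1-\gamma)$. Summing the two pieces yields the claimed inequality.

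The main obstacle is the $\neg\mathcal{E}$ case. Because Lemma~\ref{appendix:lemma1} already carries $(1-\gamma)^{-2}$, getting exactly $\gamma^2/(1-\gamma)^3$ requires care about whether to apply TV at the Q level or at the one-step level. I would first try the one-step route: bound the per-step Bellman discrepancy by $\gamma\cdot\frac{2R_{\max}}{1-\gamma}d_{\max}$, sum $\gamma^{t}$ over time, and collect the remaining $\gamma/(1-\gamma)$ factor. I would then check that the cross term from $\tau_{\text{proj}}$ versus the hybrid mixture does not double-count the $\lambda\log k$ slack.
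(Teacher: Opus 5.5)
Your skeleton matches the paper's proof. You split on $\mathcal{E}$, and you compare the entropy-regularized optimum with a vertex to get the per-step slack $\sum_i\alpha_i q_{\tau_i}-q_{\tau_j}\le\lambda\mathcal{H}(\tilde\alpha)\le\lambda\log k$. On $\neg\mathcal{E}$ the paper compares with a general simplex point $\omega$, because it projects onto $\mathrm{conv}(\widetilde{\mathcal{T}})$; this still works since $\mathcal{H}(\omega)\ge 0$. You then push the slack through the telescoping identity of Lemma~\ref{appendix:theorm1} and weight the mismatch term by $P(\neg\mathcal{E})\le\delta$. Inserting $\eta(\pi,\tau_{\text{proj}})$ at the value level, rather than splitting the per-step $\Delta$ as the paper does, is harmless.

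\textbf{Gap: the constant on $\neg\mathcal{E}$.} Your account of the factor $\frac{\gamma^2}{(1-\gamma)^3}$ does not add up. In the paper, this factor arises as follows. The per-step term $q_\tau-q_{\tau_{\text{proj}}}$ is a one-step expectation of the same $\tilde V^\pi$ under two kernels. The paper bounds it with Lemma~\ref{appendix:lemma1}'s full-horizon constant $\frac{2\gamma R_{\max}}{(1-\gamma)^2}d_{\max}$, and then multiplies by $\sum_t\gamma^{t+1}=\frac{\gamma}{1-\gamma}$.

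Your one-step route instead gives $|q_\tau-q_{\tau_{\text{proj}}}|\le\frac{2R_{\max}}{1-\gamma}d_{\max}$. Summing $\gamma^{t+1}$ then yields $\frac{2\gamma R_{\max}}{(1-\gamma)^2}d_{\max}$, which is the same as your value-level use of Lemma~\ref{appendix:lemma1}. There is no leftover $\frac{\gamma}{1-\gamma}$ to collect. Your other phrasing, multiplying $\gamma|\cdot|$ by a further $\frac{\gamma}{1-\gamma}$, counts the one-step $\gamma$ twice and still only produces $(1-\gamma)^{-2}$.

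What you actually prove is the sharper bound $\frac{2\delta\gamma R_{\max}}{(1-\gamma)^2}d_{\max}+\frac{\lambda\gamma\log k}{1-\gamma}$. To conclude the theorem, you must add the explicit comparison $\frac{\gamma}{(1-\gamma)^2}\le\frac{\gamma^2}{(1-\gamma)^3}\iff\gamma\ge\frac12$, which covers the paper's $\gamma=0.99$. For $\gamma<\frac12$, the stated constant is smaller than what the argument yields. Neither your argument nor the paper's justifies it there, because applying Lemma~\ref{appendix:lemma1} to a one-step difference is a valid weakening only when $\gamma\ge\frac12$. So state the sharper bound and the comparison, rather than trying to manufacture the extra factor.

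\textbf{Two smaller bookkeeping fixes.}

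First, do not replace $P(\mathcal{E})$ by $1$. The slack $c=\frac{\lambda\gamma\log k}{1-\gamma}$ occurs on both events. The stated constant comes from $P(\mathcal{E})c+P(\neg\mathcal{E})(c+D)=c+P(\neg\mathcal{E})D$, whereas your plan gives $(1+\delta)c+\delta D$.

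Second, $\widetilde{\mathcal{T}}$ is selected separately at each $(s,a)$, so the $\mathcal{E}$ case should be argued pointwise. At each $(s,a)$, either the vertex comparison applies or $q_\tau\ge\max_i q_{\tau_i}$. Propagate the bound with the telescoping identity, or by iterating the linear, monotone true-model backup. Avoid relying on monotonicity of $\mathcal{B}^\pi$, which needs the condition $\lambda\ge\Delta_{\max}$ of Lemma~\ref{appendix:lemma2}.
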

\begin{proof}[Proof of Theorem~\ref{appendix:theorem2}]
	\label{appendix:proof2}
	According to total expectation, we obtain:
	\begin{equation*}
		\begin{aligned}
			\mathbb{E}_{\mathcal{T}}[\eta(\pi,\tau)-\eta(\pi)]&=P(\mathcal{E})\mathbb{E}_{\mathcal{T}}[\eta(\pi,\tau)-\eta(\pi)|\mathcal{E}]\\&+P(\neg\mathcal{E})\mathbb{E}_{\mathcal{T}}[\eta(\pi,\tau)-\eta(\pi)|\neg\mathcal{E}].
		\end{aligned}
	\end{equation*}
	According to Lemma~\ref{appendix:theorm1}, we have:
	\begin{equation*}
		\eta(\pi,\tau)-\eta(\pi)=\mathop{\mathbb{E}}\limits_{\pi,\tau}[\sum_{t=0}^\infty\gamma^{t+1}\Delta(s_{t+1})],
	\end{equation*}
	where
	\begin{equation*}
		\begin{aligned}
			\Delta(s_t)&= \mathop{\mathbb{E}}\limits_{s' \sim \tau} [\tilde{V}^{\pi}(s')] -\mathop{\mathbb{E}}\limits_{s' \sim \tilde{\tau}_i} [\tilde{V}^{\pi}(s')]
			\\&=q_{\tau}(s'_t,a'_t)-\sum_{\tau_i\in\widetilde{\mathcal{T}}}\alpha_iq_{\tau_i}(s'_t,a'_t).
		\end{aligned}
	\end{equation*}
	
	According to optimization problem:
	\begin{equation*}
		\min_{\alpha_i} \sum_{i=0}^{k-1} \alpha_iq_{\tau_i}(s,a)-\lambda\mathcal{H}(\tilde{\alpha}),
	\end{equation*}
	where $\mathcal{H}(\tilde{\alpha})$ denotes entropy and $\sum_{i=0}^{k-1}\alpha_i=1$.
	
	We have:
	\begin{equation*}
		q_{\tau}(s_t,a_t)-\sum_{i=0}^{k-1}\alpha_iq_{\tau_i}(s_t,a_t) \ge \lambda(\mathcal{H}(\omega)-\mathcal{H}(\tilde\alpha)),
	\end{equation*}
	where $\omega$ is a one-hot vector and $\mathcal{H}(\omega)=0$. Noticed that $\mathcal{H}(\tilde\alpha)\le\log k$, it follows that:
	\begin{equation*}
		\Delta(s_t)\ge-\lambda(\mathcal{H}(\tilde\alpha))\ge-\lambda\log k.
	\end{equation*}
	
	Then we have:
	\begin{equation*}
		\mathbb{E}_{\mathcal{T}}[\eta(\pi,\tau)-\eta(\pi)|\mathcal{E}]\ge-\frac{\lambda\gamma\log k}{1-\gamma}.
	\end{equation*}
	
	Now, we address the second term. Let:
	\begin{equation*}
		\begin{aligned}
			\Delta(s_t)&=\mathop{\mathbb{E}}\limits_{a_t\sim\pi(\cdot|s_t)}\bigg[ \left(q_{\tau}-q_{\tau_\text{proj}}\right)+\big(q_{\tau_\text{proj}}-\sum_{i=0}^{k-1}\alpha_iq_{\tau_i}(s_t,a_t)\big) \bigg],
		\end{aligned}
	\end{equation*}
	where $\tau_\text{proj}$ is the projection of $T$ onto the subspace spanned by the pessimistic subset. Note that since the weighted coefficients $\alpha$ sum to one, the resulting subspace is in fact the convex hull, denoted by $\rm{conv}(\widetilde{\mathcal{T}})$. Therefore, $q_{\tau_\text{proj}}$ can be expressed as: 
	\begin{equation*}
		q_{\tau_\text{proj}}=\sum_{i=0}^{k-1}\omega_iq_{\tau_i},
	\end{equation*}
	where $\omega_i\ge0$ and $\sum_{i=0}^{k-1}\omega_i=1$. Then:
	\begin{equation*}
		q_{\tau_\text{proj}}-\sum_{i=0}^{k-1}\alpha_iq_{\tau_i}(s_t,a_t)=\sum_{i=0}^{k-1}\omega_iq_{\tau_i}-\sum_{i=0}^{k-1}\alpha_iq_{\tau_i}(s_t,a_t).
	\end{equation*}
	
	According to aforementioned analysis, we have:
	\begin{equation*}
		\sum_{i=0}^{k-1}\omega_iq_{\tau_i}-\sum_{i=0}^{k-1}\alpha_iq_{\tau_i}(s_t,a_t)\ge\lambda(\mathcal{H}(\omega)-\mathcal{H}(\tilde\alpha)).
	\end{equation*}
	
	We know that $\mathcal{H}(\omega) \ge 0$ and $\mathcal{H}(\tilde\alpha) \le\log k$. Therefore:
	\begin{equation*}
		\sum_{i=0}^{k-1}\omega_iq_{\tau_i}-\sum_{i=0}^{k-1}\alpha_iq_{\tau_i}(s_t,a_t)\ge-\lambda\log k.
	\end{equation*}
	
	As for $q_T-q_{\tau_\text{proj}}$. According to Lemma~\ref{appendix:lemma1}, we obtain:
	\begin{equation*}
		|\, q_{\tau} - q_{\tau_\text{proj}} \,|
		\le \frac{2 \gamma R_{\max}}{(1 - \gamma)^2} d_{\max},
	\end{equation*}
	where $d_{\max}=\mathop{{\sup}}\limits_{s.t. \neg \mathcal{E}}d_\mathrm{TV}(\tau,\tau_{proj})$ denotes the supremum of the total variation distance between $\tau$ and its projection onto the pessimistic subset $\widetilde{\mathcal{T}}$.
	
	Finally, we get:
	\begin{equation*}
		\begin{aligned}
			\mathbb{E}_{\mathcal{T}}\big[\eta(\pi,\tau) - \eta(\pi)\big] 
			&\ge -\frac{2\delta\gamma^2 R_{\max}}{(1-\gamma)^3} d_{\max} - \frac{\lambda\gamma \log k}{1-\gamma}.
		\end{aligned}
	\end{equation*}
	We can rewrite this as:
	\begin{equation*}
		\begin{aligned}
			\mathbb{E}_{\mathcal{T}}\big[\eta(\pi)-\eta(\pi,\tau) \big] 
			&\le \frac{2\delta\gamma^2 R_{\max}}{(1-\gamma)^3} d_{\max} + \frac{\lambda\gamma \log k}{1-\gamma}.
		\end{aligned}
	\end{equation*}
\end{proof}
\begin{lemma}[Isotonicity]
	\label{appendix:lemma2}
	
	Let $\Delta_{\max}$ be the maximum possible span of the Q-values across the model ensemble, i.e., $\Delta_{\max} \ge \max_{i,j} |q_{\tau_i}(s,a) - q_{\tau_j}(s,a)|$.
	If $\lambda \ge \Delta_{\max}$, then the operator $\mathcal{B}^\pi$ is isotonic. That is, for any $Q_1, Q_2 \in \mathbb{R}^{|S|\times|A|}$ such that $Q_1(s,a) \le Q_2(s,a)$ for all $(s,a)$, we have:
	\begin{equation*}
		(\mathcal{B}^\pi Q_1)(s,a) \le (\mathcal{B}^\pi Q_2)(s,a), \quad \forall (s,a).
	\end{equation*}
\end{lemma}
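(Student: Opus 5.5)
The plan is to reduce isotonicity of $\mathcal{B}^\pi$ to monotonicity of the scalar aggregation map $f(\mathbf{q})=\sum_{i=0}^{k-1}\alpha_i(\mathbf{q})\,q_i$, where $\alpha_i(\mathbf{q})=\exp(-q_i/\lambda)/\sum_j\exp(-q_j/\lambda)$ is the soft-min weight from Eq.~\eqref{hybrid:opt_prob}. Fix $(s,a)$. For $Q\in\{Q_1,Q_2\}$ write $q_i(Q)=\mathbb{E}_{s'\sim\tau_i,a'\sim\pi}[Q(s',a')]$. Then
\[
(\mathcal{B}^\pi Q)(s,a)=r(s,a)+\gamma f(\mathbf{q}(Q)),
\]
so it suffices to show $f(\mathbf{q}(Q_1))\le f(\mathbf{q}(Q_2))$.

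First, $Q_1\le Q_2$ pointwise gives $q_i(Q_1)\le q_i(Q_2)$ for every $i$, because each $q_i$ is an expectation under a fixed probability measure. So the remaining task is to show $f$ is coordinatewise non-decreasing on the relevant region.

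For that I would reuse the partial derivative computed in the proof of Theorem~\ref{appendix:proposition2}:
\[
\frac{\partial f}{\partial q_j}=\alpha_j\Bigl(1-\tfrac{1}{\lambda}\bigl(q_j-\mathbb{E}_\alpha[\mathbf{q}]\bigr)\Bigr).
\]
Since $\mathbb{E}_\alpha[\mathbf{q}]$ is a convex combination of the $q_i$, it lies in $[\min_i q_i,\max_i q_i]$. Hence $q_j-\mathbb{E}_\alpha[\mathbf{q}]\le\max_i q_i-\min_i q_i\le\Delta_{\max}$. Under $\lambda\ge\Delta_{\max}$ the bracket is therefore $\ge 0$, and with $\alpha_j>0$ every partial derivative is non-negative.

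Finally, I integrate along the segment $\mathbf{q}(t)=\mathbf{q}(Q_1)+t\bigl(\mathbf{q}(Q_2)-\mathbf{q}(Q_1)\bigr)$, $t\in[0,1]$:
\[
f(\mathbf{q}(Q_2))-f(\mathbf{q}(Q_1))=\int_0^1\nabla f(\mathbf{q}(t))\cdot\bigl(\mathbf{q}(Q_2)-\mathbf{q}(Q_1)\bigr)\,dt\ge 0,
\]
because both the gradient and the increment are componentwise non-negative.

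The main obstacle is justifying that the span bound $\Delta_{\max}$ holds at every intermediate point $\mathbf{q}(t)$, not only at the endpoints. I would handle this by reading $\Delta_{\max}$ as a uniform bound over the whole admissible class of $Q$-functions; for instance, one could take $\Delta_{\max}=2R_{\max}/(1-\gamma)$ when the $Q$'s are bounded by $R_{\max}/(1-\gamma)$. Intermediate points are convex combinations of admissible vectors, so their span is at most the larger endpoint span, and the bound still holds.

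A secondary subtlety is that the pessimistic subset $\widetilde{\mathcal{T}}$ (the bottom-$k$ selection) could itself change between $Q_1$ and $Q_2$. I would address this by taking the subset as fixed when the operator is applied, as in the definition of Eq.~\eqref{hybrid_belief:eq5}. Alternatively, I would note that a bottom-$k$ selection followed by the soft-min is still monotone, since each order statistic is monotone in $\mathbf{q}$.
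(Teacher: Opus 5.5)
Your proposal is correct and follows the paper's proof. Both reduce to coordinatewise monotonicity of the soft-min average $f$, compute $\partial f/\partial q_j=\alpha_j\bigl(1-(q_j-f(\mathbf{q}))/\lambda\bigr)$, and bound $q_j-f(\mathbf{q})$ by the span and hence by $\Delta_{\max}\le\lambda$. Your additions make explicit what the paper leaves implicit when it goes from nonnegative partials to isotonicity of $\mathcal{B}^\pi$: the segment integration, the convexity argument that the span bound holds at intermediate points, and the remark on the $Q$-dependent bottom-$k$ selection.
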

\begin{proof}
	The operator is defined as $(\mathcal{B}^\pi Q)(s,a) = r(s,a) + \gamma f(\mathbf{q}(s,a; Q))$, where $f(\mathbf{q}) = \sum_{i=1}^k \alpha_i(\mathbf{q}) q_i$ involves the Boltzmann weights $\alpha_i(\mathbf{q}) \propto \exp(-q_i/\lambda)$.
	
	Since the mapping from $Q$ to the vector $\mathbf{q}$ is monotonic, it suffices to show that the aggregation function $f(\mathbf{q})$ is monotonic with respect to each component $q_m$ of the vector $\mathbf{q}$. This is equivalent to showing that the partial derivatives are non-negative:
	\begin{equation*}
		\frac{\partial f(\mathbf{q})}{\partial q_m} \ge 0, \quad \forall m \in \{1, \dots, k\}.
	\end{equation*}
	
	First, we derive the partial derivative of $f(\mathbf{q})$ with respect to an arbitrary component $q_m$. Using the derivative of the softmax function $\frac{\partial \alpha_i}{\partial q_m} = -\frac{1}{\lambda}\alpha_i(\delta_{im} - \alpha_m)$, we have:
	\begin{equation*}
		\begin{aligned}
			\frac{\partial f}{\partial q_m} &= \frac{\partial}{\partial q_m} \left( \sum_{i=0}^{k-1} \alpha_i q_i \right) \\
			&= \alpha_m + \sum_{i=0}^{k-1} q_i \frac{\partial \alpha_i}{\partial q_m} \\
			&= \alpha_m - \frac{1}{\lambda} \sum_{i=0}^{k-1} q_i \alpha_i (\delta_{im} - \alpha_m) \\
			&= \alpha_m - \frac{1}{\lambda} \left( q_m \alpha_m - \alpha_m \sum_{i=0}^{k-1} \alpha_i q_i \right) \\
			&= \alpha_m \left( 1 - \frac{q_m - f(\mathbf{q})}{\lambda} \right).
		\end{aligned}
	\end{equation*}
		
	Since $\alpha_m > 0$ always holds, the sign of the derivative depends entirely on the term in the parentheses. To ensure $\frac{\partial f}{\partial q_m} \ge 0$, we require:
	\begin{equation*}
		1 - \frac{q_m - f(\mathbf{q})}{\lambda} \ge 0 \implies \lambda \ge q_m - f(\mathbf{q}).
	\end{equation*}
	
	We observe that $f(\mathbf{q})$ is a convex combination of the elements in $\mathbf{q}$. Then, $f(\mathbf{q}) \ge \min_i q_i$. Therefore, the deviation of any single element from the mean is bounded by the span of the vector:
	\begin{equation*}
		q_m - f(\mathbf{q}) \le q_m - \min_{i} q_i \le \max_{i} q_i - \min_{i} q_i = \Delta(\mathbf{q}).
	\end{equation*}
	
	By the assumption of the lemma, we have $\lambda \ge \Delta_{\max} \ge \Delta(\mathbf{q})$. Combining this with the inequality above:
	\begin{equation*}
		\lambda \ge \Delta(\mathbf{q}) \ge q_m - f(\mathbf{q}).
	\end{equation*}
	
	Consequently, $1 - \frac{q_m - f(\mathbf{q})}{\lambda} \ge 0$, which implies $\frac{\partial f}{\partial q_m} \ge 0$ for all $m$.
	
	Since all partial derivatives are non-negative, the function $f(\mathbf{q})$ is monotonic non-decreasing with respect to $\mathbf{q}$. Combined with the monotonicity of the expectation operator, we conclude that $Q_1 \le Q_2 \implies \mathcal{B}^\pi Q_1 \le \mathcal{B}^\pi Q_2$.
\end{proof}

\begin{theorem}[Monotonicity of Pessimism] Let $Q^\pi$ denote the fixed point of the Hybrid Belief Bellman Evaluation Operator. Then:
	\label{appendix:theorem3}
	\begin{itemize}
		\item For fixed $N$, the $Q^\pi$ is monotonically non-decreasing as the size of the pessimistic subset $k$ increases.
		\item For fixed $k$, the $\mathbb{E}_{\mathcal{T}}[Q^\pi]$ is monotonically non-increasing as the ensemble size $N$ increases.
	\end{itemize}
\end{theorem}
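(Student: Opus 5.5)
Both parts follow one template. First I show that, at every $(s,a)$ and for every fixed $Q$, the one-step aggregation $f(\mathbf{q}) = \sum_i \alpha_i(\mathbf{q}) q_i$ moves in the claimed direction when $k$ or $N$ changes. Then I lift this pointwise operator inequality to the fixed points using the isotonicity of Lemma~\ref{appendix:lemma2} (assuming $\lambda \ge \Delta_{\max}$) and the contraction from Theorem~\ref{appendix:proposition2}.

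\textbf{Lifting step.} Suppose $\mathcal{B}_1 Q \le \mathcal{B}_2 Q$ for all $Q$, and $\mathcal{B}_2$ is isotonic. Starting from $Q_0 = Q^\pi_1$, we have $Q^\pi_1 = \mathcal{B}_1 Q^\pi_1 \le \mathcal{B}_2 Q^\pi_1$. Iterating with isotonicity gives $Q^\pi_1 \le \mathcal{B}_2^n Q^\pi_1$, which converges to $Q^\pi_2$ by the contraction property. Hence $Q^\pi_1 \le Q^\pi_2$.

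\textbf{Part 1 (fixed $N$, $k \to k+1$).} Let $q_{(0)} \le \dots \le q_{(N-1)}$ be the sorted values at $(s,a)$. Define $g_k = \min_{\alpha \in \Delta_k} \sum_{i<k} \alpha_i q_{(i)} + \lambda \alpha_i \log \alpha_i = -\lambda \log \sum_{i<k} e^{-q_{(i)}/\lambda}$, where $f_k$ is the corresponding softmin-weighted mean. I need $f_{k+1} \ge f_k$. Adding the element $q_{(k)}$, which is at least every element already in the set, gives the new mean
\[
f_{k+1} = (1-\alpha_k') f_k + \alpha_k' q_{(k)}, \qquad \alpha_k' = \frac{e^{-q_{(k)}/\lambda}}{\sum_{i\le k} e^{-q_{(i)}/\lambda}}.
\]
Since $q_{(k)} \ge \max_{i<k} q_{(i)} \ge f_k$, this is a convex combination of $f_k$ with a larger number, so $f_{k+1} \ge f_k$. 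The operator inequality $\mathcal{B}^\pi_k Q \le \mathcal{B}^\pi_{k+1} Q$ then holds pointwise, and the lifting step gives $Q^\pi_k \le Q^\pi_{k+1}$. One subtlety: the ordering of models depends on $Q$, but the inequality holds for every $Q$, which is all the lifting step needs.

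\textbf{Part 2 (fixed $k$, $N \to N+1$, in expectation over $\mathcal{T}$).} Couple the two ensembles by taking $\mathcal{T}_{N+1} = \mathcal{T}_N \cup \{\tau_N\}$ with $\tau_N$ drawn i.i.d. For any fixed $Q$, adding a sample can only lower each of the bottom-$k$ order statistics: $q^{(N+1)}_{(i)} \le q^{(N)}_{(i)}$ for $i<k$. Under $\lambda \ge \Delta_{\max}$, Lemma~\ref{appendix:lemma2} shows $f$ is coordinatewise non-decreasing. Therefore $f(\mathbf{q}^{(N+1)}) \le f(\mathbf{q}^{(N)})$ pathwise, i.e.\ $\mathcal{B}^\pi_{N+1} Q \le \mathcal{B}^\pi_N Q$ for every $Q$ on every coupled realization. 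The lifting step then gives $Q^\pi_{N+1} \le Q^\pi_N$ pathwise, and taking $\mathbb{E}_{\mathcal{T}}$ preserves the inequality. The coupled marginals match the i.i.d.\ ensembles of sizes $N$ and $N+1$, so the expectations compare as claimed.

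\textbf{Main obstacle.} The delicate point is the state dependence: the bottom-$k$ selection and the weights $\alpha$ depend on $Q$ through $q_{\tau_i}(s,a)$. This means the operator is not the linear operator of a fixed mixture model, and isotonicity of $\mathcal{B}^\pi$ genuinely requires the softmin monotonicity of Lemma~\ref{appendix:lemma2} together with monotonicity of the bottom-$k$ selection map. I would first verify that $\mathbf{q} \mapsto (\text{sorted bottom-}k)$ is coordinatewise monotone, which holds because order statistics are monotone functions of their inputs. Composed with the monotone $f$, this yields the isotonicity used in both parts.
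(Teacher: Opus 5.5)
Your proposal is correct and follows essentially the same route as the paper. For $k$, the paper shows $S_{k+1}\ge S_k$ (your convex-combination identity is its $N_k/Z_k$ manipulation) and lifts this to fixed points via the isotonicity of Lemma~\ref{appendix:lemma2}. For $N$, it couples $\mathcal{T}_{N+1}=\mathcal{T}_N\cup\{\tau'\}$, uses coordinatewise monotonicity of the softmin mean under $\lambda\ge\Delta_{\max}$ to get a pathwise inequality, and then takes expectations. Your order-statistic dominance argument and explicit comparison (lifting) step are tidier than the paper's Case A/Case B split and its ``recursively applying Lemma'' remark, because they correctly account for the bottom-$k$ selection depending on $(s,a)$ and on $Q$, which the paper treats as a fixed subset.
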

\begin{proof}[Proof of Theorem~\ref{appendix:theorem3}]
	Let $Q_{N,k}(s,a)$ and $\mathcal{B}^\pi_{N,k}$ denote the Q-function and Hybrid Belief Bellman Evaluation Operator induced by a model ensemble $\mathcal{T}$ of size $N$ and a pessimistic subset $\widetilde{\mathcal{T}}$ with size $k$, respectively. We now apply Lemma~\ref{appendix:lemma2} to prove this theorem.
	
	$\boxed{\rm{Case 1:fixed}~$N$}$
	
	We first prove that when the pessimistic subset changes from bottom-$k$ ($\widetilde{\mathcal{T}}_k$) to bottom-$(k+1)$ ($\widetilde{\mathcal{T}}_{k+1}$), the corresponding Q-value is non-decreasing.
	
	For simplicity, we temporarily omit the Q-function $(s, a)$, and use $q_i$ to represent the Q-value of the $i$-th most pessimistic model, satisfying $q_1 \leq q_2 \leq \cdots \leq q_N$.
	
	\begin{itemize}
		\item For the bottom-$k$ case, its Q-value is $S_k$:
		\begin{equation*}
			S_k = \sum_{i=1}^{k} \alpha_{k,i} q_i = \frac{\sum_{j=1}^{k} q_j \exp(-q_j / \lambda)}{\sum_{i=1}^{k} \exp(-q_i / \lambda)}.
		\end{equation*}
		
		\item For the bottom-$(k+1)$ case, its Q-value is $S_{k+1}$:
		\begin{equation*}
			S_{k+1} = \sum_{i=1}^{k+1} \alpha_{k+1,i} q_i = \frac{\sum_{j=1}^{k+1} q_j \exp(-q_j / \lambda)}{\sum_{i=1}^{k+1} \exp(-q_i / \lambda)}.
		\end{equation*}
	\end{itemize}
	Let $N_{k} = \sum_{i=j}^{k} q_{j} \exp(-q_{j} / \lambda)$ be the numerator of $S_{k}$, and $Z_{k} = \sum_{i=1}^{k} \exp(-q_{i} / \lambda)$ be the denominator (normalization factor) of $S_k$. Then $S_{k} = N_{k} / Z_{k}$.
	
	The expression for $S_{k+1}$ can be formulated using $N_k$, $Z_k$, and the newly introduced term:
	\begin{equation*}
		S_{k+1} = \frac{N_k + q_{k+1} \exp(-q_{k+1} / \lambda)}{Z_k + \exp(-q_{k+1} / \lambda)}.
	\end{equation*}
	Noted that $q_{k+1}\ge S_{k}$, i.e., $q_{k+1}\ge\frac{N_{k}}{Z_k}$. We obtain:
	\begin{equation*}
		Z_kq_{k+1} \exp(-q_{k+1} / \lambda)\ge N_k \exp(-q_{k+1} / \lambda).
	\end{equation*}
	Adding $Z_{k}N_{k}$ to both sides of the equation and observing that $Z_{k}> 0$, we obtain:
	\begin{equation*}
		\frac{N_k + q_{k+1} \exp(-q_{k+1} / \lambda)}{Z_k + \exp(-q_{k+1} / \lambda)} \geq \frac{N_k}{Z_k}.
	\end{equation*}
    
	It equals that:
	\begin{equation*}
		S_{k+1} \ge S_k.
	\end{equation*}
    
	By recursively applying Lemma~\ref{appendix:lemma2}, we finally obtain:
	\begin{equation*}
		Q^{\pi}_{N,k+1}(s,a) \ge Q^{\pi}_{N,k}(s,a).
	\end{equation*}
	
	$\boxed{\rm{Case 2:fixed}~$k$}$

	Let $\mathcal{T}_N$ denote a specific ensemble of $N$ dynamics models sampled i.i.d. from the prior belief $\mathbb{P}(\tau)$. Let $f(\mathcal{T}_N) \triangleq {Q}^{\pi}_{|\mathcal{T}|, k}$ be the Q-function fixed point computed via the Hybrid Belief Bellman Evaluation Operator. Since $\mathcal{T}_N$ is a random variable, $f(\mathcal{T}_N)$ is also a random variable. We aim to show that:
	\begin{equation*}
		\mathbb{E}_{\mathcal{T}_{N+1}}[f(\mathcal{T}_{N+1})] \le \mathbb{E}_{\mathcal{T}_N}[f(\mathcal{T}_N)].
	\end{equation*}
	
	To prove this, it is sufficient to consider a fixed ensemble $\mathcal{T}_N$ of size $N$ and a new, randomly sampled model $\tau' \sim \mathbb{P}(\tau)$. We can then form a new ensemble of size $N+1$ as $\mathcal{T}_{N+1} = \mathcal{T}_{N} \cup \{\tau'\}$. If we can show that the following inequality holds for any fixed $\mathcal{T}_N$:
	\begin{equation*}
		\underset{\tau' \sim \mathbb{P}(\tau)}{\mathbb{E}}[f(\mathcal{T}_{N} \cup \{\tau'\})] \le f(\mathcal{T}_{N}),
	\end{equation*}
	then taking the expectation over $\mathcal{T}_{N}$ on both sides yields the desired result.
	
	Let $\widetilde{\mathcal{T}}_{k}(\mathcal{S})$ denote the pessimistic subset of size $k$ selected from a larger set $\mathcal{S}$. Let $q_{k}(\mathcal{T}_{N})$ be the $k$-th smallest Q-value among the models in $\mathcal{T}_{N}$. For different scenarios, we provide the following analysis.
	
	\textbf{Case A}: The new model is optimistic ($q' > q_{k}(\mathcal{T}_{N})$).
	
	In this case, the new model $\tau'$ is not among the $k$ most pessimistic models in the combined set $\mathcal{T}_N \cup \{\tau'\}$. Therefore, the pessimistic subset $\widetilde{\mathcal{T}}$ selected from the new, larger ensemble is identical to the one selected from the old ensemble:
    \begin{equation*}
        \widetilde{\mathcal{T}}_k(\mathcal{T}_N \cup \{\tau'\}) = \widetilde{\mathcal{T}}_k(\mathcal{T}_N).
    \end{equation*}
	Since the pessimistic subset $\widetilde{\mathcal{T}}$ over which the Bellman operator is defined remains unchanged, the resulting Q-function fixed point is also identical:
    \begin{equation*}
        f(\mathcal{T}_N \cup \{\tau'\}) = f(\mathcal{T}_N).
    \end{equation*}
	
	\textbf{Case B}: The new model is pessimistic ($q' \le q_{k}(\mathcal{T}_{N})$).
	
	In this case, the new model $\tau'$ is pessimistic enough to be included in the bottom-$k$ subset. It will replace the "least pessimistic" model (the one with value $q_{k}(\mathcal{T}_N)$) from the original subset $\widetilde{\mathcal{T}}_{k}(\mathcal{T}_{N})$. The new pessimistic subset $\widetilde{\mathcal{T}}_{k}(\mathcal{T}_{N} \cup \{\tau'\})$ is therefore stochastically more pessimistic than the original subset $\widetilde{\mathcal{T}}_{k}(\mathcal{T}_{N})$.
	Based on the conclusions of the aforementioned analysis, we have:
	\begin{equation*}
		f(\mathcal{T}_{N} \cup \{\tau'\}) \le f(\mathcal{T}_{N}).
	\end{equation*}

	Let $S(\mathbf{q}) = \sum_{i=0}^{k-1} \alpha_i(\mathbf{q}) q_i$ denote the weighted sum, where $\mathbf{q} \in \mathbb{R}^k$ represents the vector of Q-values in the pessimistic subset.
	
	To prove $\mathbb{E}[f(\mathcal{T}_N \cup \{\tau'\})] \le f(\mathcal{T}_N)$, it suffices to show that the operator $S(\mathbf{q})$ is monotonically non-decreasing with respect to each component $q_i$. If monotonicity holds, then $q' < q_{\max}$ implies $S(\dots, q', \dots) \le S(\dots, q_{\max}, \dots)$, which proves the theorem. We compute the partial derivative of $S(\mathbf{q})$ with respect to an arbitrary component $q_j$. Recall that $\alpha_j = \frac{\exp(-q_j/\lambda)}{Z}$, where $Z = \sum_{l=0}^{k-1} \exp(-q_l/\lambda)$.
	
	\begin{equation*}
		\begin{aligned}
			\frac{\partial S}{\partial q_j} &= \frac{\partial}{\partial q_j} \left( \frac{\sum_{i} q_i \exp(-q_i/\lambda)}{Z} \right) \\
			&= \frac{\left( \exp(-q_j/\lambda) - \frac{q_j}{\lambda}\exp(-q_j/\lambda) \right)Z - \left( \sum_{i} q_i \exp(-q_i/\lambda) \right) \left( -\frac{1}{\lambda}\exp(-q_j/\lambda) \right)}{Z^2} \\
			&= \frac{\exp(-q_j/\lambda)}{Z} \left[ 1 - \frac{q_j}{\lambda} + \frac{1}{\lambda} \underbrace{\frac{\sum_{i} q_i \exp(-q_i/\lambda)}{Z}}_{S(\mathbf{q})} \right] \\
			&= \alpha_j \left( 1 - \frac{q_j - S(\mathbf{q})}{\lambda} \right).
		\end{aligned}
	\end{equation*}
	
	To ensure monotonicity ($\frac{\partial S}{\partial q_j} \ge 0$), the term in the parentheses must be non-negative. Specifically, we require that $\lambda \ge q_j - S(\mathbf{q})$. This condition is directly derived from the assumptions provided in Lemma~\ref{appendix:lemma2}.

	\textit{Justification:} Since $\widetilde{\mathcal{T}}$ consists of the $k$ smallest values from a larger ensemble, the values $q_i$ are concentrated at the lower tail of the distribution. $S(\mathbf{q})$ is their weighted average. Thus, the deviation $q_i - S(\mathbf{q})$ is naturally small. For a reasonable choice of $\lambda$ (not vanishingly small), this condition holds.
	
	Since $q' < q_{k}(\mathcal{T}_N)$ (the new model is strictly more pessimistic than the one it replaces), and $S$ is monotonically non-decreasing, we conclude:
	\begin{equation*}
		f(\mathcal{T}_N \cup \{\tau'\}) \le f(\mathcal{T}_N).
	\end{equation*}
	Taking expectations yields the final result.
	
	In both Case A and Case B, we show that for any possible realization of the new model $\tau'$, the following inequality holds:
	\begin{equation*}
		f(\mathcal{T}_N \cup \{\tau'\}) \leq f(\mathcal{T}_N).
	\end{equation*}
	Since this holds for all outcomes of the random variable $\tau'$, it must also hold in expectation:
	\begin{equation*}
		\mathbb{E}_{\tau'}[f(\mathcal{T}_N \cup \{\tau'\})] \leq \mathbb{E}_{\tau'}[f(\mathcal{T}_N)] = f(\mathcal{T}_N).
	\end{equation*}
	This inequality holds for any fixed ensemble $\mathcal{T}_N$. We now take the expectation over the random sampling of $\mathcal{T}_N$ on both sides:
	\begin{equation*}
		\mathbb{E}_{\mathcal{T}_N} \left[ \mathbb{E}_{\tau'}[f(\mathcal{T}_N \cup \{\tau'\})] \right] \leq \mathbb{E}_{\mathcal{T}_N}[f(\mathcal{T}_N)].
	\end{equation*}
	By definition of i.i.d. sampling, the left-hand side is equivalent to the expectation over a model ensemble of size $N + 1$. Thus, we obtain the final conclusion:
	\begin{equation*}
		\mathbb{E}_{\mathcal{T}_{N+1}} [Q_{N+1, k}^\pi] \leq \mathbb{E}_{\mathcal{T}_N} [Q_{N, k}^\pi].
	\end{equation*}
\end{proof}
\begin{theorem}
	\label{appendix:theorem4}
	The Optimal Hybrid Belief Bellman Operator $\mathcal{\widehat{B}}^*$ (Eq.~\eqref{optim:eq3}) is a contraction mapping. Repeatedly applying the operator $\mathcal{\widehat{B}}^*$ to any initial function $Q:\mathcal{S}\times{\mathcal{A}}\to\mathbb{R}$ yields a sequence  converging to $Q^*$. The corresponding optimal policy is
	\begin{equation*}
		\pi^*=(\nabla\psi)^{-1}\big(\frac{1}{\beta}(Q^*-z)+\nabla\psi(\mu)\big),
	\end{equation*} 
	where $z$ denotes the Lagrange multiplier introduced to enforce probability normalization.
\end{theorem}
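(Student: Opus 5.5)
The plan has three parts: show that the Bregman-regularized value is nonexpansive in $Q$, show that the $\alpha$-weighted aggregation over $\widetilde{\mathcal{T}}$ is nonexpansive, and then apply Banach's fixed-point theorem. The policy formula comes separately from the first-order conditions of the inner maximization.

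\textbf{Step 1: the regularized value is 1-Lipschitz.} For each state $s$, the map
\[
Q(s,\cdot)\mapsto V^*(s)=\max_{\pi(\cdot|s)\in\Delta(\mathcal{A})}\bigl\{\langle \pi,Q(s,\cdot)\rangle-\beta D_\psi(\pi,\mu(\cdot|s))\bigr\}
\]
is a maximum of functions that are affine in $Q$. Take $Q_1,Q_2$ and let $\pi_1$ be the maximizer for $Q_1$. Then
\[
V_1^*(s)-V_2^*(s)\le \langle \pi_1,Q_1-Q_2\rangle\le\|Q_1-Q_2\|_\infty,
\]
because the Bregman term is identical on both sides. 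Swapping the roles of $Q_1$ and $Q_2$ gives $|V_1^*(s)-V_2^*(s)|\le\|Q_1-Q_2\|_\infty$. This step needs no smoothness of $\psi$, only that the maximum is attained. Attainment holds since $\Delta(\mathcal{A})$ is compact and $D_\psi$ is lower semicontinuous.

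\textbf{Step 2: the aggregation is nonexpansive.}
\begin{itemize}
\item If the weights $\alpha_i$ are treated as fixed, the aggregation $\sum_{\tau_i\in\widetilde{\mathcal{T}}}\alpha_i\mathbb{E}_{s'\sim\tau_i}[\cdot]$ is a convex combination of expectations. Its $\ell_\infty$ Lipschitz constant is therefore $1$, so $\|\widehat{\mathcal{B}}^*Q_1-\widehat{\mathcal{B}}^*Q_2\|_\infty\le\gamma\|Q_1-Q_2\|_\infty$ immediately.
\item If the weights depend on $Q$ through the softmax in $q_{\tau_i}$, I would reuse the gradient computation from the proof of Theorem~\ref{appendix:proposition2}. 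The soft-min aggregation $f(\mathbf{q})$ satisfies $\|\nabla f\|_1\le 1+\frac{\Delta_{\max}}{2\lambda}$. Under the same condition on $\lambda$, this yields a contraction with modulus $\gamma(1+\frac{\Delta_{\max}}{2\lambda})<1$.
\item Alternatively, Lemma~\ref{appendix:lemma2} (isotonicity) can be combined with translation equivariance, $f(\mathbf{q}+c\mathbf{1})=f(\mathbf{q})+c$. The Blackwell-type argument then gives exactly modulus $\gamma$.
\end{itemize}
I would present the fixed-weight case and remark that the $Q$-dependent case follows by the same reasoning as Theorem~\ref{appendix:proposition2}.

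\textbf{Step 3: fixed point and optimal policy.} Banach's fixed-point theorem then gives a unique fixed point $Q^*$, and iterating $\widehat{\mathcal{B}}^*$ from any bounded initial $Q$ converges to it geometrically. To identify $\pi^*$, I would write the Lagrangian of the inner problem at $Q^*$ with a multiplier $z(s)$ for $\sum_a\pi(a|s)=1$:
\[
\mathcal{L}=\langle\pi,Q^*\rangle-\beta D_\psi(\pi,\mu)-z\bigl(\textstyle\sum_a\pi(a)-1\bigr).
\]
Using $\nabla_\pi D_\psi(\pi,\mu)=\nabla\psi(\pi)-\nabla\psi(\mu)$, stationarity gives $Q^*-\beta(\nabla\psi(\pi)-\nabla\psi(\mu))-z=0$. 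Strict convexity of $\psi$ makes $\nabla\psi$ injective, so this inverts to the stated form $\pi^*=(\nabla\psi)^{-1}\bigl(\frac{1}{\beta}(Q^*-z)+\nabla\psi(\mu)\bigr)$. Strict concavity of the objective shows this stationary point is the unique maximizer. Uniqueness also makes $\pi^*$ well defined and confirms that it attains $V^*$.

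\textbf{Main obstacle.} The hardest part is Step 3, specifically the nonnegativity constraints $\pi\ge0$, which the Lagrangian ignores. For the entropic part of $\psi$ (here $\omega>0$), the gradient blows up at the boundary of the simplex, which keeps the solution strictly interior. For the purely Euclidean case $\omega=0$, I would either interpret $(\nabla\psi)^{-1}$ as including projection onto the simplex, or assume the solution is interior. A secondary issue is the $Q$-dependence of $\alpha$ in Step 2, which I would handle with the $\lambda$ condition as described there.
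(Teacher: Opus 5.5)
Your Step 1, the second bullet of Step 2, and the Lagrangian computation in Step 3 are exactly the paper's argument. The paper bounds $V_1^*-V_2^*$ via the maximizer $\pi_1^*$, writes $\widehat{\mathcal{B}}^*Q=r+\gamma f(\mathbf{x}(Q))$ with $x_i(Q)=\mathbb{E}_{s'\sim\tau_i}[V_Q^*(s')]$ and softmin weights $\alpha_i(\mathbf{x})$, reuses $\|\nabla f\|_1\le 1+\frac{\Delta_{\max}}{2\lambda}$ from Theorem~\ref{appendix:proposition2}, and obtains $\pi^*$ from the same stationarity condition.

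There are two differences. First, the weights genuinely depend on $Q$, since both the bottom-$k$ selection and the softmax are computed from the $x_i(Q)$. So the fixed-weight case you plan to present is not the operator in the statement, and you should lead with a $Q$-dependent argument. Your Blackwell-type variant is the better choice and is actually stronger than the paper's. The partials $\alpha_j\bigl(1-\frac{x_j-f}{\lambda}\bigr)$ always sum to $1$. Once they are nonnegative (Lemma~\ref{appendix:lemma2}, $\lambda\ge\Delta_{\max}$), the $\ell_\infty$-Lipschitz constant of $f$ is exactly $1$, and you get modulus $\gamma$, as the main-text version of the theorem claims. The paper's triangle-inequality bound only gives $\gamma\bigl(1+\frac{\Delta_{\max}}{2\lambda}\bigr)$, which is below $1$ only under the separate condition $\lambda>\frac{\gamma\Delta_{\max}}{2(1-\gamma)}$. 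Composing with the bottom-$k$ selection does no harm, because order statistics are monotone and translation-equivariant and $f$ is symmetric in its arguments.

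Second, your Step 3 only shows that $\pi^*$ is the unique maximizer of the inner problem at $Q^*$. That alone does not justify calling it optimal. The paper adds a comparison step: for any $\pi'$, the definition of $V^*$ gives $V^*\ge\bar{\mathcal{B}}^{\pi'}V^*$, and iterating yields $V^*\ge V^{\pi'}$. Hence $\pi^*$ dominates every policy under the regularized objective, and you should include this step. Iterating the inequality requires the evaluation operator $\bar{\mathcal{B}}^{\pi'}$ to be monotone, which is not automatic because the softmin weights make it nonlinear. The paper passes over this point, but Lemma~\ref{appendix:lemma2} supplies monotonicity under the same condition $\lambda\ge\Delta_{\max}$ your Blackwell route already uses. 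Your discussion of interiority of the maximizer ($\omega>0$ versus the Euclidean case $\omega=0$) is a refinement the paper omits.
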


\begin{proof}[Proof of Theorem~\ref{appendix:theorem4}.]
	We now prove that $\mathcal{\widehat{B}}^*$ is a $\gamma$-contraction. For any $Q_1,Q_2\in\mathbb{R}^{|\mathcal{S}|\times|\mathcal{A}|}$, let
	$$V_1^*(s) = \max_{\pi}\big\{\mathop{\mathbb{E}}\limits_{a\sim\pi}[Q_1(s,a)]-\beta D_\psi(\pi(\cdot|s),\mu(\cdot|s))\big\}$$
	$$V_2^*(s) = \max_{\pi}\big\{\mathop{\mathbb{E}}\limits_{a\sim\pi}[Q_2(s,a)]-\beta D_\psi(\pi(\cdot|s),\mu(\cdot|s))\big\}$$
	According to definition, we have:
	$$V_1^*(s) = \mathop{\mathbb{E}}\limits_{a\sim\pi_1^*}[Q_1(s,a)]-\beta D_\psi(\pi_1^*,\mu),$$
	$$V_2^*(s) \ge \mathop{\mathbb{E}}\limits_{a\sim\pi_1^*}[Q_2(s,a)]-\beta D_\psi(\pi_1^*,\mu).$$
	Then 
	\begin{equation*}
		\begin{aligned}
			V_1^*(s) - V_2^*(s) &\le \left(\mathbb{E}_{\pi_1^*}[Q_1] - \beta D_\psi\right) - \left(\mathbb{E}_{\pi_1^*}[Q_2] - \beta D_\psi\right) 
			\\& =\mathbb{E}_{\pi_1^*}[Q_1(s,a) - Q_2(s,a)]
			\\&\le\mathbb{E}_{\pi_1^*}[\Vert Q_1 - Q_2 \Vert_\infty]
			\\&=\Vert Q_1 - Q_2 \Vert_\infty.
		\end{aligned}
	\end{equation*}
	We have:
	\begin{equation*}
		\Vert V_1^* - V_2^* \Vert_\infty \le \Vert Q_1 - Q_2 \Vert_\infty.
	\end{equation*}
	
	Similar to the derivation of operator convergence in Theorem~\ref{appendix:proposition2}, we define: 
	\begin{equation*}
		x_{i}(Q) \triangleq \mathop{\mathbb{E}}\limits_{s'\sim\tau_i}[V_Q^*(s')].
	\end{equation*}
	
	Then, we have: $(\mathcal{\widehat{B}}^*Q)(s,a) = r(s,a) + \gamma f(\mathbf{x}(Q))$, where $f(\mathbf{x}) = \sum_{i=0}^{k-1} \alpha_i(\mathbf{x}) x_i$. For any $(s,a)\in|\mathcal{S}|\times|\mathcal{A}|$,
	\begin{equation*}
		\begin{aligned}
				|(\mathcal{\widehat{B}}^*Q_1)(s,a) - (\mathcal{\widehat{B}}^*Q_2)(s,a)| &= \left| \left(r(s,a) + \gamma f(\mathbf{x}(Q_1))\right) - \left(r(s,a) + \gamma f(\mathbf{x}(Q_2))\right) \right| \\
				&= \gamma \left| f(\mathbf{x}(Q_1)) - f(\mathbf{x}(Q_2)) \right| \\
				&\le \gamma\left( 1 + \frac{\Delta_{\max}}{2\lambda} \right) \| \mathbf{x}(Q_1) - \mathbf{x}(Q_2) \|_\infty.
		\end{aligned}
	\end{equation*}
	
	We now consider the difference between the input vectors $\| \mathbf{x}(Q_1) - \mathbf{x}(Q_2) \|_\infty$. For any specific model $\tau_i$, the difference is expressed as:
	\begin{equation*}
		\begin{aligned}
			|x_i(Q_1) - x_i(Q_2)| &= \left| \mathop{\mathbb{E}}\limits_{s'\sim\tau_i}[V_{Q_1}^*(s')] - \mathop{\mathbb{E}}\limits_{s'\sim\tau_i}[V_{Q_2}^*(s')] \right| \\
			&\le \mathop{\mathbb{E}}\limits_{s'\sim\tau_i} \left[ |V_{Q_1}^*(s') - V_{Q_2}^*(s')| \right] \\
			&\le \max_{s'} |V_{Q_1}^*(s') - V_{Q_2}^*(s')| \\
			&= \| V_{Q_1}^* - V_{Q_2}^* \|_\infty.
		\end{aligned}
	\end{equation*}
	Then:
	\begin{equation*}
		|(\mathcal{\widehat{B}}^*Q_1)(s,a) - (\mathcal{\widehat{B}}^*Q_2)(s,a)| \le \gamma\left( 1 + \frac{\Delta_{\max}}{2\lambda} \right)  \| Q_1 - Q_2 \|_\infty.
	\end{equation*}

	Since the above holds for all $(s,a)$, we take its supremum:
	\begin{equation*}
		\Vert \mathcal{\widehat{B}}^*Q_1 - \mathcal{\widehat{B}}^*Q_2 \Vert_\infty = \sup_{s,a} \vert (\mathcal{\widehat{B}}^*Q_1)(s,a) - (\mathcal{\widehat{B}}^*Q_2)(s,a) \vert \le \gamma\left( 1 + \frac{\Delta_{\max}}{2\lambda} \right)  \Vert Q_1 - Q_2 \Vert_\infty.
	\end{equation*}
	
	Based on the results in Section~\ref{appendix:optim_policy}, the optimal policy corresponding to the value function $V^*(s)$ is given by:
	\begin{equation*}
		\pi^*=(\nabla\psi)^{-1}\big(\frac{1}{\beta}(Q^*-z)+\nabla\psi(\mu)\big).
	\end{equation*} 
	We now show that $\pi^*=(\nabla\psi)^{-1}\big(\frac{1}{\beta}(Q^*-z)+\nabla\psi(\mu)\big)$ is the optimal policy. For any policy $\pi'$, according to definition of $V^*(s)$, we have:
	$$V^*(s) \ge (\bar{\mathcal{B}}^{\pi'} V^*)(s),$$
	where $\bar{\mathcal{B}}$ is Bellman evaluation operator. Then
	$$V^*\ge \lim_{k \to \infty} (\bar{\mathcal{B}}^{\pi'})^k V^*=V^{\pi'}.$$
	Since this inequality holds for any policy $\pi'$
	, we conclude that $\pi^*=(\nabla\psi)^{-1}\big(\frac{1}{\beta}(Q^*-z)+\nabla\psi(\mu)\big)$ is optimal for regularized objective function.
\end{proof}

\begin{theorem}[Monotonic Improvement]
	\label{appendix:theorem5}
	Starting from an arbitrary initial policy $\pi_0$, consider the sequence of policies $\{\pi_i\}$ generated by iteratively solving the Bregman-regularized subproblem: $\pi_{i+1}=(\nabla\psi)^{-1}\big(\frac{1}{\beta}(Q-z)+\nabla\psi(\pi_i)\big)$. Then it holds that: $\eta(\pi_{i+1})\ge\eta(\pi_i)$.
\end{theorem}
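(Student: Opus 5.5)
This is a mirror-descent / soft policy iteration argument. Fix iteration $i$ and let $Q=Q^{\pi_i}$ be the fixed point of the Hybrid Belief Bellman Evaluation Operator $\mathcal{B}^{\pi_i}$ (Theorem~\ref{appendix:proposition2}). Here $Q$ is a Q-function for the unregularized objective, so $\eta(\pi_i)=\mathbb{E}_{\rho_0,\pi_i}[Q^{\pi_i}]$, up to the approximation controlled there. I read the update $\pi_{i+1}=(\nabla\psi)^{-1}\big(\frac{1}{\beta}(Q-z)+\nabla\psi(\pi_i)\big)$ as the first-order optimality condition of the state-wise proximal problem
\begin{equation*}
\pi_{i+1}(\cdot|s)=\arg\max_{p\in\Delta(\mathcal{A})}\Big\{\langle Q(s,\cdot),p\rangle-\beta D_\psi(p,\pi_i(\cdot|s))\Big\}.
\end{equation*}
This is the same Lagrangian computation used for Theorem~\ref{appendix:theorem4}, with $z$ the normalization multiplier. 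Strict convexity of $\psi$ makes the maximizer unique.

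\textbf{Step 1 (one-step improvement).} Since $p=\pi_i(\cdot|s)$ is feasible and $D_\psi(\pi_i,\pi_i)=0$, we get for every $s$
\begin{equation*}
\mathbb{E}_{a\sim\pi_{i+1}}[Q^{\pi_i}(s,a)]\ \ge\ \mathbb{E}_{a\sim\pi_{i+1}}[Q^{\pi_i}(s,a)]-\beta D_\psi(\pi_{i+1},\pi_i)\ \ge\ \mathbb{E}_{a\sim\pi_i}[Q^{\pi_i}(s,a)]=V^{\pi_i}(s).
\end{equation*}
Nonnegativity of the Bregman divergence is what makes the result metric-agnostic: only strict convexity of $\psi$ is used.

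\textbf{Step 2 (propagation through the operator).} Let $\mathcal{B}^{\pi_{i+1}}$ be the Hybrid Belief evaluation operator for $\pi_{i+1}$, with the same pessimistic subset and weights $\alpha_i$ evaluated at the current $Q$. Step 1 gives $\mathcal{B}^{\pi_{i+1}}Q^{\pi_i}\ge \mathcal{B}^{\pi_i}Q^{\pi_i}=Q^{\pi_i}$ pointwise. By the isotonicity of Lemma~\ref{appendix:lemma2} (under $\lambda\ge\Delta_{\max}$), repeated application yields
\begin{equation*}
Q^{\pi_i}\le\mathcal{B}^{\pi_{i+1}}Q^{\pi_i}\le(\mathcal{B}^{\pi_{i+1}})^2Q^{\pi_i}\le\cdots\to Q^{\pi_{i+1}},
\end{equation*}
where the limit holds by the contraction part of Theorem~\ref{appendix:proposition2}. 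Then $\eta(\pi_{i+1})=\mathbb{E}_{\rho_0,\pi_{i+1}}[Q^{\pi_{i+1}}]\ge\mathbb{E}_{s\sim\rho_0}\mathbb{E}_{\pi_{i+1}}[Q^{\pi_i}]\ge\mathbb{E}_{\rho_0}[V^{\pi_i}]=\eta(\pi_i)$, again using Step 1 at the initial state.

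\textbf{Main obstacle.} The weights $\alpha$ and the bottom-$k$ selection depend on $q_{\tau}(s,a)=\mathbb{E}_{\tau,\pi}[Q]$, so changing $\pi$ changes both the inputs to the Boltzmann aggregator $f$ and possibly which models are selected. I would handle this by writing $\mathcal{B}^{\pi}Q=r+\gamma f(\mathbf{q}^\pi(Q))$. Step 1 gives componentwise $\mathbf{q}^{\pi_{i+1}}(Q^{\pi_i})\ge\mathbf{q}^{\pi_i}(Q^{\pi_i})$, and the nonnegative partial derivatives of $f$ (Lemma~\ref{appendix:lemma2}) then give the required inequality for a fixed subset. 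For the reselection issue, I would note that bottom-$k$ of a componentwise larger vector dominates bottom-$k$ of the smaller one coordinatewise after sorting. Combined with monotonicity of $f$ in its sorted arguments, this should close the gap. A further caveat is that equating $\eta$ with $\mathbb{E}[Q^\pi]$ holds only asymptotically or up to the bound of Theorem~\ref{appendix:proposition2}; I would state the improvement for the operator-defined objective and note the error term.
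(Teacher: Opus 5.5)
Your argument is correct under your reading, and your reading is legitimate because the statement leaves $Q$ unspecified. However, it proves improvement for a different instantiation of the update than the paper's, by a different route.

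\textbf{The paper's route.} The paper takes $\pi_{i+1}$ to be the exact maximizer of the full regularized objective $\widehat{\eta}(\cdot,\pi_i)$. So the $Q$ in the update is the regularized optimal $Q^*$ of Theorem~\ref{appendix:theorem4} with $\mu=\pi_i$. It then proves the sandwich $Q^{\pi_{i+1}}\ge Q^{\pi_{i+1}}_{\pi_i}\ge Q^{\pi_i}_{\pi_i}=Q^{\pi_i}$. The first inequality holds because $\mathcal{B}^{\pi_{i+1}}$ exceeds the regularized operator $\widehat{\mathcal{B}}^{\pi_{i+1}}$ by $\gamma\beta\,\mathbb{E}[D_\psi(\pi_{i+1}\Vert\pi_i)]\ge 0$, combined with isotonicity (Lemma~\ref{appendix:lemma2}). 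The second holds by global optimality of $\pi_{i+1}$ in the regularized problem.

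\textbf{Your route.} You take $Q=Q^{\pi_i}$, so each iterate is a single statewise mirror-descent step. Improvement then comes from the one-step inequality $\mathbb{E}_{\pi_{i+1}}[Q^{\pi_i}]\ge V^{\pi_i}$ plus a monotone-iteration (policy-improvement) argument.

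\textbf{Comparison.} Both routes rest on the same two facts: $D_\psi\ge 0$ and isotonicity under $\lambda\ge\Delta_{\max}$. Yours needs only one evaluation of $\pi_i$ per iteration. It also needs only statewise optimality of the proximal step, which follows directly from the update formula and strict convexity of $\psi$; the paper instead solves a regularized MDP and invokes Theorem~\ref{appendix:theorem4}. Your version also handles explicitly how the bottom-$k$ subset and the Boltzmann weights depend on $\pi$, via order-statistic domination. The paper's identity $\mathcal{B}^{\pi}Q-\widehat{\mathcal{B}}^{\pi}Q=\gamma\beta\,\mathbb{E}[D_\psi]$ silently freezes that dependence. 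The paper's route, for its part, matches the algorithm described in the text, where $\mu$ is set to the previous regularized optimum.

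\textbf{Adapting to the paper's reading.} Your Step~1 does not apply verbatim there. Replace it by $Q^*\ge Q^{\pi_i}$, which follows from $\widehat{\mathcal{B}}^*Q^{\pi_i}\ge\mathcal{B}^{\pi_i}Q^{\pi_i}=Q^{\pi_i}$ and isotonicity of $\widehat{\mathcal{B}}^*$. Then note $Q^*=Q^{\pi_{i+1}}_{\pi_i}$, after which your isotonicity machinery finishes exactly as in the paper.
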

\begin{proof}[Proof of Theorem~\ref{appendix:theorem5}.]
	Let $Q^{\pi}$ denote the fixed point of the policy $\pi$ under $\mathcal{B}^{\pi}$.
	The core idea is to show $Q^{\pi_{i+1}}\ge Q^{\pi_{i}}$. Firstly, we define regularized evaluation operator:
	\begin{equation*}
		\begin{aligned}
			(\mathcal{\widehat{B}}^{\pi}Q)(s,a)=r(s,a)
			+\gamma\sum_{\tau_i\in\widetilde{\mathcal{T}}}\alpha_i\mathop{\mathbb{E}}\limits_{s'\sim\tau_i, a'\sim\pi}\big[Q(s',a')-\beta D_\psi(\pi(\cdot|s'),\mu(\cdot|s'))\big].
		\end{aligned}
	\end{equation*}
	Let $Q^{\pi}_{\mu}$ denote the fixed point of $\mathcal{\widehat{B}}^{\pi}$, i.e., $Q^{\pi}_{\mu}=\mathcal{\widehat{B}}^{\pi}Q^{\pi}_{\mu}$. Then, we have:
	\begin{equation*}
		\begin{aligned}
			(\mathcal{B}^\pi Q^{\pi_{i+1}}_{\pi_{i}})(s,a)&-(\widehat{\mathcal{B}}^{\pi}Q^{\pi_{i+1}}_{\pi_{i}})(s,a)=\gamma\mathop{\mathbb{E}}\limits_{s',\pi}\bigg[\beta D_{\psi}\big(\pi_{i+1}(\cdot|s')\Vert\pi_{i}(\cdot|s')\big)\bigg].
		\end{aligned}
	\end{equation*}
	
	Noticed that $D_{\psi}\big(\pi_{i+1}(\cdot|s')\Vert\pi_{i}(\cdot|s')\big)\ge 0$. According to Lemma~\ref{appendix:lemma2}, the converged Q-function also satisfies this inequality, i.e.,
	\begin{equation*}
		Q^{\pi_{i+1}}\ge Q^{\pi_{i+1}}_{\pi_{i}}.
	\end{equation*}
	
	Next, we show that $Q^{\pi_{i+1}}_{\pi_{i}}\ge Q^{\pi_{i}}$. By definition, $\pi_{i+1}$ is the policy that maximizes the regularized objective function with respect to the reference policy $\pi_i$. Therefore, the value it achieves under this objective is greater than or equal to that of any other policy. Then we have:
	\begin{equation*}
		\widehat{\eta}(\pi_{i+1};\pi_{i})\ge\widehat{\eta}(\pi_{i};\pi_{i})=\eta(\pi_i).
	\end{equation*}
	It is equal to $Q^{\pi_{i+1}}_{\pi_{i}}\ge Q^{\pi_{i}}$. 
	
	Based on the analysis of both steps, we arrive at the following conclusion: 
	\begin{equation*}
		Q^{\pi_{i+1}}\ge Q^{\pi_{i}}.
	\end{equation*}
\end{proof}
\section{Theoretic Results on Hybrid Belief}
\label{appendix:hybrid}
\subsection{The Optimal $\alpha$}
We need to solve the following regularized optimization problem to find the optimal weight coefficients $\tilde{\alpha} = \{\alpha_0, \alpha_1, \ldots, \alpha_{k-1}\}$:

\begin{equation*}
	\begin{aligned}
		&\min_{\tilde{\alpha}} \left\{ \sum_{i=0}^{k-1} \alpha_i q_{\tau_i}(s, a) - \lambda \mathcal{H}(\tilde{\alpha}) \right\},
		\\& \mathrm{s.t.}\quad \sum_{i=0}^{k-1} \alpha_i = 1, \\
		&\quad\quad\alpha_i \geq 0, \quad \forall i \in \{0, 1, \ldots, k-1\},
	\end{aligned}
\end{equation*}
where the Shannon entropy $\mathcal{H}(\tilde{\alpha}) = -\sum_{i=0}^{k-1} \alpha_i \log(\alpha_i)$. Substituting this into the objective function, the original problem is equivalent to:

\begin{equation*}
	\min_{\tilde{\alpha}} \left\{ \sum_{i=0}^{k-1} \alpha_i q_i + \lambda  \alpha_i \log(\alpha_i) \right\},
\end{equation*}
which matches the optimization problem solved when defining the likelihood in the original paper.

For simplicity, we use $q_i$ to denote $q_{\tau_i}(s, a)$. The Lagrangian function $\mathcal{L}(\tilde{\alpha}, z, \tilde{\mu})$ is constructed as follows:
\begin{equation*}
	\begin{aligned}
		\mathcal{L}(\tilde{\alpha}, z, \tilde{\mu}) &= \sum_{i=0}^{k-1} \alpha_i q_i + \lambda \sum_{i=0}^{k-1} \alpha_i \log(\alpha_i) \\&
		- z \left( \sum_{i=0}^{k-1} \alpha_i - 1 \right) - \sum_{i=0}^{k-1} \mu_i \alpha_i.
	\end{aligned}
\end{equation*}

Here, $z$ is the Lagrange multiplier corresponding to the equality constraint $\sum \alpha_i = 1$, and $\mu_i$ are the Lagrange multipliers corresponding to the inequality constraints $\alpha_i \geq 0$.

Due to the presence of the entropy term $\log(\alpha_i)$, the optimal solution must lie in the interior of the probability simplex, i.e., $\alpha_i > 0$. According to the complementary slackness condition $\mu_i \alpha_i = 0$, when $\alpha_i > 0$, we must have $\mu_i = 0$. Under this condition, the stationarity condition simplifies to:
\begin{equation*}
	q_i + \lambda (\log(\alpha_i) + 1) - z = 0.
\end{equation*}

The value of $\alpha_i$ is determined by solving the equation:
\begin{align*}
	\lambda \log(\alpha_i) &= z - \lambda - q_i, \\
	\log(\alpha_i) &= \frac{z - \lambda}{\lambda} - \frac{q_i}{\lambda}, \\
	\alpha_i &= \exp\left(\frac{z - \lambda}{\lambda} - \frac{q_i}{\lambda}\right) \\
	&= \exp\left(\frac{z - \lambda}{\lambda}\right) \cdot \exp\left(-\frac{q_i}{\lambda}\right).
\end{align*}

Since $\exp\left(\frac{z - \lambda}{\lambda}\right)$ is a constant that does not depend on the index $i$, we can conclude that:
\begin{equation*}
	\alpha_i \propto \exp\left(-\frac{1}{\lambda} q_i\right).
\end{equation*}
Let $\alpha_i = C \cdot \exp\left(-\frac{1}{\lambda} q_i\right)$, then:
\begin{equation*}
	C = \frac{1}{\sum_{j=0}^{k-1} \exp\left(-\frac{1}{\lambda} q_j\right)}.
\end{equation*}

Substituting the constant $C$ back into the expression yields the final form of the optimal solution:
\begin{equation*}
	\alpha_i = \frac{\exp\left(-\frac{1}{\lambda} q_i\right)}{\sum_{j=0}^{k-1} \exp\left(-\frac{1}{\lambda} q_j\right)}.
\end{equation*}
By substituting the optimal solution $\alpha_i \propto \exp(-\frac{1}{\lambda} q_i)$ into the primal problem, the resulting optimal value is given by:
\begin{equation*}
	-\lambda\log(\sum_{j=0}^{k-1}\exp(-\frac{q_j}{\lambda})).
\end{equation*}
\section{Theoretic Results on Regularized Policy Iteration}
\label{appendix:policy}
\subsection{Property of Bregman Divergence}
Mirror Descent refers to a class of optimization algorithms that generalize gradient descent by employing Bregman divergences to measure the proximity between successive iterates. Unlike standard gradient descent, which relies on Euclidean geometry, mirror descent adjusts the update direction by defining a notion of distance in the parameter space using Bregman divergence. Its update formula is:
\begin{equation*}
	\theta_{t+1} = \arg\max_{\theta \in \Theta} \left\{ \langle \nabla f(\theta_t), \theta \rangle -\beta D_{\psi}(\theta, \theta_t) \right\},
\end{equation*}
where:
\begin{itemize}
	\item $f(\theta)$ is the objective function;
	\item $D_{\psi}(\theta, \theta_t)$ is the Bregman divergence based on the potential function $\psi(\theta)$, defined as:
	\begin{equation*}
		D_{\psi}(\theta, \theta_t) = \psi(\theta) - \psi(\theta_t) - \langle \nabla \psi(\theta_t), \theta - \theta_t \rangle.
	\end{equation*}
\end{itemize}
The following derivation shows that gradient descent with Bregman divergence regularization can be approximated as: $\theta_{t+1} = \theta_t +\frac{1}{\beta} [\nabla^2 \psi(\theta_t)]^{-1} \nabla f(\theta_t)$.
\paragraph{Local Approximation of Bregman Divergence.}

The Bregman divergence $D_{\psi}(\theta, \theta_t)$ can be approximated by a Taylor expansion near $\theta_t$:
\begin{equation*}
	D_{\psi}(\theta, \theta_t) \approx \frac{1}{2} (\theta - \theta_t)^T \nabla^2 \psi(\theta_t) (\theta - \theta_t).
\end{equation*}
Thus, the local form of the Bregman divergence is equivalent to the quadratic norm defined by $\nabla^2 \psi(\theta_t)$.
\paragraph{Update Formula.}

Substitute the approximation of the Bregman divergence into the optimization problem:
\begin{equation*}
	\theta_{t+1} \approx \arg\max_{\theta} \left\{\langle \nabla f(\theta_t), \theta \rangle - \frac{\beta}{2} (\theta - \theta_t)^T \nabla^2 \psi(\theta_t) (\theta - \theta_t) \right\}.
\end{equation*}
Taking the derivative of the objective function and setting it to zero, we get:
\begin{equation*}
	\nabla f(\theta_t) -\beta \nabla^2 \psi(\theta_t) (\theta_{t+1} - \theta_t) = 0.
\end{equation*}
Solving for $\theta_{t+1}$, we finally obtain:
\begin{equation*}
	\theta_{t+1} = \theta_t +\frac{1}{\beta} [\nabla^2 \psi(\theta_t)]^{-1} \nabla f(\theta_t),
\end{equation*}
\subsection{The optimal solution to the value function $V^{*}(s)$}
\label{appendix:optim_policy}
In the policy iteration, we need to solve the following constrained optimization problem to update from the reference policy $\mu$ to the new policy $\pi$:
\begin{equation*}
	\begin{aligned}
		&\max_{\pi} \left\{\mathop{\mathbb{E}}\limits_{a \sim \pi} [Q(s,a)] - \beta D_{\psi}(\pi, \mu) \right\},\\
		& \mathrm{s.t.}\quad\sum_a \pi(a) = 1.
	\end{aligned}
\end{equation*}

Here, $Q(s,a)$ is the action value, and $D_\psi(x,y)=\psi(x)-\psi(y)-\langle\nabla\psi(y),x-y\rangle$ is the Bregman divergence generated by the strictly convex potential function $\psi$.

We construct the Lagrangian function $L(\pi, z)$ for this optimization problem as follows:
\begin{equation*}
	\begin{aligned}
		L(\pi, z) &= \sum_a \pi(a) Q(s,a) - \beta \bigg(\psi(\pi) - \psi(\mu) 
		- \sum_a \frac{\partial \psi(\mu)}{\partial \mu(a)} (\pi(a) - \mu(a))\bigg)\\& - z (\sum_a \pi(a) - 1).
	\end{aligned}
\end{equation*}
where $z$ is the Lagrange multiplier corresponding to the constraint $\sum_a \pi(a) = 1$. To derive the stationarity condition, we compute the partial derivative of $L$ with respect to $\pi(a)$ and set it to zero:
\begin{equation*}
	\frac{\partial L}{\partial \pi(a)} = Q(s,a) - \beta \left( \frac{\partial \psi(\pi)}{\partial \pi(a)} - \frac{\partial \psi(\mu)}{\partial \mu(a)} \right) - z = 0.
\end{equation*}

This yields the following condition:
\begin{equation*}
	\beta \frac{\partial \psi(\pi^*)}{\partial \pi(a)} = Q(s,a) + \beta \frac{\partial \psi(\mu)}{\partial \mu(a)} - z.
\end{equation*}

We finally obtain:
\begin{equation*}
	\pi^* = (\nabla \psi)^{-1} \left( \frac{1}{\beta} (Q - z) + \nabla \psi(\mu) \right).
\end{equation*}

Here, $Q$ is the vector of Q-values, and all operations are element-wise. The scalar $z$ is a normalization constant determined by the constraint $\sum_a\pi^*(a)=1$.

\subsubsection{Example}
In this document, we derive the explicit functional form for $\pi^*$ when the Bregman divergence is the Kullback-Leibler (KL) divergence, which is generated by the negative entropy potential function.
\begin{equation*}
	\psi(\pi) = \sum_{a \in \mathcal{A}} \pi(a) \log \pi(a).
\end{equation*}

To compute its gradient, $\nabla \psi(\pi)$, we take the partial derivative with respect to each component $\pi(a)$. We use the product rule for the derivative of $x \log x$:
\begin{equation*}
	\frac{d}{dx}(x \log x) = (1) \cdot \log x + x \cdot \left(\frac{1}{x}\right) = \log x + 1.
\end{equation*}

Therefore, the $a$-th component of the gradient is:
\begin{equation*}
	\frac{\partial \psi(\pi)}{\partial \pi(a)} = \log \pi(a) + 1.
\end{equation*}

In vector form, the gradient is:
\begin{equation*}
	\nabla \psi(\pi) = \log \pi + \mathbf{1},
\end{equation*}
where the logarithm is applied element-wise and $\mathbf{1}$ is a vector of ones.

Next, we find the inverse of the gradient map. Let $Y = \nabla \psi(\pi)$. We want to solve for $\pi$ in terms of $Y$ to find the function $\pi = (\nabla \psi)^{-1}(Y)$.
\begin{equation*}
	\begin{aligned}
		Y &= \log \pi + \mathbf{1}, \\
		Y - \mathbf{1} &= \log \pi, \\
		\exp(Y - \mathbf{1}) &= \pi.
	\end{aligned}
\end{equation*}
Thus, the inverse gradient map is:
\begin{equation*}
	(\nabla \psi)^{-1}(Y) = \exp(Y - \mathbf{1}).
\end{equation*}

Now we substitute these specific forms back into the general solution for $\pi^*$.
\begin{equation*}
	\begin{aligned}
		\pi^* &= (\nabla \psi)^{-1} \left( \frac{1}{\beta} (Q - z) + \nabla \psi(\mu) \right) \\
		&= (\nabla \psi)^{-1} \left( \frac{1}{\beta} (Q - z) + (\log \mu + \mathbf{1}) \right) \\
		&= \exp\left( \left[ \frac{1}{\beta} (Q - z) + \log \mu + \mathbf{1} \right] - \mathbf{1} \right) \\
		&= \exp\left( \frac{1}{\beta} (Q - z) + \log \mu \right).
	\end{aligned}
\end{equation*}

We can simplify this expression further using the properties of exponents. For each component $a$:
\begin{equation*}
	\begin{aligned}
		\pi^*(a) &= \exp\left( \frac{Q(s,a) - z}{\beta} + \log \mu(a) \right) \\
		&= \exp\left( \frac{Q(s,a) - z}{\beta} \right) \cdot \exp(\log \mu(a)) \\
		&= \mu(a) \cdot \exp\left(\frac{Q(s,a)}{\beta}\right) \cdot \exp\left(-\frac{z}{\beta}\right).
	\end{aligned}
\end{equation*}
Here, the term $\exp(-z/\beta)$is constant with respect to the action for a given state $s$. This ensures the policy is properly normalized. The resulting explicit form is:
\begin{equation*}
	\pi^*(a) = \frac{\mu(a) \exp\left(\frac{Q(s,a)}{\beta}\right)}{\sum_{a' \in \mathcal{A}} \mu(a') \exp\left(\frac{Q(s,a')}{\beta}\right)}.
\end{equation*}

\section{Experimental Details}
\label{appendix:exp}
\subsection{Dataset}
\subsubsection{D4RL}
To comprehensively assess performance across diverse offline scenarios, our experimental assessment encompasses a comprehensive benchmark spanning eighteen distinct experimental settings. These domains arise from the combination of three continuous control tasks (hopper, walker2d, and halfcheetah) with six diverse offline datasets that vary in quality and collection strategy. 
\begin{itemize}
	\item random: dataset collected by a randomly initialized policy.
	\item expert: dataset collected by a fully-trained SAC agent.
	\item medium: dataset collected by a policy achieving approximately 33\% of the expert's performance.
	\item medium-expert: dataset composed of an equal mixture (50–50 split) of medium and expert data.
	\item medium-replay: dataset composed of the replay buffer of a policy trained until it reaches the performance level of the medium agent.
	\item full-replay: dataset composed of the complete replay buffer of the SAC agent.
\end{itemize}
\subsubsection{Optimal Liquidation}
This challenging task is an adaptation of the Optimal Liquidation Problem \cite{bao2019multi} for offline RL, following the setup introduced by \cite{rigter2023one}. We provide a brief overview of the task below.

The agent’s objective is to convert an initial holding of 100 units of currency A into currency B by a final time $T$, under a stochastically evolving exchange rate. At each timestep, the agent decides the proportion of its remaining currency A to exchange. 

At each decision point, the agent faces a classic dilemma: secure a known, immediate gain or wait for a potentially larger, uncertain future reward. Opting to wait is a high-stakes gamble. While it offers the potential for higher returns from a favorable exchange rate movement, it also exposes the agent to substantial losses if the rate declines and does not recover. On the other hand, safer approaches focus on reducing risk. These include converting small amounts over time to average out price changes, or converting the entire amount at once to prevent losses from a future drop in the exchange rate.

\paragraph{Markov Decision Process (MDP) Formulation.} The state $s_t=(t,m_t,p_t)$ is 3-dimensional, consisting of the current timestep $t\in\{0,1,\dots,T-1\}$, the remaining amount of currency A $m_t\in [0,100]$, and the current exchange rate $p_t\in [0,\infty)$. The initial exchange rate is drawn from $p_0\sim\mathcal{N}(1, 0.05^2)$. The continuous action space is defined as $\mathcal{A} = [-1, 1]$, where $a_t > 0$ specifies the conversion proportion of inventory $m_t$, and $a_t \le 0$ denotes a holding action. The reward equals the amount of Currency B realized at each step. The state transition dynamics are primarily governed by the evolution of the exchange rate $p_t$, which is modeled as an Ornstein-Uhlenbeck (OU) process:
\begin{equation*}
	dp_t=\theta(\mu-p_t)dt+\sigma dW_t,
\end{equation*}
where $W_t$ is a standard Wiener process. The process parameters are configured as $\theta=0.05$ (rate of mean reversion), $\mu=1.5$ (long-term equilibrium price), and $\sigma=0.2$ (volatility coefficient).

\paragraph{Offline Dataset Collection.} The dataset is constructed using a random behavioral policy. At each step, the policy selects a non-conversion decision with probability 0.8, and samples a conversion action with probability 0.2 by drawing the proportion uniformly from the feasible range.
\begin{table}[t!]
	\caption{Reference performance across different environments.}
	\label{appendix:tab1}
	\centering
	\begin{tabular}{c c c}
		\toprule
		\textbf{Environment} & \textbf{Random Policy} & \textbf{Expert Policy} \\
		\midrule
		Halfcheetah & -280.18 & 12135.0 \\
		Hopper & -20.27 & 3234.3 \\
		Walker2d & 1.63 & 4592.3 \\
		Offline optimal liquidation & 0.0 & 135.0 \\
		\bottomrule
	\end{tabular}
\end{table}
\begin{table}[t!]
	\centering
	\caption{Hyperparameters}
	\label{tab:hyperparameters}
	\begin{tabular}{l|c}
		\toprule
		\textbf{Parameter} & \textbf{Value} \\
		\midrule
		Dynamics model learning rate & $10^{-4}$ \\
		Policy learning rate & $3 \cdot 10^{-5}$ \\
		Critic (Q-value) learning rate & $3 \cdot 10^{-4}$ \\
		discounted factor ($\gamma$) & 0.99 \\
		Size of the model ensemble ($N$) & 10 \\
		Size of the pessimistic subset ($k$) & 5 \\
		Entropy coefficient ($\lambda$) & 0.33 \\
		Potential function coefficient ($\omega$) & 0.9 \\
		Bregman divergence coefficient ($\beta$) & 0.1 \\
		Layer size of policy & 256 \\
		Layer size of dynamics model & 512 \\
		Batch size for dynamics model learning & 512 \\
		Batch size for policy learning & 256 \\
		Dynamics model training epochs & 1000 \\
		Policy training steps & $2\cdot10^{6}$\\
		Maximal horizon of PhyB & 1000 \\
		\bottomrule
	\end{tabular}
\end{table}
\subsubsection{Standard reference performance}
The normalized score benchmarks an algorithm's performance on a standardized scale. On this scale, a random policy is set to 0 and an expert policy is set to 100. A score greater than 100 thus signifies that the policy learned from the offline dataset outperforms the online-trained expert policy. The score is computed as follows:
\begin{equation*}
	\mathrm{score}_{\mathrm{algo}}=100\times\frac{\mathrm{performance}_{\mathrm{algo}}-\mathrm{performance}_{\mathrm{expert}}}   {\mathrm{performance}_{\mathrm{expert}}-\mathrm{performance}_{\mathrm{random}}}.
\end{equation*}
The reference performance is reported in Table~\ref{appendix:tab1}.

\subsection{Hyperparameters} Table~\ref{tab:hyperparameters} presents the detailed hyperparameter configurations. For the simpler hopper environment, dynamics models with layer and batch sizes of 256 units are sufficient. However, the increased complexity of walker2d and halfcheetah requires larger architectures with 512-unit layers to ensure convergence. The selection of specific parameters, such as $\omega$, is based on heuristics.
\subsection{Dynamics model}
\label{appendix:dynamics_model} 
Following standard practice,  we parameterize dynamics models as neural networks that output Gaussian distributions over next states and rewards. We independently train 100 such models via maximum likelihood estimation, then randomly sample $N$ models to form an ensemble during prediction.
\subsection{Compute Infrastructure}
Our computational experiments are conducted on a server equipped with four NVIDIA GeForce RTX 3090 Ti GPUs, an Intel(R) Xeon(R) Platinum 8383C CPU @ 2.70GHz, and 256 GB of system memory.
\begin{figure}[t!]
	\centering
	\begin{subfigure}[b]{0.325\columnwidth}
		\includegraphics[width=\linewidth]{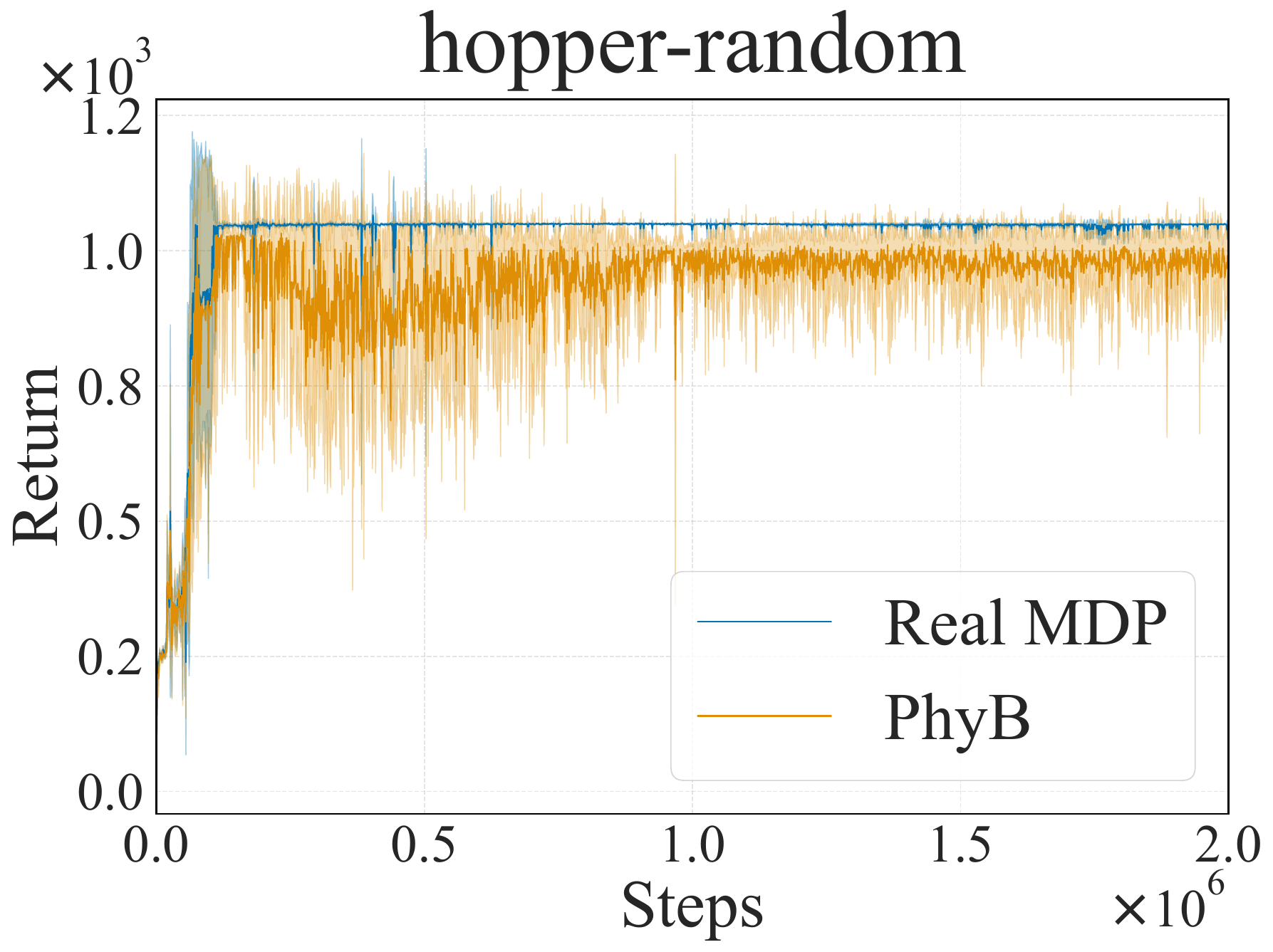} 
	\end{subfigure}
	\begin{subfigure}[b]{0.325\columnwidth}
		\includegraphics[width=\linewidth]{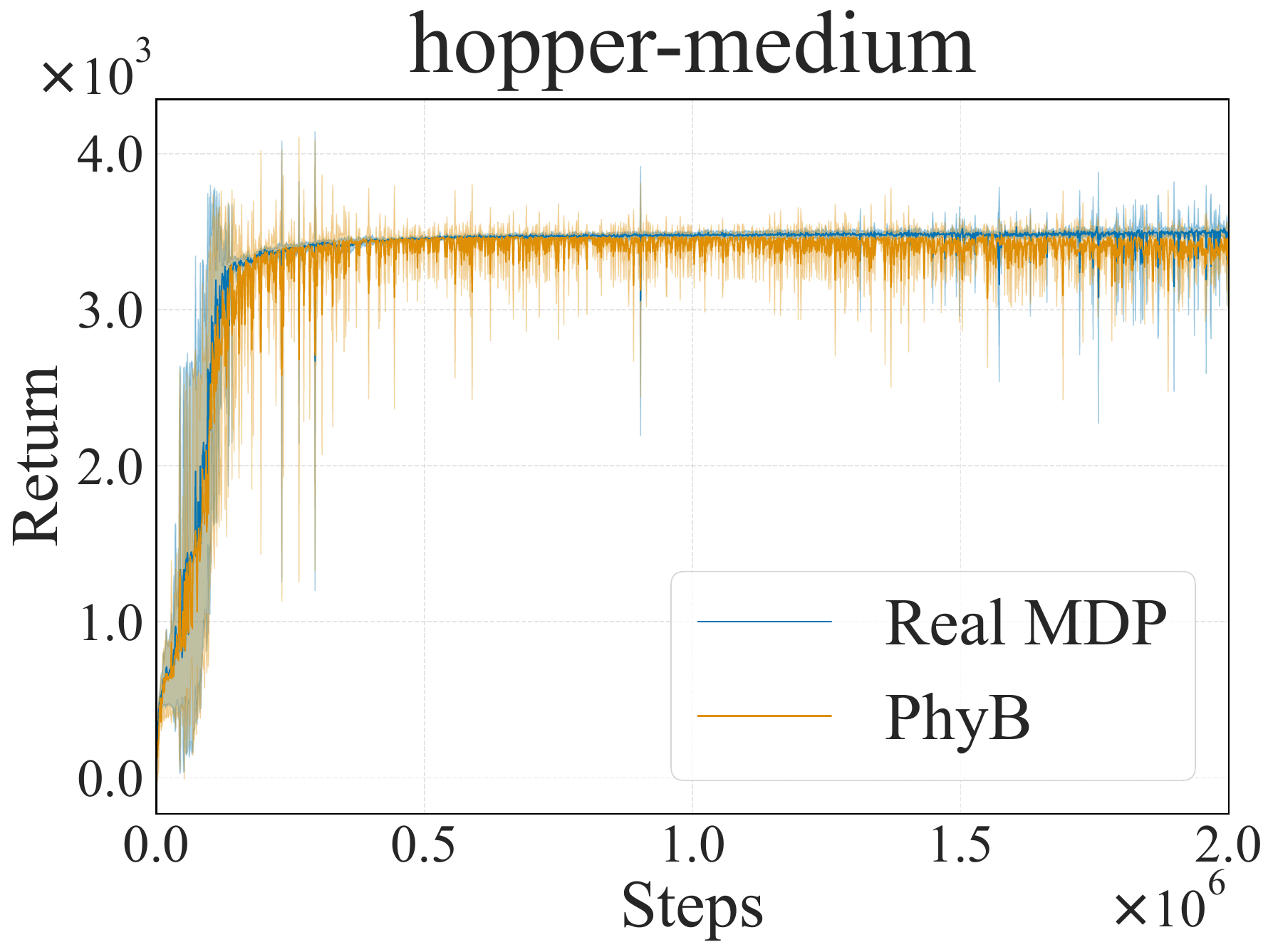} 
	\end{subfigure}
	\begin{subfigure}[b]{0.325\columnwidth}
		\includegraphics[width=\linewidth]{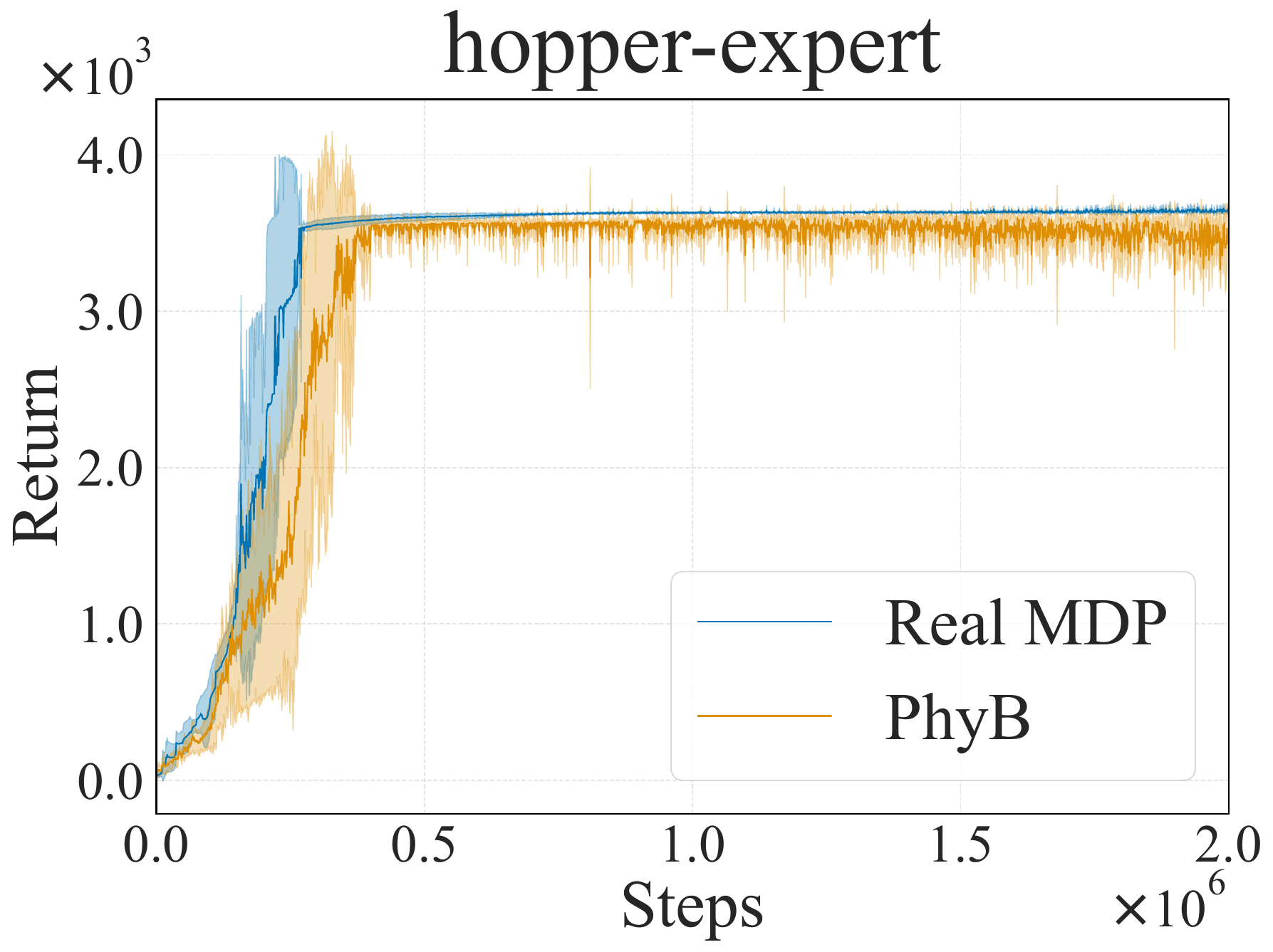} 
	\end{subfigure}
	\centering
	\begin{subfigure}[b]{0.325\columnwidth}
		\includegraphics[width=\linewidth]{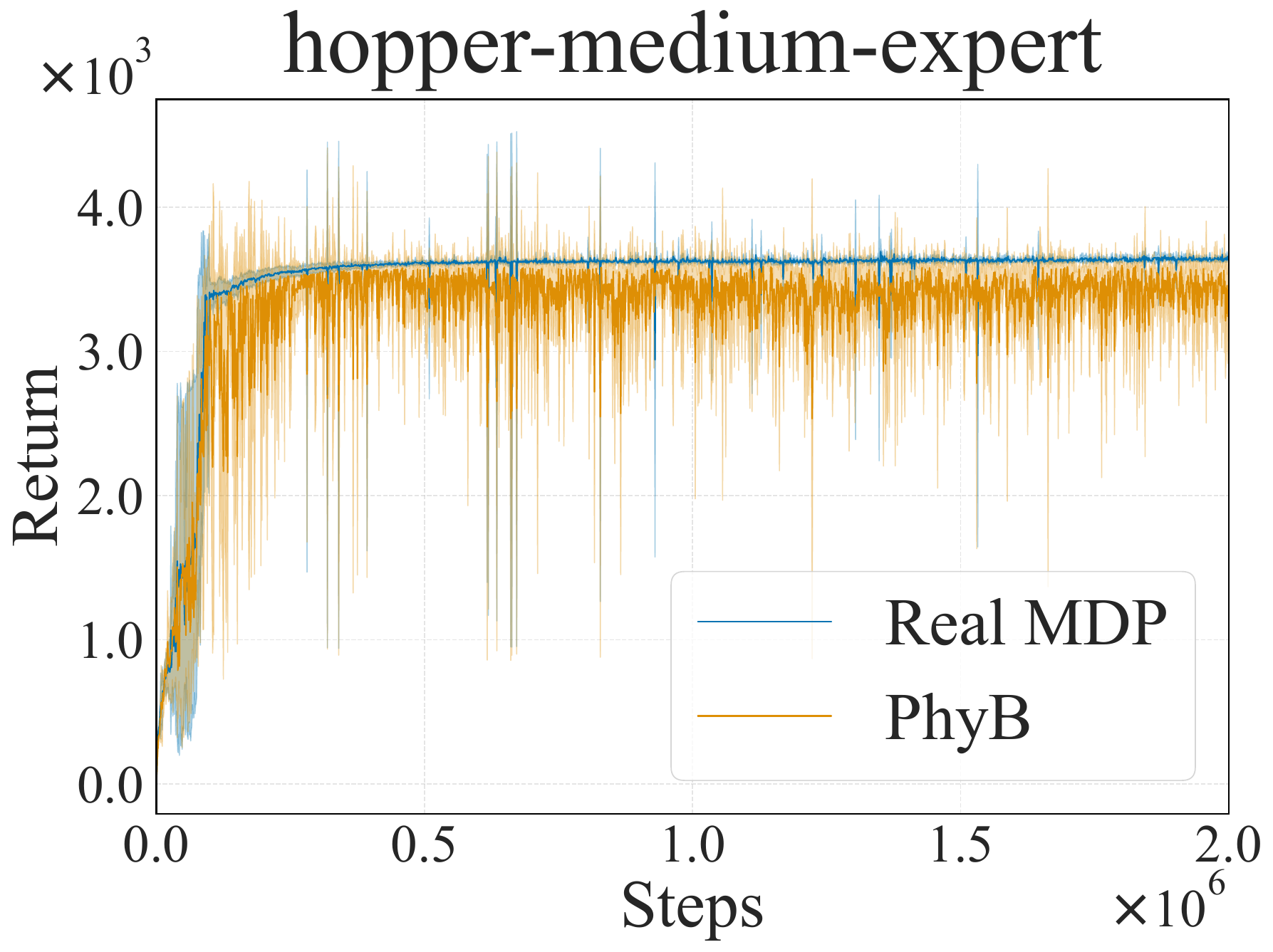} 
	\end{subfigure}
	\begin{subfigure}[b]{0.325\columnwidth}
		\includegraphics[width=\linewidth]{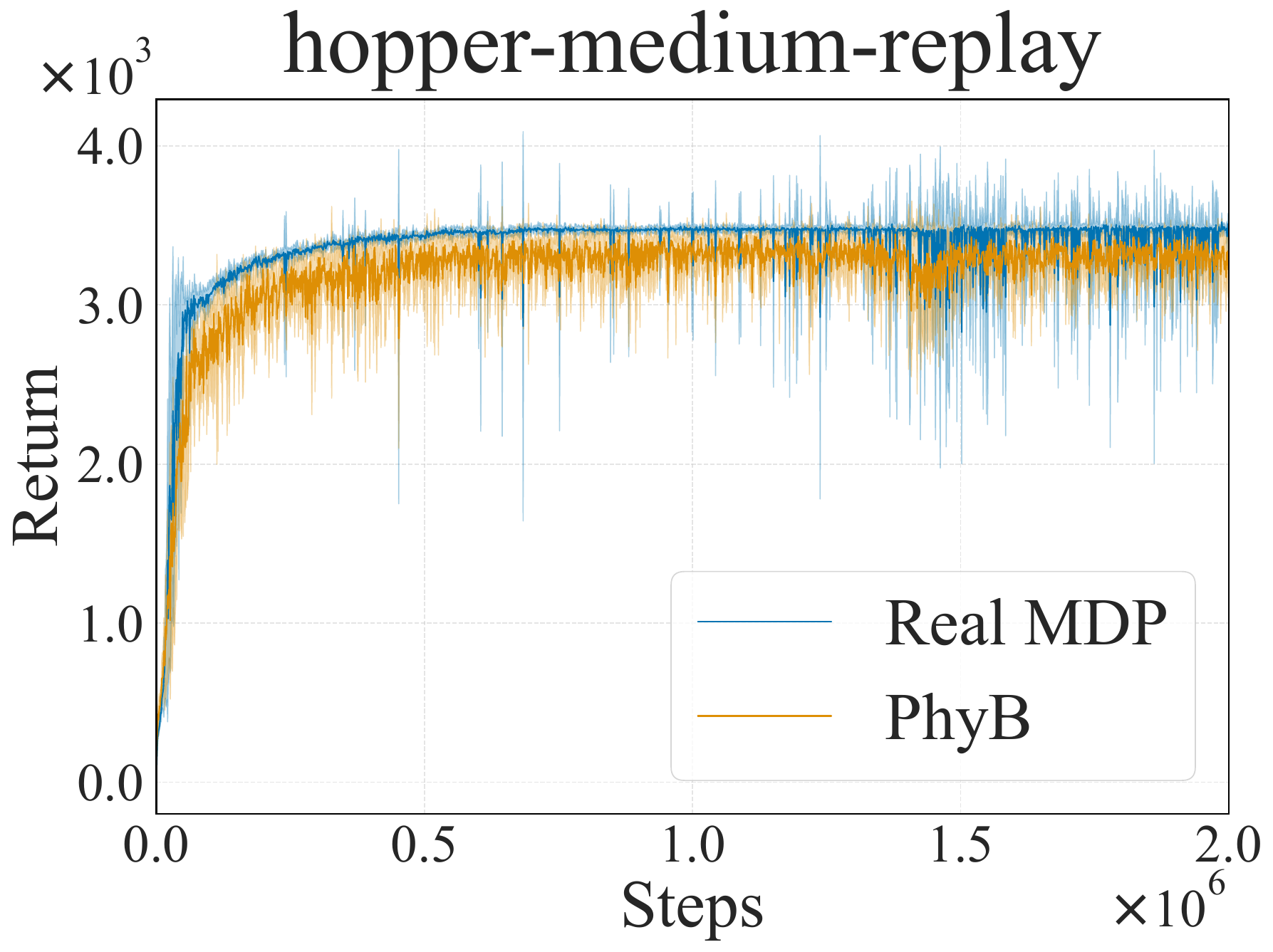} 
	\end{subfigure}
	\begin{subfigure}[b]{0.325\columnwidth}
		\includegraphics[width=\linewidth]{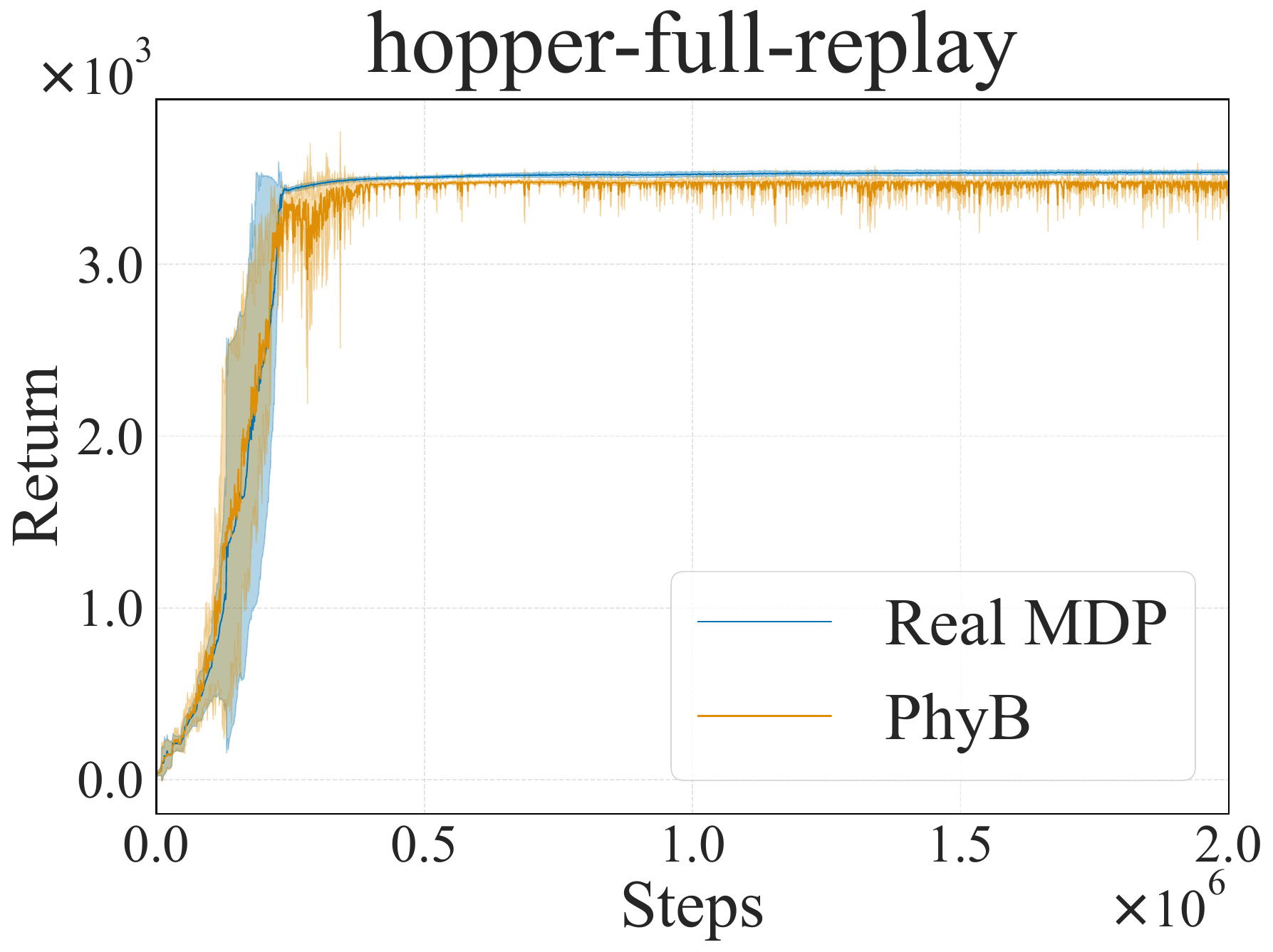} 
	\end{subfigure}
	
	\caption{Learning and evaluation curves in Hopper-v2 environment.}
	\label{fig1:pessimism}
\end{figure}
\begin{table*}[t!]
	\caption{Additional D4RL results.}
	\label{tab:additional_d4rl_results}	
	\centering
	\setlength{\arrayrulewidth}{0.8pt}
	\setlength{\tabcolsep}{10pt}
	\begin{threeparttable}[b]
		\adjustbox{width=0.99\textwidth}{
			\begin{tabular}{c|ccccc}
				\toprule[1pt]
				\textbf{Dataset-v2} & \textbf{ADM}  & \textbf{ORPO} & \textbf{1R2R}  & \textbf{MOBILE} & \textbf{PhyB}\\
				\midrule[0.5pt]
				hopper-random   & 32.7$\pm$0.2  & 9.2$\pm$1.4 & 30.9$\pm$2.3 & 31.9$\pm$0.6  & \textbf{33.9}$\pm$1.1\\
				halfcheetah-random      & \textbf{45.4}$\pm$2.8 & 40.8$\pm$1.6 & 36.9$\pm$6.4 & 39.3$\pm$3.0 & 34.7$\pm$1.7\\
				walker2d-random     & 22.2$\pm$0.2 & 10.8$\pm$9.3 & 7.6$\pm$12.2 & 17.9$\pm$6.6 & \textbf{23.5}$\pm$1.5 \\
				\midrule[0.5pt]
				hopper-medium	 & 107.4$\pm$0.6 & 30.4$\pm$37.4 & 80.2$\pm$15.8 & 106.6$\pm$0.6 & \textbf{109.4}$\pm$2.0 \\
				halfcheetah-medium   & 72.2$\pm$0.6 & 73.4$\pm$0.5 & 74.5$\pm$2.1 & \textbf{74.6}$\pm$1.2 &  \textbf{74.5}$\pm$1.8\\
				walker2d-medium	& 93.2$\pm$1.1 & 55.5$\pm$23.4 & 63.9$\pm$31.8 & 87.7$\pm$1.1 & \textbf{95.5}$\pm$8.5 \\
				\midrule[0.5pt]
				hopper-medium-replay	 & 104.4$\pm$0.4 & 104.6$\pm$1.5 & 92.9$\pm$10.7 & 103.9$\pm$1.0 & \textbf{110.7}$\pm$1.3  \\
				halfcheetah-medium-replay  & 67.6$\pm$3.4 & 72.8$\pm$0.9 & 65.7$\pm$2.3 & 71.7$\pm$1.2 &  \textbf{74.7}$\pm$1.5\\
				walker2d-medium-replay	& \textbf{95.6}$\pm$2.1 & 91.1$\pm$2.0 & 92.2$\pm$2.3 & 89.9$\pm$1.5 &  85.4$\pm$3.9 \\
				\midrule[0.5pt]
				hopper-medium-expert  & 112.7$\pm$0.3 & 111.0$\pm$0.6 & 81.6$\pm$22.9 & 112.6$\pm$0.2 &  \textbf{116.5}$\pm$2.1\\
				halfcheetah-medium-expert  & 103.7$\pm$0.2 & 101.5$\pm$3.1 & 96.0$\pm$6.0 & 108.2$\pm$2.5 &  \textbf{109.4}$\pm$1.5\\
				walker2d-medium-expert  & 114.9$\pm$0.3 & 108.8$\pm$3.2 & 90.9$\pm$5.9 & \textbf{115.2}$\pm$0.7 & 112.4$\pm$1.1 \\
				\midrule[0.5pt]
				Average Score  & 81.0 & 67.5 & 67.8 & 80.0 & \textbf{81.7}\\
				\bottomrule[1pt]
			\end{tabular}
		}
	\end{threeparttable}
\end{table*}
\begin{figure}[ht!]
	\centering
	\begin{subfigure}[b]{0.325\columnwidth}
		\includegraphics[width=\linewidth]{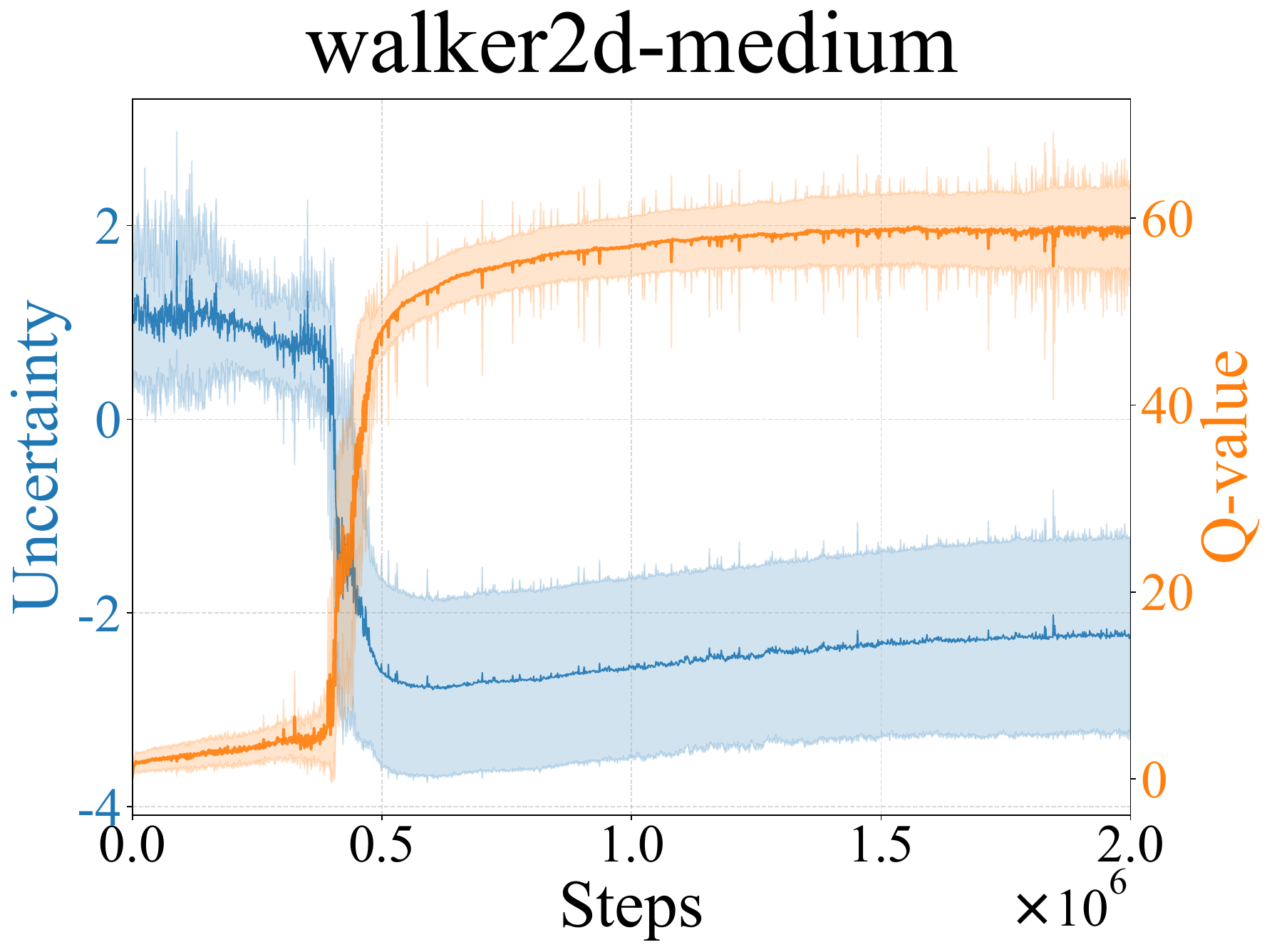} 
	\end{subfigure}
	\begin{subfigure}[b]{0.325\columnwidth}
		\includegraphics[width=\linewidth]{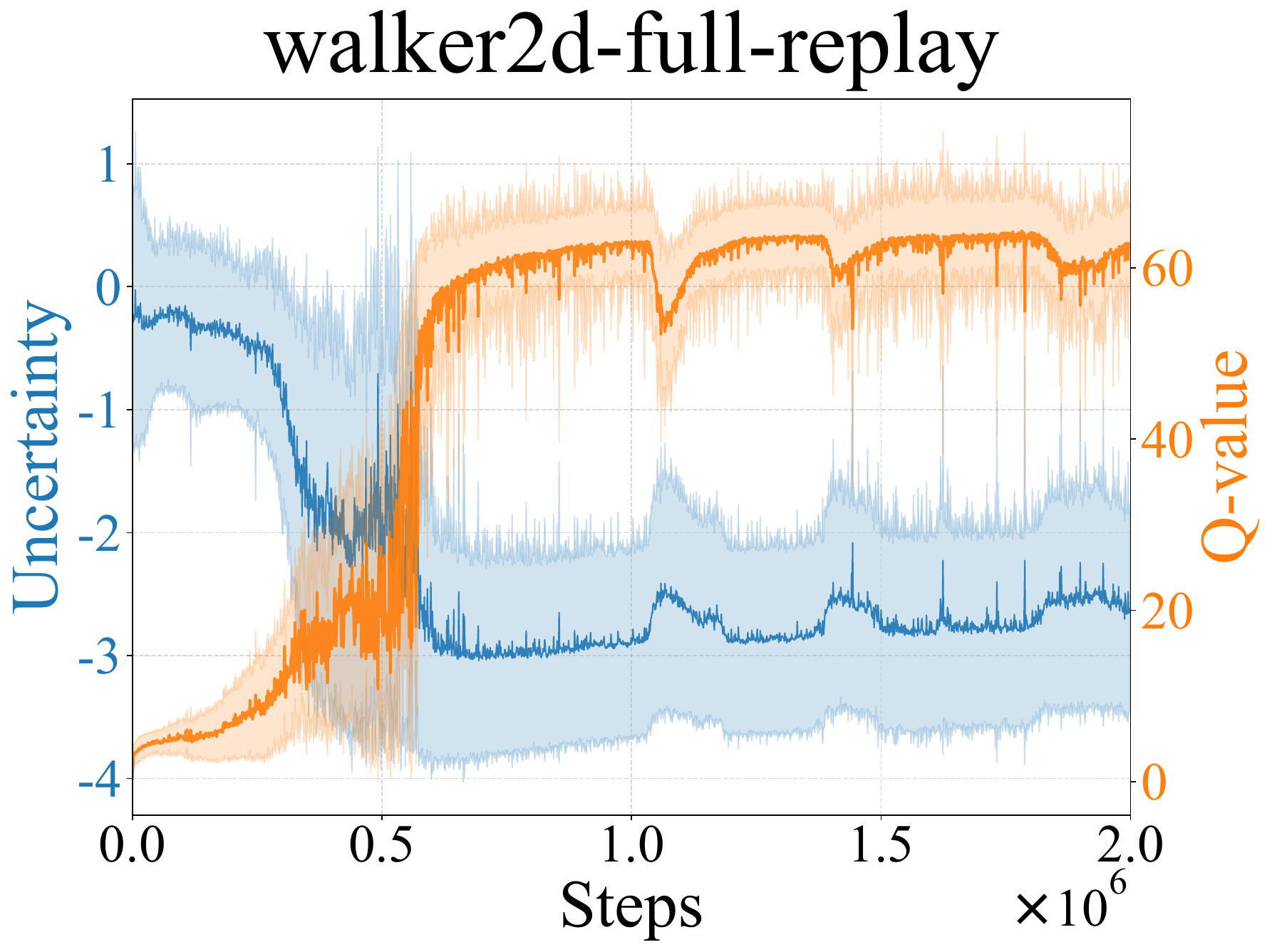} 
	\end{subfigure}
	\begin{subfigure}[b]{0.325\columnwidth}
		\includegraphics[width=\linewidth]{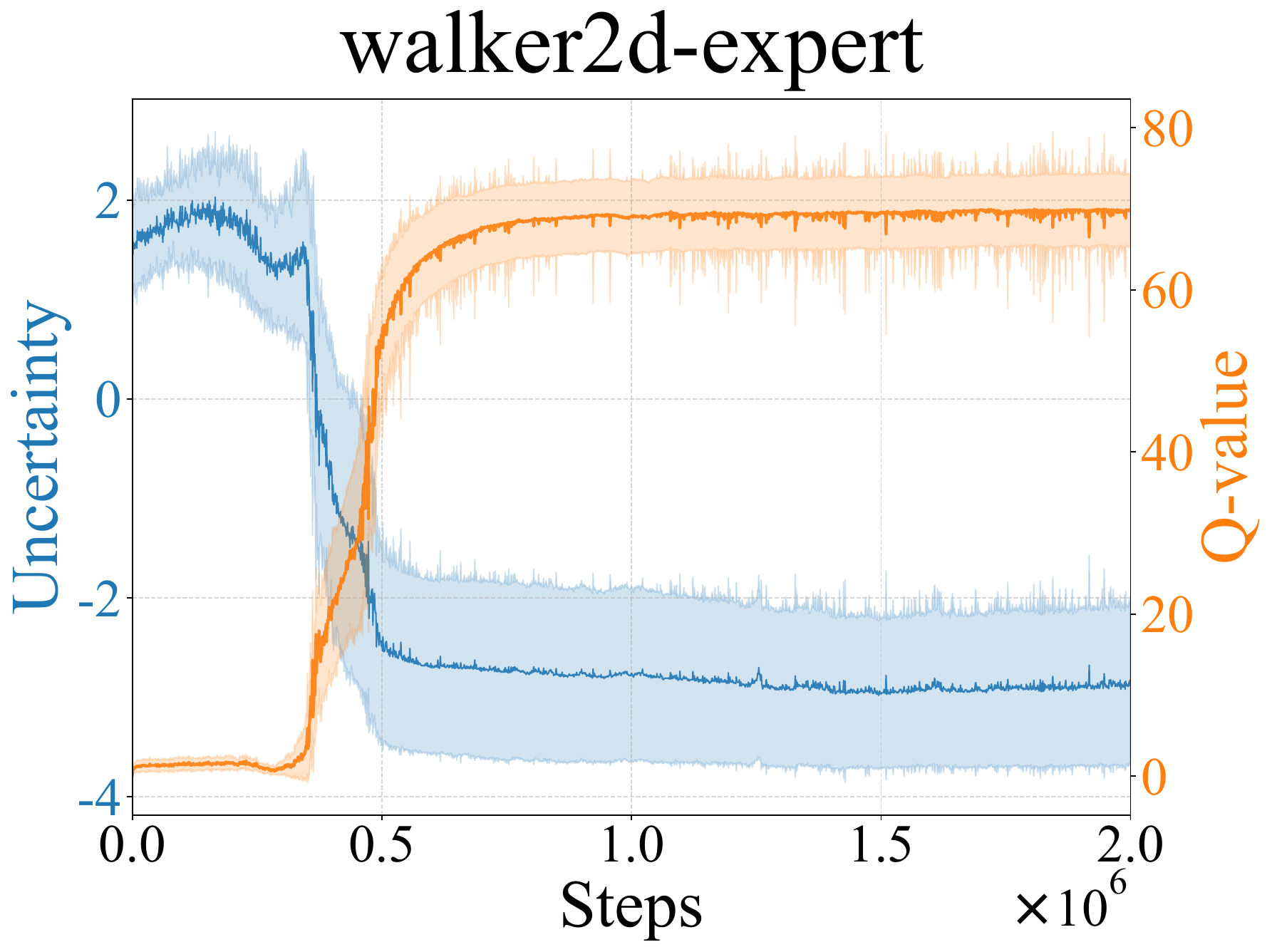} 
	\end{subfigure}
	\centering
	\begin{subfigure}[b]{0.325\columnwidth}
		\includegraphics[width=\linewidth]{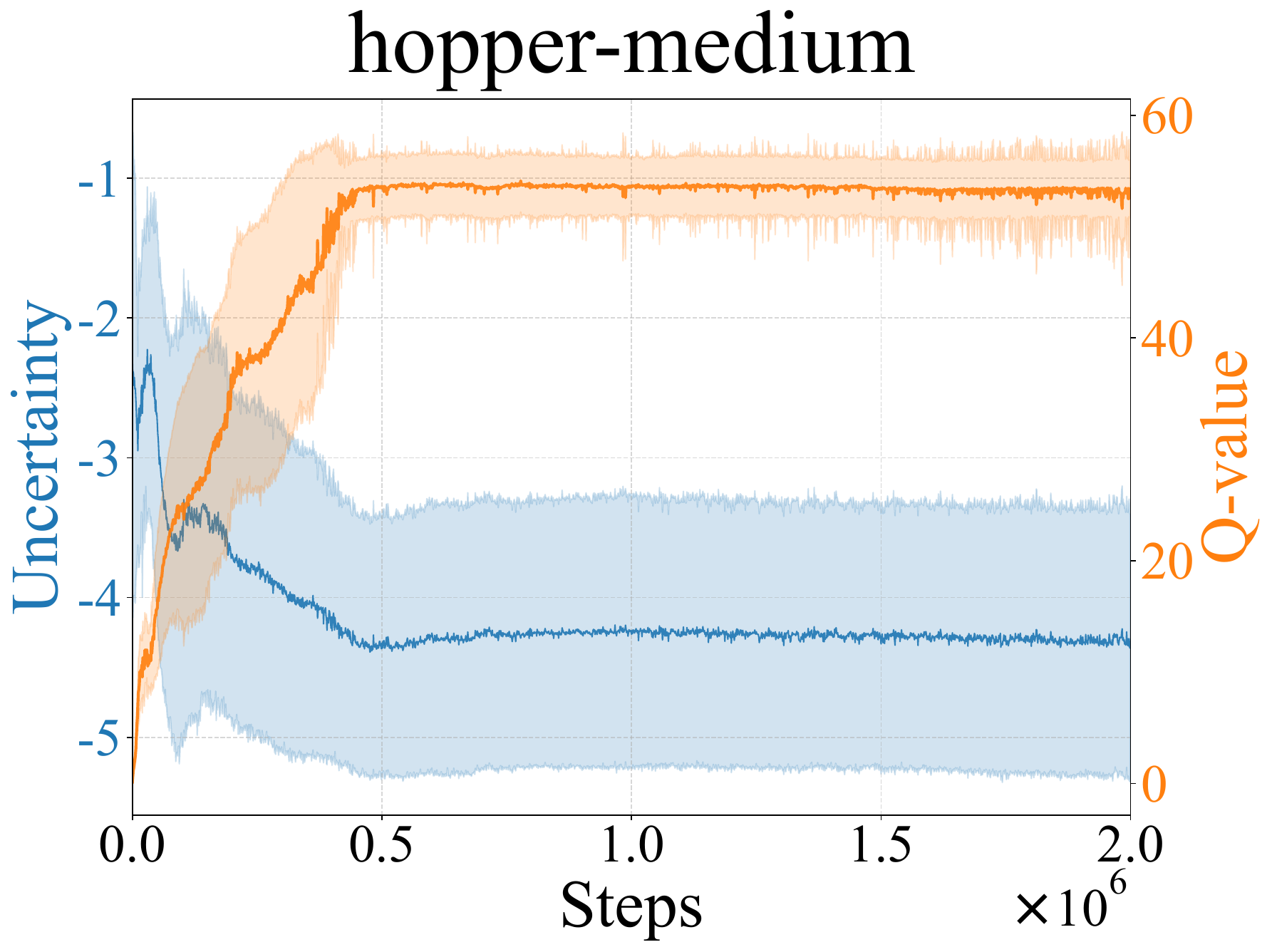} 
	\end{subfigure}
	\begin{subfigure}[b]{0.325\columnwidth}
		\includegraphics[width=\linewidth]{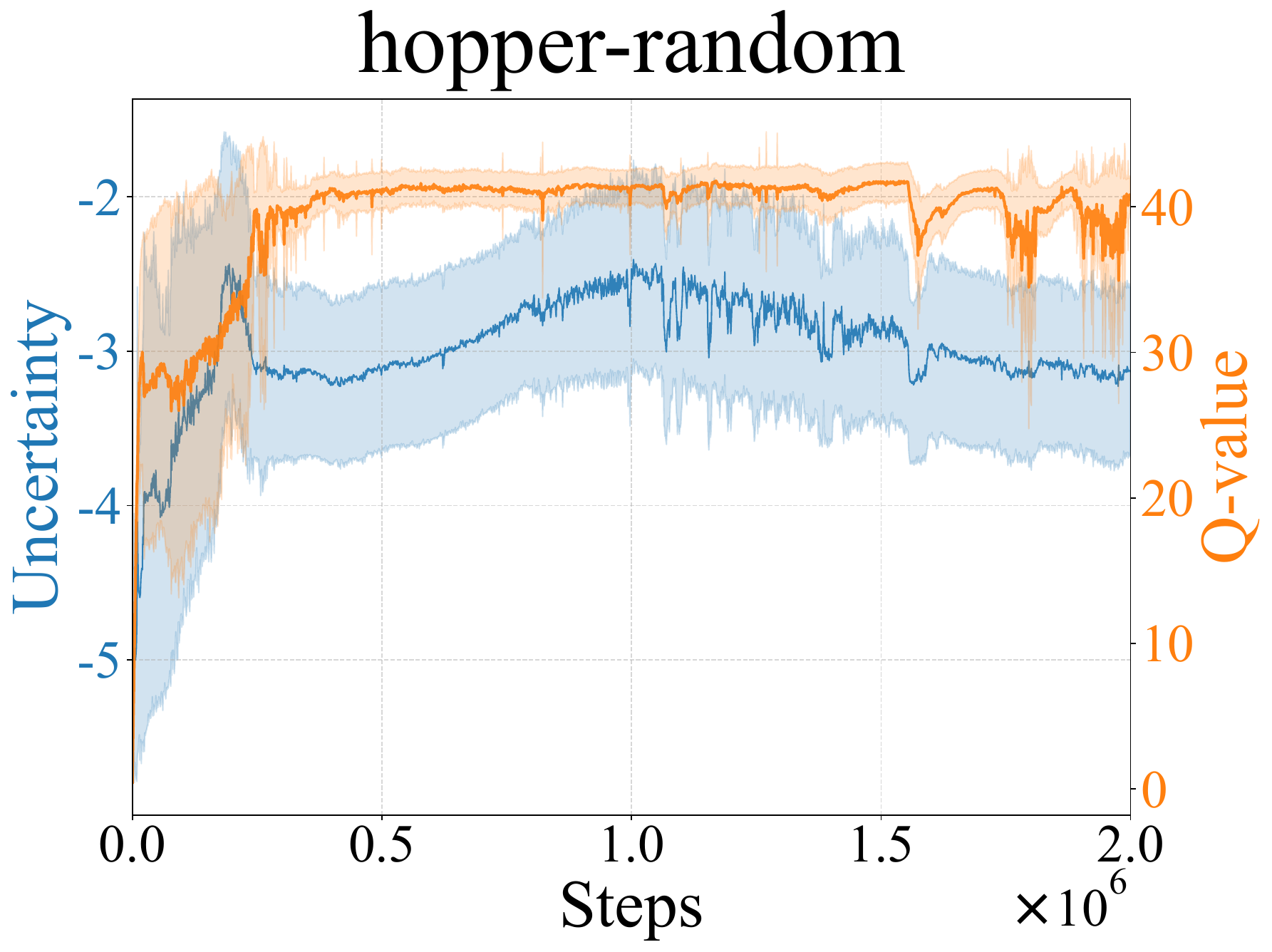} 
	\end{subfigure}
	\begin{subfigure}[b]{0.325\columnwidth}
		\includegraphics[width=\linewidth]{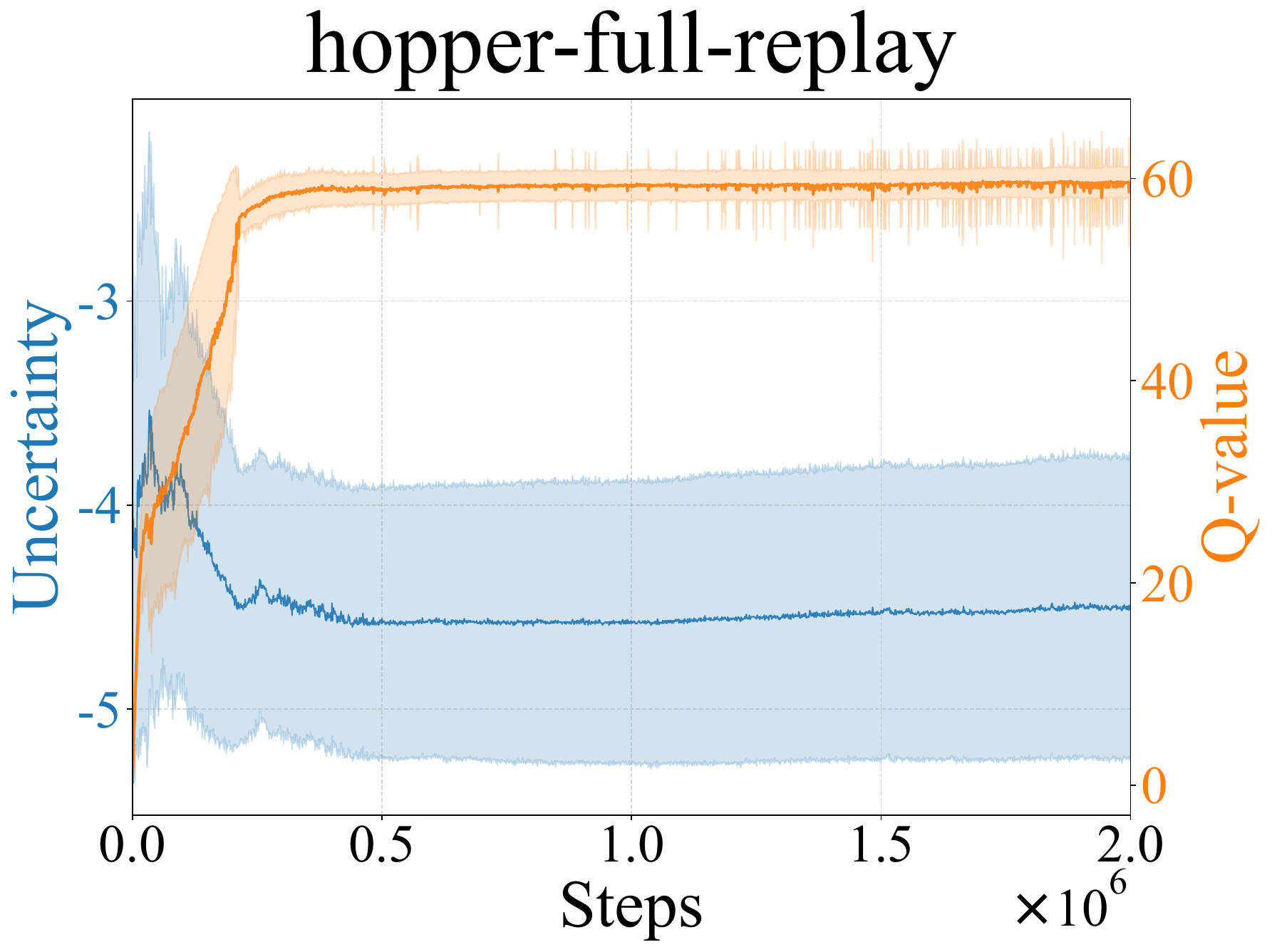} 
	\end{subfigure}
	\centering
	\begin{subfigure}[b]{0.325\columnwidth}
		\includegraphics[width=\linewidth]{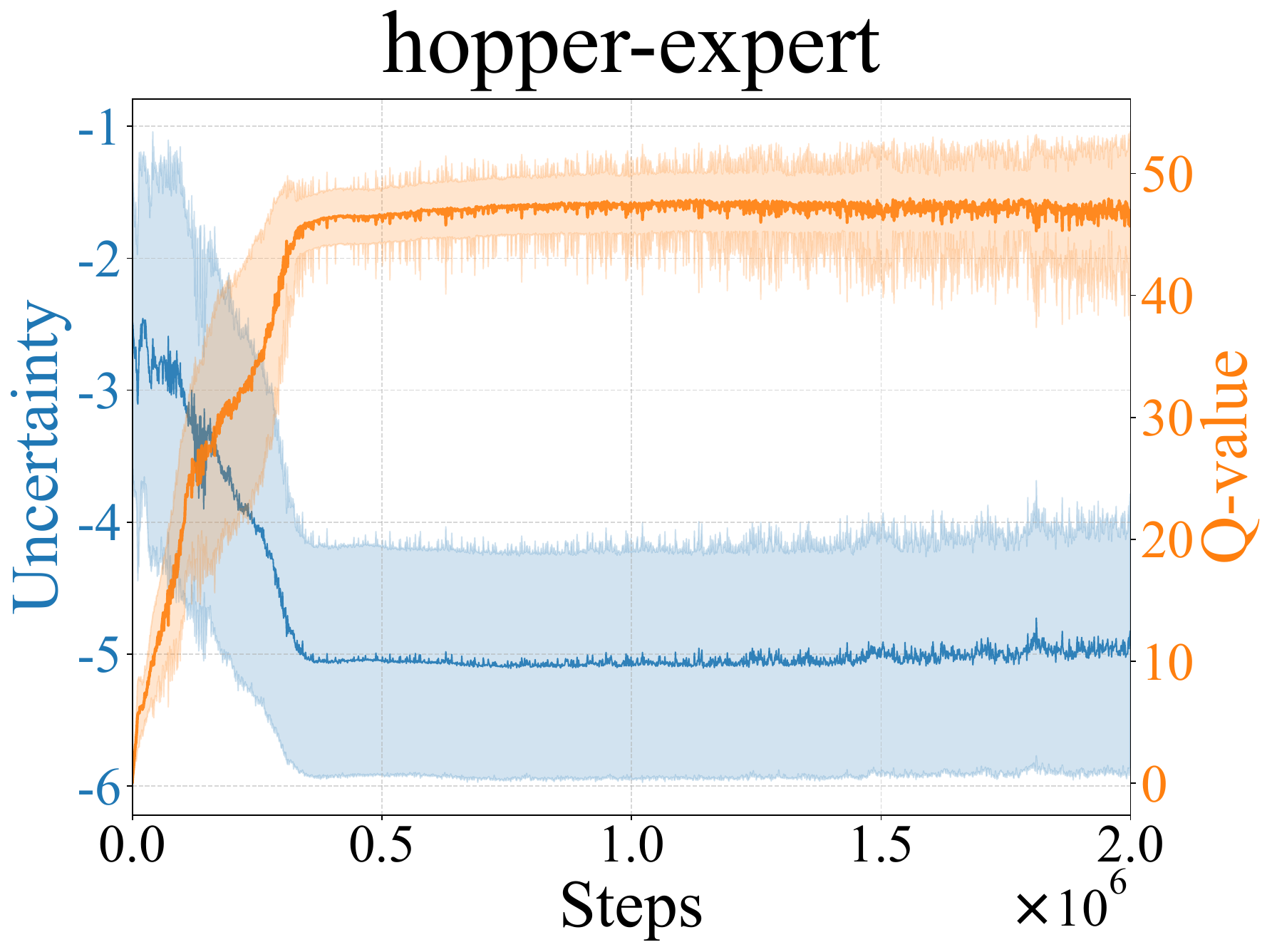} 
	\end{subfigure}
	\begin{subfigure}[b]{0.325\columnwidth}
		\includegraphics[width=\linewidth]{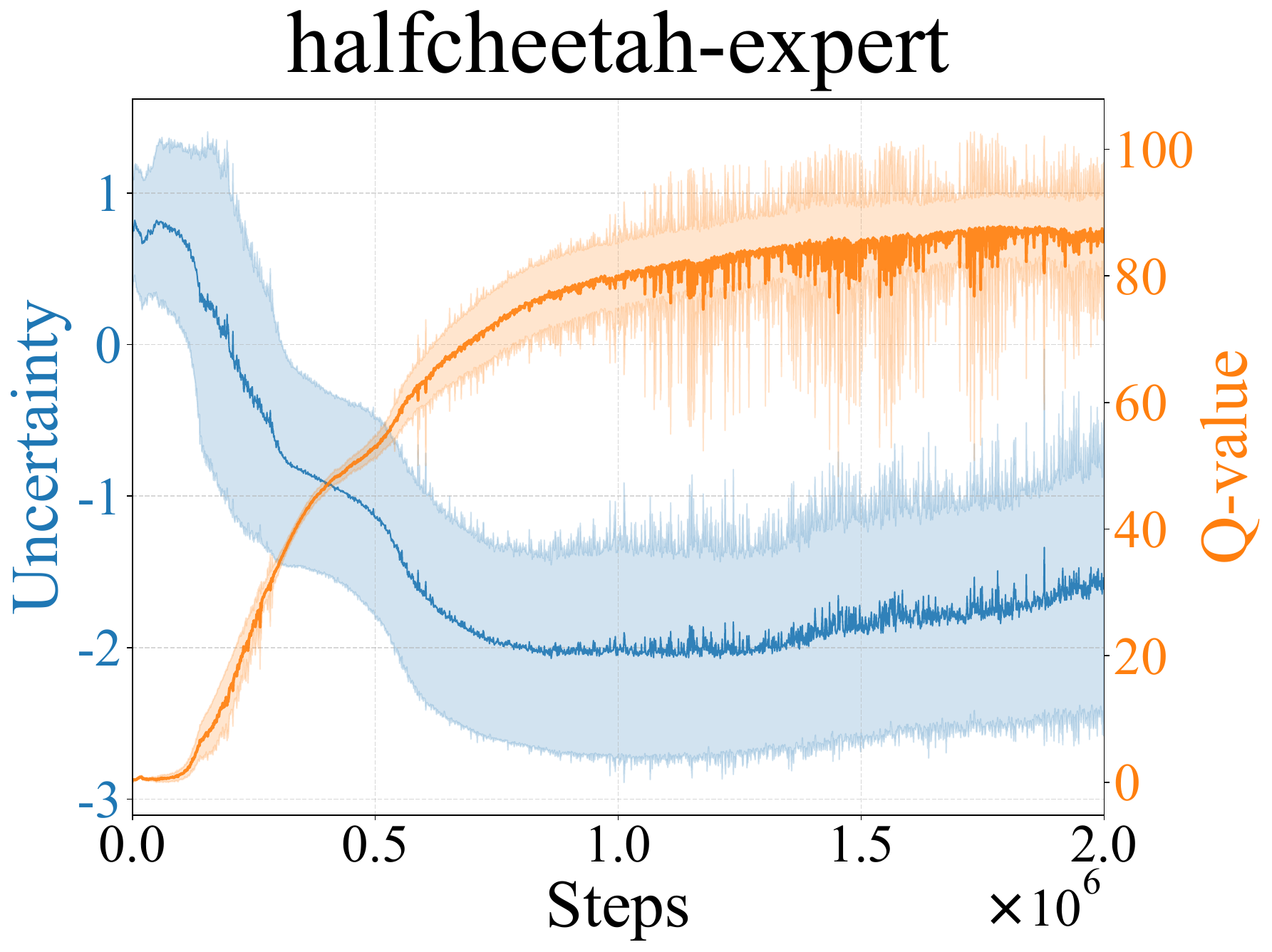} 
	\end{subfigure}
	\begin{subfigure}[b]{0.325\columnwidth}
		\includegraphics[width=\linewidth]{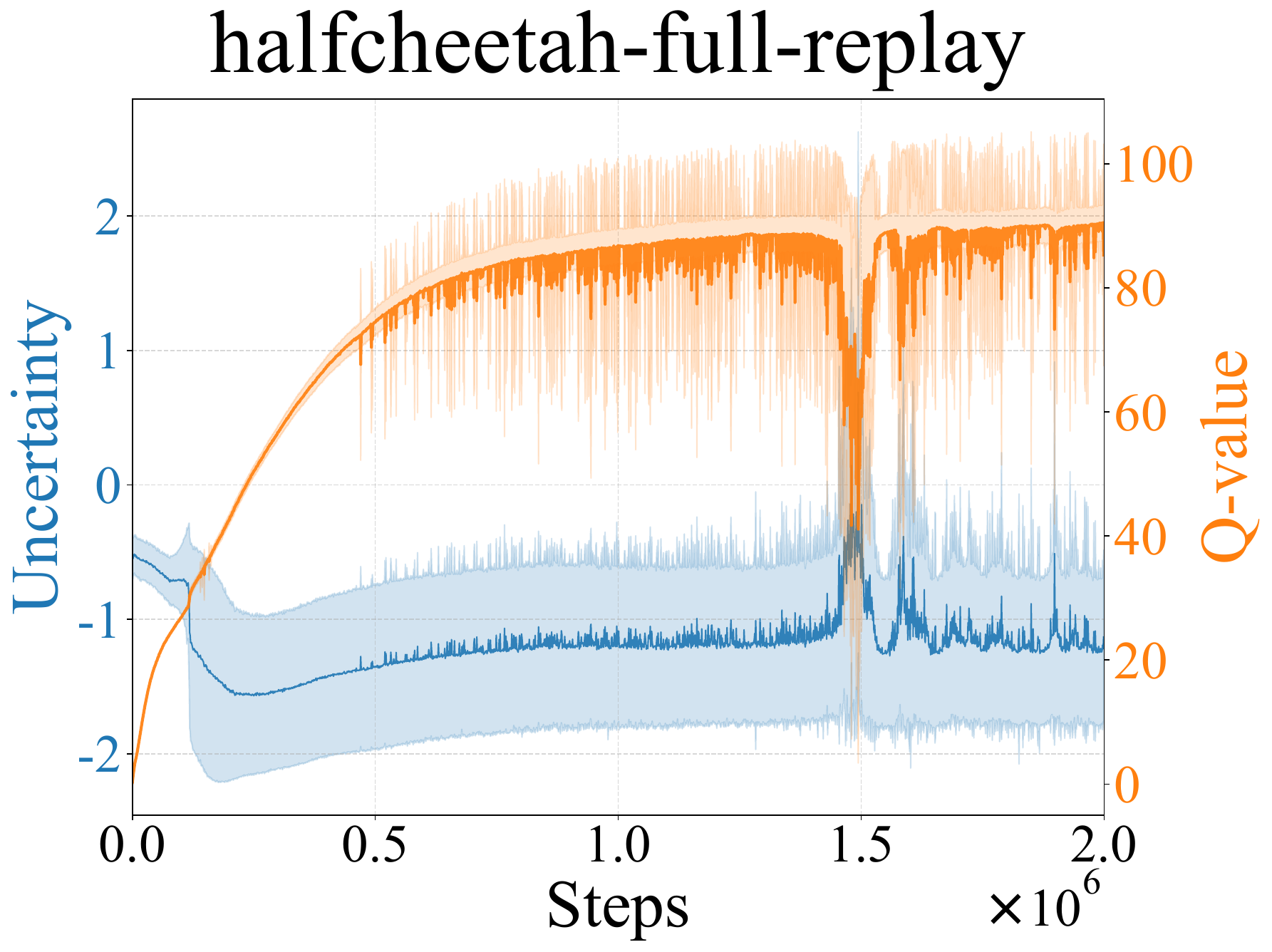} 
	\end{subfigure}
	\centering
	\begin{subfigure}[b]{0.325\columnwidth}
		\includegraphics[width=\linewidth]{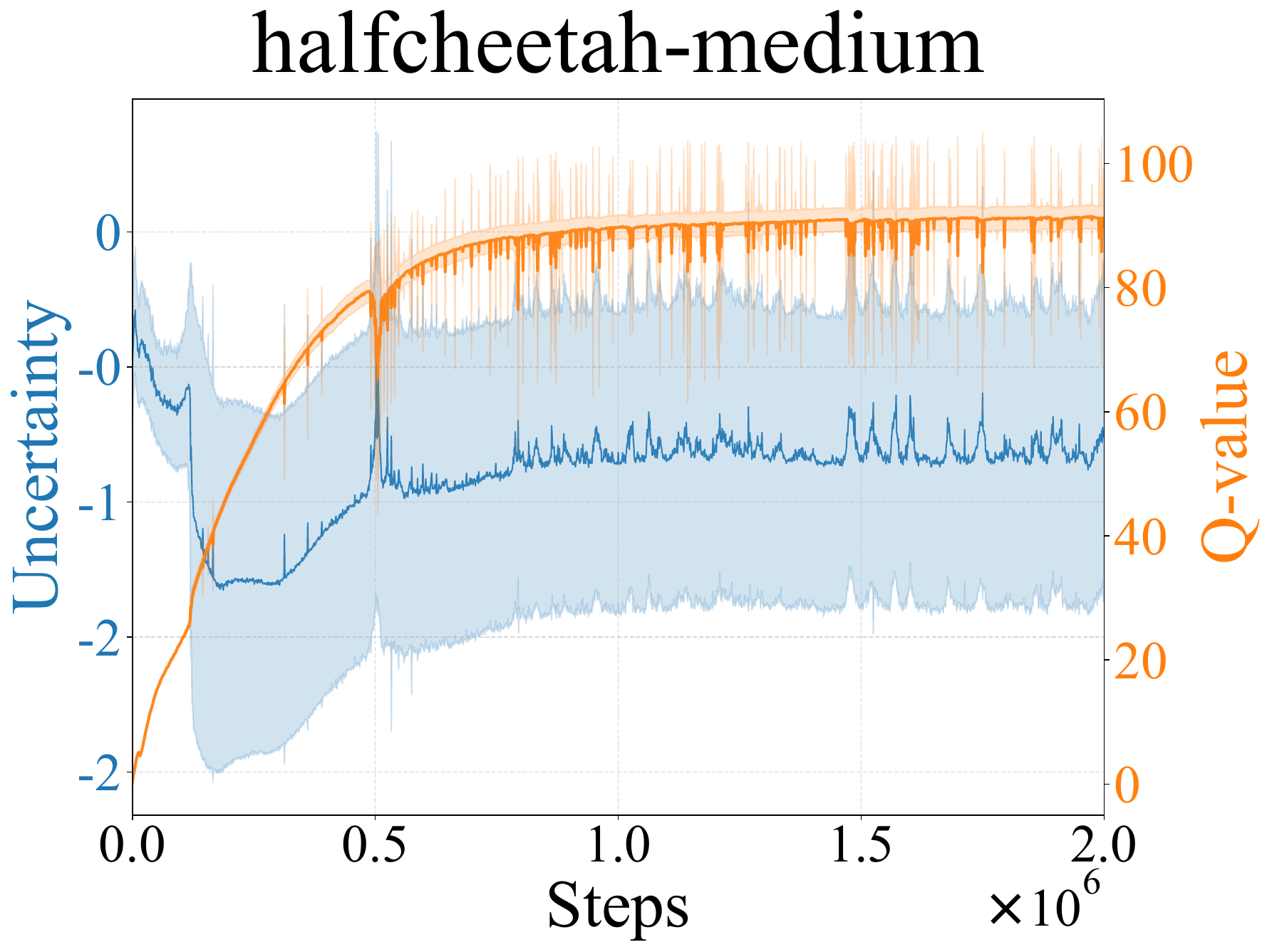} 
	\end{subfigure}
	\begin{subfigure}[b]{0.325\columnwidth}
		\includegraphics[width=\linewidth]{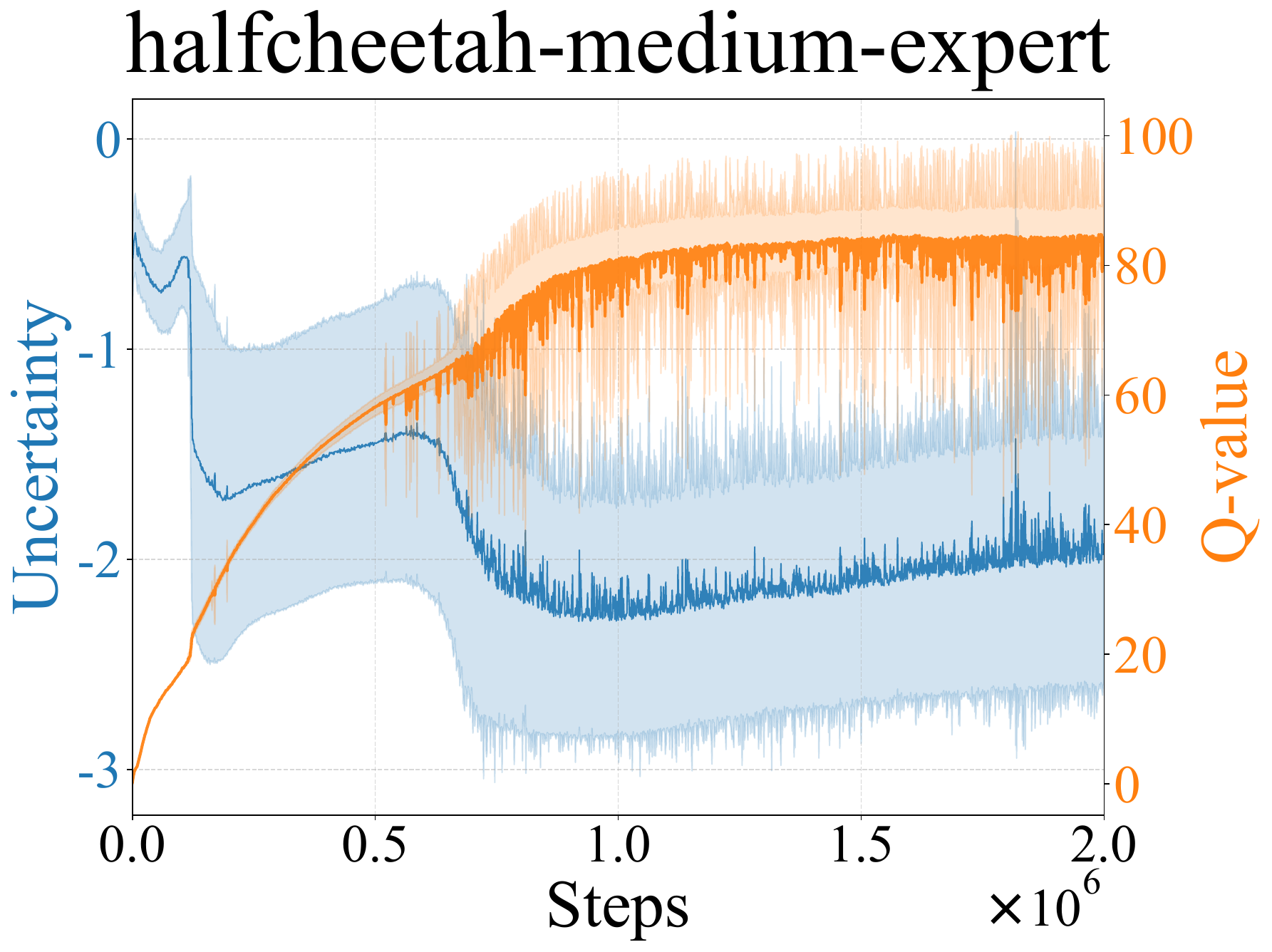} 
	\end{subfigure}
	\begin{subfigure}[b]{0.325\columnwidth}
		\includegraphics[width=\linewidth]{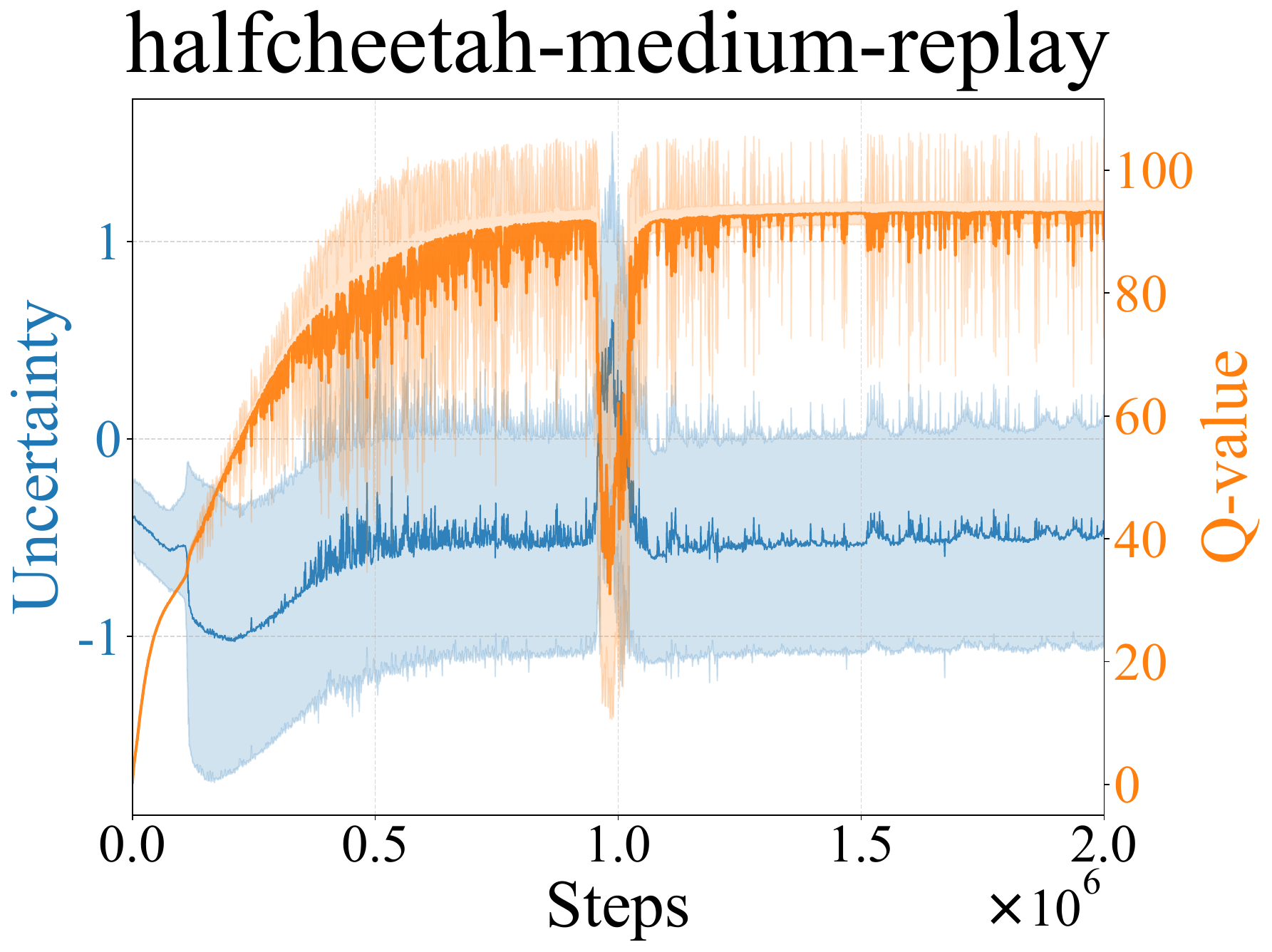} 
	\end{subfigure}
	\caption{Evolution of Q-values and uncertainty for encountered state-action pairs during training.}
	\label{fig2:uncertainty}
\end{figure}
\subsection{Computational Overhead}
Like other model-based methods, PhyB's training time scales linearly with ensemble size $N$, while inference (policy sampling) is computationally inexpensive. To ensure a fair comparison, we benchmark PhyB against model-based (PMDB, ADM) and model-free (DMG, TD3+BC) baselines on the same GPU with the same batch size, while maintaining consistent dynamics model parameters for the model-based methods. The GPU runtime for each method is reported, with results summarized in Table~\ref{tab:gpu_time_comparison}.

\begin{table}[t!]
	\centering
	\caption{Computational overhead of various methods.}
	\label{tab:gpu_time_comparison}

	\begin{tabular}{lcc}
		\toprule
		\textbf{Method} & \textbf{Train (ms/iteration)} & \textbf{Evaluation (ms/iteration)} \\
		\midrule
	
		PhyB ($N$=20) & 198.8428$\pm$0.9615 & 3.0978$\pm$0.0126 \\
		
		PhyB ($N$=10)        & 106.4067$\pm$0.3879          & 3.0744$\pm$0.0139          \\
		
		PhyB ($N$=5)        & 59.0712$\pm$0.3292   & 3.0889$\pm$0.0142          \\
		PMDB     & 82.3452$\pm$0.6088          & 2.9128$\pm$0.0886\\
		ADM     & 29.7432$\pm$0.3419          & 9.0266$\pm$0.4126          \\
		DMG        & 11.3725$\pm$0.1962         & 1.6785$\pm$0.0051       \\
		TD3+BC        & 7.2683$\pm$0.1789          & 1.1516$\pm$0.0017          \\
		\bottomrule
	\end{tabular}
\end{table}

\subsection{How sensitive is the method to inaccurate value functions early in training?}
While the subset selection depends on critic estimates, empirical results in Figure~\ref{exp_fig1:pessimism} and Figure~\ref{fig1:pessimism} demonstrate that PhyB remains remarkably robust to initial value inaccuracies, with evaluation performance showing a steady, monotonic increase toward convergence.

To mitigate potential instability or overestimation, we augment the training of the value function by blending real transitions from the dataset with rollouts from the learned dynamics models.

Furthermore, our approach demonstrates inherent stability by using a fixed set of hyperparameters across all D4RL tasks without per-task tuning. This confirms the reliability of our ranking mechanism, even as the critic evolves.

\subsection{Addition Experimental Results}
\paragraph{Comparison with the recent SoTA algorithms.}
Table~\ref{tab:additional_d4rl_results} compares our method against recent state-of-the-art model-based approaches, including ADM \cite{lin2025anystep}, ORPO \cite{zhai2024optimistic}, 1R2R \cite{rigter2023one} and MOBILE \cite{sun2023model}. PhyB achieves superior performance on 8 out of 12 benchmarks and delivers competitive results on the remaining 4.
\begin{table}[t!]
	\centering
	\caption{Impact of the pessimistic subset size $k$ on policy performance.}
	\label{tab4:monotonicity}
	\small
	\setlength{\tabcolsep}{2.2pt} 
	\begin{tabular}{@{}c|@{\quad}cc@{}}
		\toprule
		$(N,k)$ & Walker2d-E & HalfCheetah-E \\
		\midrule
		(10,3) & 110.4$\pm$1.7 & 106.1$\pm$1.2\\
		(10,5) & 116.3$\pm$1.1 & 113.7$\pm$1.0\\
		\bottomrule
	\end{tabular}
\end{table}

\begin{table*}[ht!]
	\centering
	\caption{Ablation study on the effect of the weight averaging. ``HC" denotes the ``HalfCheetah", ``Ho" denotes the ``Hopper", ``W2d" denotes the ``Walker2d".}
	\label{appdix_tab:ablation_d4rl_scores}
	\small
	\setlength{\tabcolsep}{3.pt} 
	\begin{tabular}{c|ccc|ccc}
		\toprule
		\multirow{2}{*}{\textbf{Method}} & \multicolumn{3}{c|}{\textbf{Random}} & \multicolumn{3}{c}{\textbf{Full Replay}} \\
		\cmidrule(lr){2-4} \cmidrule(lr){5-7}
		& \small{HC} & \small{Ho} & \small{W2d} & \small{HC} & \small{Ho} & \small{W2d}  \\
		\midrule
		Normalized Score  & 31.2$\pm$0.5 & 31.4$\pm$1.1 & 19.8$\pm$2.1 & 87.6$\pm$3.6 & 101.9$\pm$3.6 & 90.6$\pm$7.4 \\
		\bottomrule
	\end{tabular}%
	
\end{table*}
\paragraph{Training curve.}
Figure~\ref{fig1:pessimism} compares the true return of the policy with the return estimated under PhyB in the Hopper environment, with evaluation performed every 1,000 steps. As shown in the figure, the return under PhyB closely tracks the trend of the true return and consistently remains below or equal to it, serving as a lower bound. The trend of the curves experimentally validates Theorem~\ref{theorem2}). Moreover, we observe that performance improves nearly monotonically throughout training, providing empirical support for Theorem~\ref{theorem:improve}.
\paragraph{Uncertainty quantification.}
We quantify the uncertainty for a state-action pair $(s,a)$ by computing the log standard deviation of next-state predictions across the pessimistic subset $\widetilde{\mathcal{T}}$, as illustrated in Figure~\ref{fig2:uncertainty}. The results in Figure~\ref{fig2:uncertainty} complement those reported in the original paper. We observe that a sharp increase in this uncertainty metric is consistently associated with a corresponding decrease in the Q-value, reinforcing the inverse relationship between predictive uncertainty and value estimation.
\paragraph{Monotonicity.}
As shown in Table~\ref{tab4:monotonicity}, we conduct an ablation study on the impact of the pessimistic subset size on policy performance. The results serve as a supplement to the main text. Combining the findings from the main experiment with those in Table~\ref{tab4:monotonicity}, we observe that policy performance is monotonically non-decreasing as $k$ increases, even when neural networks are used as function approximators.
\begin{table*}[t!]
	\centering
	\caption{Ablation study with initial pool size 20. ``HC" denotes the ``HalfCheetah", ``Ho" denotes the ``Hopper", ``W2d" denotes the ``Walker2d". Results show percentage degradation compared to the standard pool size of 100.}
	\label{appdix_tab:initial_pool_1}
	\small
	\setlength{\tabcolsep}{7.pt}
	\adjustbox{width=0.99\textwidth}
	{
		\begin{tabular}{c|ccc|ccc|ccc|}
			\toprule
			\multirow{2}{*}{\textbf{Dataset}} & \multicolumn{3}{c|}{\textbf{Random}} & \multicolumn{3}{c|}{\textbf{Medium}} & \multicolumn{3}{c|}{\textbf{Expert}} \\
			\cmidrule(lr){2-4} \cmidrule(lr){5-7} \cmidrule(lr){8-10} \
			& \small{HC} & \small{Ho} & \small{W2d} & \small{HC} & \small{Ho} & \small{W2d} & \small{HC} & \small{Ho} & \small{W2d}  \\
			\midrule
			Performance variation  & $\downarrow$2.04\%   & $\downarrow$0.58\% & $\downarrow$1.34\%
			& $\downarrow$6.12\% & $\downarrow$4.27\% & $\downarrow$22.6\% 
			& $\downarrow$8.68\% & $\downarrow$2.65\% & $\downarrow$11.8\%  \\
			
			\bottomrule
		\end{tabular}
	}
\end{table*}

\begin{table*}[t!]
	\centering
	\caption{Ablation study with initial pool size 20. ``HC" denotes the ``HalfCheetah", ``Ho" denotes the ``Hopper", ``W2d" denotes the ``Walker2d". Results show percentage degradation compared to the standard pool size of 100.}
	\label{appdix_tab:initial_pool_2}
	\small
	\setlength{\tabcolsep}{7.pt}
	\adjustbox{width=0.99\textwidth}
	{
		\begin{tabular}{c|ccc|ccc|ccc|}
			\toprule
			\multirow{2}{*}{\textbf{Dataset}} & \multicolumn{3}{c|}{\textbf{Medium-Expert}} & \multicolumn{3}{c|}{\textbf{Medium-Replay}} & \multicolumn{3}{c|}{\textbf{Full-Replay}} \\
			\cmidrule(lr){2-4} \cmidrule(lr){5-7} \cmidrule(lr){8-10} \
			& \small{HC} & \small{Ho} & \small{W2d} & \small{HC} & \small{Ho} & \small{W2d} & \small{HC} & \small{Ho} & \small{W2d}  \\
			\midrule
			Performance variation & $\downarrow$17.3\%   & $\downarrow$2.48\% & $\downarrow$9.88\%
			& $\downarrow$25.9\% & $\downarrow$1.31\% & $\downarrow$67.4\% 
			& $\downarrow$20.1\% & $\downarrow$1.04\% & $\downarrow$5.48\%  \\
			
			\bottomrule
		\end{tabular}
	}
\end{table*}
\subsection{Ablation Study (RQ3)}
\paragraph{Effect of Convex Combination.}
The construction of the posterior belief hinges on the coefficient $\alpha$, which is typically determined by solving Eq.\eqref{hybrid:opt_prob}.  We ablate the proposed weight averaging by comparing it to a variant that selects a single model from the pessimistic subset $\widetilde{\mathcal{T}}$. As Table~\ref{appdix_tab:ablation_d4rl_scores} shows, this variant suffers significant performance degradation, confirming the effectiveness of both weight averaging and PhyB.
\paragraph{Performance decline with limited initial pool.}
In the standard setting, the initial pool size is set to 100 (see \ref{appendix:dynamics_model}). We present the performance degradation observed when reducing the pool size to 20, while keeping all other hyperparameters fixed. As detailed in Table~\ref{appdix_tab:initial_pool_1} and Table~\ref{appdix_tab:initial_pool_2}, a consistent performance drop is evident across all datasets with this reduced pool size. Notably, the degradation is more severe in complex control tasks, specifically HalfCheetah and Walker2d.
\end{document}